\renewcommand{\algorithmiccomment}[1]{\bgroup\hfill\#~#1\egroup}
\newcommand\SEARCH{\ensuremath{\operatorname{\textsc{Search}}}\xspace}
\newcommand\CCQ{\ensuremath{\operatorname{\textsc{CCQ}}}\xspace}
\newcommand\ECCQ{\ensuremath{\operatorname{\textsc{ECCQ}}}\xspace}
\newcommand\LABEL{\ensuremath{\operatorname{\textsc{Label}}}\xspace}
\newcommand\SAL{\ensuremath{\operatorname{\textsc{Sample-and-Label}}}\xspace}
\newcommand\UVS{\ensuremath{\operatorname{\textsc{Upgrade-Version-Space}}}\xspace}
\newcommand\PVS{\ensuremath{\operatorname{\textsc{Prune-Version-Space}}}\xspace}
\newcommand\EC{\ensuremath{\operatorname{\textsc{Error-Check}}}\xspace}
\newcommand\AL{\ensuremath{\operatorname{AL}}\xspace}
\newcommand\CAL{\ensuremath{\operatorname{CAL}}\xspace}
\newcommand\DIS{\ensuremath{\operatorname{Dis}}}
\newcommand\B{\ensuremath{\operatorname{B}}}
\newcommand\err{\ensuremath{\operatorname{err}}}
\newcommand\E{\ensuremath{\operatorname{E}}}
\def\calX{\mathcal{X}}
\def\calY{\mathcal{Y}}
\newcommand\ind[1]{\ensuremath{\mathds{1}\{#1\}}}
\newcommand{\hide}[1]{}
\newcommand\Hlabel{\ensuremath{H_{\LABEL}}}
\newcommand\Hsearch{\ensuremath{H_{\SEARCH}}}
\newcommand\algr{\ensuremath{\operatorname{\textsc{Larch}}}\xspace}
\newcommand\alga{\ensuremath{\operatorname{\textsc{A-Larch}}}\xspace}
\newcommand\algaa{\ensuremath{\operatorname{\textsc{AA-Larch}}}\xspace}
\newcommand\algrb{\ensuremath{\operatorname{\textsc{Seabel}}}\xspace}
\DeclareMathOperator*{\argmin}{arg\,min}
\DeclareMathOperator{\polylog}{polylog}
\DeclareMathOperator{\poly}{poly}
\newtheorem{lemma}{Lemma}
\newtheorem{theorem}{Theorem}
\newtheorem{fact}{Fact}
\newtheorem{claim}{Claim}
\newtheorem{proposition}{Proposition}
\title{Search Improves Label for Active Learning}
\author{
Alina Beygelzimer \\
Yahoo Research \\
New York, NY \\
\texttt{beygel@yahoo-inc.com} \\
\And
Daniel Hsu \\
Columbia University \\
New York, NY \\
\texttt{djhsu@cs.columbia.edu} \\
\AND
John Langford \\
Microsoft Research \\
New York, NY \\
\texttt{jcl@microsoft.com} \\
\And
Chicheng Zhang \\
UC San Diego \\
La Jolla, CA \\
\texttt{chz038@cs.ucsd.edu}
}
\newcommand{\cz}[1]{ \textcolor{magenta} {CZ: #1}}
\begin{document}
\maketitle
\begin{abstract} We investigate active learning with access to two distinct
  oracles: \LABEL (which is standard) and \SEARCH (which is not).  The
  \SEARCH oracle models the situation where a human searches a
  database to seed or counterexample an existing solution.  \SEARCH is
  stronger than \LABEL while being natural to implement in many
  situations.  We show that an algorithm using both oracles can provide
  exponentially large problem-dependent improvements over \LABEL alone.
\end{abstract}

\section{Introduction}
Most active learning theory is based on interacting with a 
\LABEL oracle:  An active learner observes unlabeled examples, 
each with a label that is initially hidden.  The learner provides an unlabeled
example to the oracle, and the oracle responds with the label.
Using \LABEL in an active learning algorithm is known to give 
(sometimes exponentially large)
problem-dependent improvements in label complexity, even in the agnostic
setting where no assumption is made about the underlying
distribution~\citep[e.g.,][]{balcan2006agnostic,Hanneke07,DHM07,hanneke15}.

A well-known deficiency of \LABEL arises in the presence of
rare classes in classification problems, frequently the case in
practice~\citep{AP10,ICE}.
Class imbalance may be so extreme that simply {finding} an example
from the rare class can exhaust the labeling budget.
Consider the problem of learning interval functions
in $[0,1]$.  Any \LABEL-only active learner needs at least $\Omega(1/\epsilon)$ \LABEL queries to learn an arbitrary target interval with error at most
$\epsilon$~\citep{dasgupta2005coarse}.
Given any positive example from the interval, however,
the query complexity
of learning intervals collapses to $O(\log(1/\epsilon))$, as 
we can just do a binary search for each of the end points.

A natural approach used to overcome this hurdle in practice is to
search for known examples of the rare class \citep{AP10,ICE}.
Domain experts are often
adept at finding examples of a class by various, often clever means.
For instance, when building a hate speech filter, a simple web search can readily produce a set of positive examples. Sending a random batch of 
unlabeled text to \LABEL is unlikely to
produce any positive examples at all. 

%For instance, when building a detector for penguins in images,
%a simple web search can readily produce several positive examples.  
%Sending a random batch of unlabeled images to \LABEL is unlikely to
%produce any positive examples at all.

Another form of interaction common in practice is providing counterexamples
to a learned predictor.  When monitoring the
stream filtered by the current hate speech filter, a human editor
may spot a clear-cut example of hate speech that seeped through the filter.
The editor, using all the search tools available to her,
may even be tasked with searching for such counterexamples.
%How can this counterexample be used in the learning process?
The goal of the learning system is then to interactively
restrict the searchable space, guiding the search process to where 
it is most effective. 

Counterexamples can be ineffective or misleading in practice as well.
Reconsidering the intervals example above, a counterexample on the
boundary of an incorrect interval provides no useful information about
any other examples.  What is a good counterexample?  What is a natural
way to restrict the searchable space?  How can the intervals problem
be generalized?

We define a new oracle,
\SEARCH, that provides {counterexamples} to {version
  spaces}.  Given a set of possible classifiers $H$ mapping unlabeled
examples to labels, a \emph{version space} $V \subseteq H$ is the subset
of classifiers still under consideration by the algorithm.  
A \emph{counterexample} to
a version space is a labeled example which every classifier in the
version space classifies incorrectly. When there is no
counterexample to the version space, \SEARCH returns nothing.

How can a counterexample to the version space be used?
We consider a nested sequence of hypothesis classes of increasing
complexity, akin to Structural Risk Minimization (SRM) in passive learning
\citep[see, e.g.,][]{Vapnik82,Devroye96}.
When \SEARCH produces a counterexample to the version space, it
gives a proof that the current hypothesis class is too
simplistic to solve the problem effectively.  %Using this, we can
%engage in active learning on complex hypothesis spaces.  
We show
that this guided increase in hypothesis complexity results in a radically
lower \LABEL complexity than directly learning on the complex space.
Sample complexity bounds for model selection in \LABEL-only active learning
were studied by \citet{BHV10,Hanneke11}. 

\SEARCH can easily model the practice of seeding discussed earlier.
If the first hypothesis class has just the constant always-negative
classifier
$h(x) = -1$, a seed example with label $+1$ is a counterexample to the
version space.
Our most basic algorithm uses \SEARCH just once before using \LABEL, but it is
clear from inspection that multiple seeds are not harmful, and they may be
helpful if they provide the proof required to operate with an appropriately
complex hypothesis class.

Defining \SEARCH with respect to a version space rather than a single
classifier allows us to formalize ``counterexample far from the
boundary'' in a general fashion which is compatible with the way
\LABEL-based active learning algorithms work.  

\paragraph{Related work.}
The closest oracle considered in the literature is the Class
Conditional Query (\CCQ)~\citep{BH12} oracle.  A query to \CCQ
specifies a finite set of unlabeled examples and a label while
returning an example in the subset with the specified label, if one
exists.

In contrast, \SEARCH has an implicit query set that is an entire region of the
input space rather than a finite set.  Simple searches over this large implicit domain can more plausibly discover relevant counterexamples:
When building a detector for penguins in images,
the input to \CCQ might be a set of images and the label ``penguin''.
Even if we are very lucky and the set happens to contain a penguin image,
 a search amongst image tags may fail to find it
in the subset because it is not tagged appropriately. 
\SEARCH is more likely to discover
counterexamples---surely there are many images correctly tagged 
as having penguins.

Why is it natural to define a query region implicitly via a version space?
There is a practical reason---it is a concise description of a natural region
with an efficiently implementable membership filter~\citep{BHLZ,BHKLZ,HAHLS}.
(Compare this to an oracle call that has to explicitly enumerate a large set of
examples.  The algorithm of~\citet{BH12} uses samples of size roughly $d\nu
/\epsilon^2$.) 

%  (It's easy to see that two calls to \CCQ 
%can find a mistake that a given classifier makes on a given finite set 
%of examples, if such a mistake exists.)

%For example, the input to \CCQ might
%  be a million images and the label ``penguin''.  A search amongst
%  tags may fail to find the single penguin image in the subset because
%  it is not tagged appropriately.  In contrast, \SEARCH has an
%  implicit domain of all or most examples so simple searches can more
%  plausibly discover relevant counterexamples---surely there are many
%  images correctly tagged as having penguins.  

The use of \SEARCH in this paper is also substantially
different from the use of \CCQ by~\citet{BH12}.
Our motivation is to use \SEARCH to assist \LABEL, as opposed to using \SEARCH
alone.
This is especially useful in any setting where the cost of \SEARCH is
significantly higher than the cost of \LABEL---we hope to avoid using \SEARCH
queries whenever it is possible to make progress using \LABEL queries.
This is consistent with how interactive learning systems are used in practice.
For example, the Interactive Classification and Extraction system of~\citet{ICE}
combines \LABEL with search in a production environment.

The final important distinction is that we require \SEARCH to return the label
of the optimal predictor in the nested sequence.  For many natural sequences of
hypothesis classes, the Bayes optimal classifier is eventually in the sequence,
in which case it is equivalent to assuming that the label in a counterexample is
the most probable one, as opposed to a randomly-drawn label from the conditional
distribution (as in \CCQ and \LABEL).

Is this a reasonable assumption?
Unlike with \LABEL queries, where the labeler has no choice of what
to label, here the labeler \emph{chooses} a counterexample.
If a human editor finds an unquestionable example of hate speech that  
seeped through the filter, it is quite reasonable to assume that 
this counterexample is
consistent with the Bayes optimal predictor for any sensible 
feature representation.

% \CCQ, on the other hand, doesn't make this assumption.  The (hidden) label of any unlabeled example is assumed to be drawn from the conditional distribution, as in \LABEL queries.
%It is possible that some of the techniques used in \cite{BH12} can be used to lift this assumption.  Practitioners, however, actually want the algorithm to 
%treat the counterexamples as constraints. 

\paragraph{Organization.}
Section~\ref{sec:defs} formally introduces the setting.
Section~\ref{sec:oracles} shows that \SEARCH is at least as powerful as \LABEL.
Section~\ref{sec:realizable} shows how to use \SEARCH and \LABEL
jointly in the realizable setting where a zero-error classifier
exists in the nested sequence of hypothesis classes.
%We use two oracles rather than \SEARCH only,
%because the cost of implementing \LABEL may be lower than \SEARCH.
%If that is not the
%case, then results from Section~\ref{sec:oracles} suggest a
%transformation to a pure \SEARCH oracle algorithm.
Section~\ref{sec:agnostic} handles the agnostic setting where \LABEL
is subject to label noise, and shows an amortized approach to combining the two
oracles with a good guarantee on the total cost.
%
%A key observation here is that an amortized
%approach to trading off using \LABEL and \SEARCH yields an algorithm
%with a good guarantee on the total cost.

\section{Definitions and Setting}
\label{sec:defs}
In active learning, there is an underlying distribution $D$ over
$\calX\times \calY$, where $\calX$ is the instance space and
$\calY := \{-1,+1\}$ is the label space.
The learner can obtain independent draws from $D$, but the label is
hidden unless explicitly requested through a query to the \LABEL
oracle.
Let $D_\calX$ denote the marginal of $D$ over $\calX$.

We consider learning with a nested sequence of hypotheses classes $H_0 \subset
H_1\subset \dotsb \subset H_k \dotsb$, where $H_k \subseteq \calY^\calX$ has VC
dimension $d_k$.
For a set of labeled examples $S \subseteq \calX \times \calY$, let $H_k(S) :=
\{ h \in H_k : \forall (x,y) \in S \centerdot h(x) = y \}$ be the set of
hypotheses in $H_k$ consistent with $S$.
Let $\err(h) := \Pr_{(x,y)\sim D}[h(x)\not=y]$ denote the error rate of a
hypothesis $h$ with respect to distribution $D$, and $\err(h,S)$ be the error
rate of $h$ on the labeled examples in $S$.
%
%The \emph{error} of a hypothesis $h:\calX\rightarrow \calY$ on $D$ is $\err(h)=\Pr_{(x,y)\sim D}[h(x)\not=y]$.
Let $h_k^* = \argmin_{h \in H_k} \err(h)$ breaking ties arbitrarily
and let $k^* := \argmin_{k\geq0} \err(h_k^*)$ breaking ties in favor
of the smallest such $k$.  For simplicity, we assume the minimum is
attained at some finite $k^*$.  Finally, define $h^* := h_{k^*}^*$,
the optimal hypothesis in the sequence of classes.
%
%Let $h^*_k := \argmin_{h\in H_k} \err(h)$ be the optimal hypothesis in
%$H_k$, and let $h^* := \argmin_{h \in \cup_{k\geq0} H_k} \err(h)$,
%breaking ties in favor of hypotheses in $H_k$ with smaller index $k$.
The goal of the learner is to learn a hypothesis with error rate not
much more than that of $h^*$.

In addition to \LABEL, the learner can also query $\SEARCH$ with a version space.
\vskip -.0in
\begin{algorithm}[H]
 \floatname{algorithm}{Oracle}
 \caption*{{\bf Oracle} $\SEARCH_H(V)$ (where $H \in \cbr{H_k}_{k=0}^\infty$)}
  \begin{algorithmic}[1]
    \REQUIRE Set of hypotheses $V\subset H$
    \ENSURE \mbox{Labeled example $(x,h^*(x))$
    s.t.~$h(x) \neq h^*(x)$} for all $h\in V$, or $\bot$ if there is
    no such example.
  \end{algorithmic}
\end{algorithm}

\vskip -.15in
Thus if $\SEARCH_H(V)$ returns an example, this example is
a \emph{systematic} mistake made by all hypotheses in $V$.
(If $V=\emptyset$, we expect $\SEARCH$ to return some example, i.e.,
not $\bot$.)

Our analysis is given in terms of the \emph{disagreement coefficient}
of \citet{Hanneke07}, which has been a central parameter for analyzing
active learning algorithms.
Define the \emph{region of disagreement} of a set of hypotheses $V$ as
 $\DIS(V)
  :=
  \{x\in \calX : \exists h,h' \in V \ \text{s.t.}\ h(x)\not=h'(x)\}$.
The \emph{disagreement coefficient of $V$ at scale $r$} is 
 $\theta_V(r)
  := \sup_{h \in V, r' \geq r} \Pr_{D_\calX}[\DIS(\B_V(h,r'))] / r'$,
where $\B_V(h,r')=\{h '\in V: \Pr_{x \sim D_\calX}[h'(x)\not=h(x)] \leq r'
\}$ is the ball of radius $r'$ around $h$.
%$h_V^* := \argmin_{h \in V} \err(h)$ is the optimal hypothesis in $V$

%TODO don't we want this to be specific to H_k?

The $\tilde{O}(\cdot)$ notation hides factors that are polylogarithmic
in $1/\delta$ and quantities that do appear, where $\delta$ is the usual
confidence parameter.
%We write $\ind{\Phi}$ for the $\cbr{0,1}$-valued indicator of predicate $\Phi$.

\section{The Relative Power of the Two Oracles}
\label{sec:oracles}
%This section explores the relative power of different oracles.  %This
%informs half of the power versus cost tradeoff that must be considered
%in practical domains.

%\subsection{\textsc{SEARCH} is no less powerful than \textsc{LABEL}}
%\label{sec:search-equal}
Although \SEARCH cannot always implement \LABEL efficiently, it is as effective at reducing the region of disagreement.
The clearest example is learning threshold classifiers $H := \{h_w: w\in
[0,1]\}$ in the realizable case, where $h_w(x)=+1$ if $w \leq x \leq
1$, and $-1$ if $0\leq x < w$.  A simple binary search with \LABEL
achieves an exponential improvement in query complexity over passive
learning. The agreement region of any set of threshold classifiers
with thresholds in $\intcc{w_{\min}, w_{\max}}$ is $\intco{0,w_{\min}}
\cup \intcc{w_{\max},1}$.  Since \SEARCH is allowed to return any
counterexample in the agreement region, there is no mechanism for
forcing \SEARCH to return the label of a particular point we want.
However, this is not needed to achieve logarithmic query complexity
with \SEARCH: If binary search starts with querying the label of $x\in
[0,1]$, we can query $\SEARCH_H(V_x)$, where $V_x := \{ h_w \in H : w <
x\}$ instead.
%\begin{center}
%  \includegraphics{figures/replace_label_search.pdf}
%\end{center}
If $\SEARCH$ returns $\bot$, we know that the target $w^* \leq x$ and
can safely reduce the region of disagreement to $\intco{0,x}$.  If
$\SEARCH$ returns a counterexample $(x_0,-1)$ with $x_0 \geq x$, we
know that $w^* > x_0$ and can reduce the region of disagreement to
$\intoc{x_0,1}$.

This observation holds more generally.
In the proposition below, we assume
that $\LABEL(x) = h^*(x)$ for simplicity.
If $\LABEL(x)$ is noisy, the proposition holds for
any active learning algorithm that doesn't eliminate any
$h\in H: h(x)=\LABEL(x)$ from the version space.

\begin{proposition}\label{prop:search}
For any call $x\in \calX$ to \LABEL such that $\LABEL(x) = h^*(x)$, 
we can construct a call to \SEARCH
that achieves a no lesser reduction in the region of disagreement.
\end{proposition}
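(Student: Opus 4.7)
Fix a \LABEL query $x \in \calX$ with response $y := \LABEL(x) = h^*(x)$, and let $V \subseteq H$ denote the current version space. The reduction that \LABEL effects is captured by the transition $V \to V_L$, where $V_L := \{h \in V : h(x) = y\}$. My plan is to exhibit a single \SEARCH query whose resulting version space $V_S$ satisfies $V_S \subseteq V_L$; since the disagreement region is monotone in the hypothesis set, $\DIS(V_S) \subseteq \DIS(V_L)$, which is precisely the claimed ``no lesser reduction.''

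The query I use is $\SEARCH_H(V_x)$, where $V_x := V \setminus V_L = \{h \in V : h(x) \neq y\}$ is the collection of hypotheses \LABEL is about to discard. I split into two cases. If $V_x = \emptyset$, then $V_L = V$ and \LABEL effects no reduction, so any outcome of the \SEARCH call vacuously meets the bound. If $V_x \neq \emptyset$, then the pair $(x,y) = (x, h^*(x))$ is itself a witness that a counterexample to $V_x$ exists, since by construction every $h \in V_x$ satisfies $h(x) \neq y$. Hence \SEARCH cannot return $\bot$, and must return some labeled $(x_0, y_0)$ with $y_0 = h^*(x_0)$ and $h(x_0) \neq y_0$ for every $h \in V_x$.

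Now I update to $V_S := \{h \in V : h(x_0) = y_0\}$. The defining property of the returned counterexample forces $V_S \cap V_x = \emptyset$, so $V_S \subseteq V \setminus V_x = V_L$, which is the desired containment. I do not anticipate a genuine obstacle: once one selects the ``to-be-eliminated'' sub-version-space $V_x$ as the \SEARCH input, any counterexample it elicits must by definition refute everything \LABEL would refute, and possibly more. The one point worth flagging is that the construction of $V_x$ uses the value of $y$, which is consistent with the standing premise that $\LABEL(x) = h^*(x)$ has been observed; if $y$ were not known in advance, one could instead issue two \SEARCH queries, one for each sign of $y$, at the cost of one additional oracle call.
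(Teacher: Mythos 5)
Your core move is the right one—choose a subset of the version space on which to call \SEARCH so that the resulting constraint subsumes the \LABEL constraint—but your choice $V_x := \{h \in V : h(x) \neq y\}$ is built from $y = \LABEL(x)$, which you acknowledge. This is circular for the purpose of Section~\ref{sec:oracles}: the proposition is there to show that a \SEARCH call can \emph{replace} a \LABEL call, so the \SEARCH query must be constructible from $x$ alone, without first querying \LABEL. Your fallback of issuing two \SEARCH queries, one per sign, repairs the circularity but weakens the statement, which promises a single call.

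The paper's proof achieves one call by always taking $V_x := \{h \in H : h(x) = +1\}$, the positive side, oblivious to $y$. The insight you are missing is that a $\bot$ response is itself informative. If $h^*(x) = +1$, then $h^* \in V_x$, so $V_x$ can have no systematic mistake and \SEARCH must return $\bot$; this certifies that the agreement label of $V_x$ at $x$ is $h^*(x) = +1$, so the version space may safely be restricted to $V_x$, which equals $\{h \in H : h(x) = \LABEL(x)\}$. If instead $h^*(x) = -1$, every $h \in V_x$ misclassifies $x$, so \SEARCH returns a counterexample eliminating all of $V_x$, again leaving a version space contained in $\{h \in H : h(x) = \LABEL(x)\}$. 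Your construction always packs the ``to-be-eliminated'' hypotheses into the query and relies on a counterexample being returned; the paper's fixed choice lets the $\bot$ branch carry the information in the other case, achieving the reduction with a single oblivious query.
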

\begin{proof} 
For any $V\subseteq H$, let $\Hsearch(V)$ be the hypotheses in $H$
consistent with the output of $\SEARCH_H(V)$:
if $\SEARCH_H(V)$ returns a counterexample $(x,y)$ to $V$, then
$\Hsearch(V) := \{h \in H : h(x)= y\}$; otherwise, $\Hsearch(V) := V$.
Let $\Hlabel(x) := \{h\in H : h(x) =
\LABEL(x)\}$. Also, let $V_x \ := \ H_{+1}(x) \ := \ \{h\in H : h(x)=+1\}$.
We will show that $V_x$ is such that
%
%We show next that for any $x\in \calX$, there exists $V_x \subset H$ such that
$\Hsearch(V_x) \ \subseteq \ \Hlabel(x)$,
and hence
$\DIS(\Hsearch(V_x)) \ \subseteq \ \DIS(\Hlabel(x))$.

There are two cases to consider:
If $h^*(x)=+1$, then $\SEARCH_H(V_x)$ returns
$\bot$.  In this case, $\Hlabel(x)=\Hsearch(V_x)=H_{+1}(x)$, and we are done.
If $h^*(x)=-1$, $\SEARCH(V_x)$ returns a valid counterexample
(possibly $(x,-1)$) in the region of agreement of
$H_{+1}(x)$, eliminating all of $H_{+1}(x)$.
Thus $\Hsearch(V_x)\subset H\setminus H_{+1}(x) = \Hlabel(x)$, and the claim
holds also.
%
%In the agnostic case, $\DIS(\Hlabel(x))$ may shrink less while
%$\DIS(\Hsearch(V(x)))$ is guaranteed to shrink at the same pace, so the claim holds as well.
\end{proof}

% As a consequence, the \SEARCH query complexity is never worse than the
% \LABEL query complexity for any 

%As a consequence, the \SEARCH query complexity is never worse than the
%\LABEL query complexity of disagreement-based active
%learners~\citep{CAL94,balcan2006agnostic,Hanneke07,Hanneke11}.

%---------------------------------------------------------------
%\subsection{\SEARCH may be exponentially more powerful than \LABEL}

As shown by the problem of learning intervals on the line, \textsc{SEARCH} can be exponentially more powerful than \textsc{LABEL}.

\hide{
\label{sec:search-better}
%\SEARCH query complexity is sometimes exponentially smaller than \LABEL
%query complexity, which we summarize in the following proposition.
\begin{proposition}\label{prop:searchexp}
There exist learning problems $D$ and hypothesis spaces $H$ such that
the query complexity of \SEARCH is exponentially smaller than the
query complexity of \LABEL.
\end{proposition}
\begin{proof}
Consider the hypothesis class $H$ of intervals on $\calX := [0,1]$,
where $D_\calX$ is the uniform distribution.
Every \LABEL-only active learner needs at least
$\Omega(1/\epsilon)$ \LABEL queries to learn an arbitrary target
hypothesis from $H$ with error at most $\epsilon$~\citep{dasgupta2005coarse}.

A single seed positive example $(x,+1)$ can be obtained by a \SEARCH
query on the hypothesis set comprised of the always negative
hypothesis.  The set of hypotheses that are consistent with this seed
example has only a constant disagreement coefficient so standard
disagreement-based active learning algorithms can thus learn with just
$O(\log(1/\epsilon))$ \LABEL queries---via culling $H$ down to a version space
where any hypothesis has error at most $\epsilon$.
Using Proposition~\ref{prop:search},
a \SEARCH-only active learner only
needs $O(\log(1/\epsilon))$ \SEARCH queries to 
learn an arbitrary target hypothesis from $H$ to error $\epsilon$.
%Note that because \SEARCH provides
%counterexamples that are consistent with $h^*$, this
%$O(\log(1/\epsilon))$ query complexity holds even in the agnostic
%setting, where $h^*$ may have non-zero error rate.
\end{proof}
}

\hide{
\subsection{CCQ can implement \textsc{SEARCH} and \textsc{LABEL}}
The class conditional query (\CCQ) oracle of Balcan and Hanneke~\cite{BH12} takes as
input a set of unlabeled examples and a label, returning one
of the examples in the set with the specified label, if one exists.
The following proposition holds:
\begin{proposition}\label{prop:ccq}
For all learning problems $D$ and hypothesis spaces $H$, any call to
\LABEL or \SEARCH can be replaced with at most two calls to \CCQ.
\end{proposition}
\noindent
The implication here is both that \CCQ is at least as powerful and 
at least as difficult to implement.
\begin{proof}
The proof is by simulation.  
\CCQ can simulate \LABEL.  The input to \LABEL is an unlabeled example
$x$.  Calling $\CCQ(\{x\},-1)$ returns either nothing, in which case
the label must be $1$, or $x$ in which case the label must be $-1$.

\CCQ can also simulate \SEARCH.  The input to \SEARCH is a version space
$V\subseteq H$ of hypotheses.  Let $S_y=\{x\in \calX : \forall h \in V, h(x)=y \}$ be the
set of unlabeled examples that the version space agrees to label
$y$.  If we call $\CCQ(S_1,-1)$ and $\CCQ(S_{-1},1)$ there are two
possibilities.  Either both return no example, in which case the
\SEARCH simulator can safely return $\bot$, or at least one \CCQ returns
an unlabeled example $x$.  Without loss of generality, assume that
$\CCQ(S_1,-1)=x$. In this case, returning $(x,-1)$ finishes the
simulation.
\end{proof}

\subsection{\textsc{SEARCH} can be exponentially more efficient than ECCQ}

The \ECCQ model is the same as \CCQ, except with a bound $b$ on the
number of unlabeled examples that can be used in a query.  Although
not discussed explicitly previously, it was implicit in the motivation
for \CCQ and is much more obviously implementable.  

\begin{proposition}\label{prop:ECCQ}
For all learning problems $D$ and hypothesis spaces $H$, $\ECCQ_b$ can
be simulated with $b$ \LABEL queries.
\end{proposition}
\begin{proof}
The proof is again by simulation.  
$\ECCQ_b$ takes as input $\{x_1, x_2, ..., x_{b'}\}$ and a label $y$
where $b' \leq b$.  Without loss of generality was assume that $b'=b$.
Making $b$ calls to \LABEL where the $i$th call is \LABEL$(x_i)$
provides $b$ labels $y_1, y_2, ..., y_b$.  If there exists $y_i = y$,
then return $x_i$, and otherwise return nothing.
\end{proof}

\begin{proposition}\label{prop:ECCQexp}
There exist learning problems $D$ and hypothesis spaces $H$ such that
the query complexity of \SEARCH is exponentially smaller than the
query complexity of $\ECCQ_b$.
\end{proposition}

\begin{proof}
The proof is a corollary of Proposition~\ref{prop:ECCQ} and
Proposition~\ref{prop:searchexp}.  
Since \LABEL can always simulate \ECCQ with a factor of $b$ increase in
query complexity and \SEARCH can require exponentially lower query
complexity than \LABEL, it can require exponentially lower query
complexity than $\ECCQ_b$.
\end{proof}
}

\section{Realizable Case}
\label{sec:realizable}

We now turn to general active learning algorithms that combine \SEARCH
and \LABEL.  We focus on algorithms using \emph{both} \SEARCH
and \LABEL since \LABEL is typically easier to implement than \SEARCH
and hence should be used where \SEARCH has no significant advantage.
(Whenever \SEARCH is less expensive than \LABEL, Section~\ref{sec:oracles}
suggests a transformation to a \SEARCH-only algorithm.)

This section considers the realizable case, in which we assume that
the hypothesis $h^* = h_{k^*}^* \in H_{k^*}$ has $\err(h^*) = 0$.
This means that $\LABEL(x)$ returns $h^*(x)$ for any $x$ in the support of
$D_\calX$.
%
%there is an unknown minimal index $k^*$ and hypothesis $h^* \in
%H_{k^*}$ such that $\err(h^*)=0$, and $\LABEL(x)$ returns
%$h^*(x)$ for any observed $x$.

\subsection{Combining \textsc{LABEL} and \textsc{SEARCH}}

Our algorithm (shown as Algorithm~\ref{alg:realizable}) is called
\algr, because it combines \LABEL and \SEARCH.
Like many selective sampling methods, \algr uses a version space to determine its
\LABEL queries.

For concreteness, we use (a variant of) the algorithm
of~\citet{CAL94}, denoted by \CAL, as a subroutine in \algr.
The inputs to \CAL are: a version space $V$, the \LABEL oracle, a
target error rate, and a confidence parameter; and its output is a set
of labeled examples (implicitly defining a new version space).
\CAL is described in Appendix~\ref{sec:cal}; its essential
properties are specified in Lemma~\ref{lem:cal}.

\begin{algorithm}[t]
  \caption{\algr}
  \begin{algorithmic}[1]
    \REQUIRE Nested hypothesis classes $H_0 \subset H_1 \subset
    \dotsb$; oracles $\LABEL$ and $\SEARCH$;
    learning parameters $\epsilon,\delta \in (0,1)$

    \STATE \textbf{initialize}
      $S \gets \emptyset$, (index) $k \gets 0$, $\ell \gets 0$
    \FOR{$i = 1, 2, \dotsc$}
      \STATE $e \gets \SEARCH_{H_k}(H_k(S))$
      \label{step:realizable-search}

      \IF[no counterexample found]{$e = \bot$} 
        \IF{$2^{-\ell} \leq \epsilon$}
          \RETURN any $h\in H_k(S)$
          \label{step:realizable-halt}
        \ELSE
          \STATE $\ell \gets \ell + 1$
          \label{step:realizable-halve-error}
        \ENDIF
        \ELSE[counterexample found]
        \STATE $S \gets S \cup \{e\}$
        \STATE $k \gets \min\{ k' : H_{k'}(S) \neq \emptyset \}$
        \label{step:realizable-advance-k}
      \ENDIF
      \STATE $S \gets S \cup \CAL(H_k(S),\LABEL,2^{-\ell},\delta/(i^2+i))$
      \label{step:realizable-cal}
%      \STATE $S' \gets \emptyset$.
%      \WHILE{$\Pr_D(\DIS(H_k(S))) > 2^{-t}$}
%        \STATE\hspace{-1pt}
%          Draw $(x,y) \sim D$ \ ($y$ is hidden)
%        \STATE\hspace{-1pt}
%          \textbf{if} {\small $x \in \DIS(H_k(S))$} \textbf{then}
%          {\small $S' \gets S' \cup \{(x,\LABEL(x))\}$}
%      \ENDWHILE
%      \STATE $S \gets S \cup S'$

%      \IF{$\Pr(\DIS(H_k(S))) > \epsilon$} \label{line:if}
%
%        \STATE $S \gets S \cup \AL(H_k,S,\mLABEL)$ \label{line:AL_call}
%
%      \ENDIF

    \ENDFOR

  \end{algorithmic}
\label{alg:realizable}
\end{algorithm}

\algr differs from \LABEL-only active learners (like \CAL) by first
calling \SEARCH in Step~\ref{step:realizable-search}.
If \SEARCH returns $\bot$, \algr checks to see if the last call to
\CAL resulted in a small-enough error, halting if so in
Step~\ref{step:realizable-halt}, and decreasing the allowed error rate
if not in Step~\ref{step:realizable-halve-error}.
If \SEARCH instead returns a counterexample, the hypothesis class
$H_k$ must be impoverished, so in
Step~\ref{step:realizable-advance-k}, \algr increases the complexity
of the hypothesis class to the minimum complexity sufficient to
correctly classify all known labeled examples in $S$.
After the \SEARCH, \CAL is called in Step~\ref{step:realizable-cal} to
discover a sufficiently low-error (or at least low-disagreement)
version space with high probability.

When \algr advances to index $k$ (for any $k \leq k^*$), its set of
labeled examples $S$ may imply a version space $H_k(S) \subseteq H_k$
that can be actively-learned more efficiently than the whole of $H_k$.
In our analysis, we quantify this through the disagreement coefficient
of $H_k(S)$, which may be markedly smaller than that of the full $H_k$.
%This is precisely the improvement compared to the known results on model selection in \LABEL-only active learning~\citep{Hanneke11}.

The following theorem bounds the oracle query complexity of
Algorithm~\ref{alg:realizable} for learning with both \SEARCH and
\LABEL in the realizable setting.  The proof is in section~\ref{sec:realizable-proof}.

\begin{theorem}
  \label{thm:realizable}
  Assume that $\err(h^*) = 0$.
%  Assume there is a minimal index $k^*$ and classifier $h^* \in H_{k^*}$ such that $\err(h^*)=0$.
  For each $k' \geq 0$, let $\theta_{k'}(\cdot)$ be the disagreement
  coefficient of $H_{k'}(S_{[k']})$, where $S_{[k']}$ is the set of
  labeled examples $S$ in \algr at the first
  time that $k \geq k'$.
  Fix any $\epsilon, \delta \in (0,1)$.
  If \algr is run with inputs hypothesis classes
  $\cbr{H_k}_{k=0}^\infty$, oracles \LABEL and \SEARCH, and learning parameters
  $\epsilon,\delta$, then with probability at least $1-\delta$:
  \algr halts after at most $k^* + \log_2(1/\epsilon)$ for-loop
  iterations and returns a classifier with error rate at most $\epsilon$;
  furthermore, it draws at most $\tilde{O}(k^* d_{k^*}/\epsilon)$ unlabeled
  examples from $D_\calX$, makes at most $k^* + \log_2(1/\epsilon)$ queries to
  \SEARCH, and at most
   $\tilde{O}\del[0]{
      \del{ k^* + \log(1/\epsilon) }
      \cdot \del[0]{ \max_{k' \leq k^*} \theta_{k'}(\epsilon) }
      \cdot d_{k^*}
      \cdot \log^2(1/\epsilon)
    }$
  queries to \LABEL.
%
%Suppose there is some minimal $k^*\geq 0$ and a classifier $h^*_{k^*}
  %\in H_{k^*}$ such that $\err(h^*_{k^*}) = 0$.  Then for any target
  %error $\epsilon_0$, Algorithm~\ref{alg:realizable} returns a
  %classifier $h$ that, with high probability, has error $\err(h) \leq
  %\epsilon_0$.  Furthermore, it makes at most $k^* +
  %\log_2\frac{1}{\epsilon_0}$ queries to $\SEARCH$ and at most \[
%(k^* + \log_2\frac{1}{\epsilon_0}) \tilde{O}(d_{k^*} \max_{1 \leq k
  %\leq k^*} \theta_k ) \] queries to $\LABEL$, where $\theta_k$ is an
  %upper bound on the disagreement coefficient of $H_k(S)$ at the time
  %$\AL(H_k,S,\mLABEL)$ is called in line~\ref{line:AL_call}.
\end{theorem}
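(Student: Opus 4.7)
The plan is to split the argument into four standard steps: (i) set up a global high-probability event from CAL's per-invocation failure probabilities, (ii) bound the number of for-loop iterations using two monotone counters, (iii) derive correctness of the returned classifier from SEARCH's $\bot$-response together with CAL's version-space shrinkage, and (iv) aggregate per-iteration complexities.

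First, each call $\CAL(\cdot,\cdot,\cdot,\delta/(i^2+i))$ fails with probability at most $\delta/(i^2+i)$, and since $\sum_{i\ge1} 1/(i^2+i) = 1$, a union bound makes all CAL calls succeed simultaneously with probability at least $1-\delta$; I condition on this good event throughout. I would then maintain two invariants. The set $S$ is always labeled consistently with $h^*$ (LABEL returns $h^*$'s label in the realizable case, and SEARCH explicitly returns $(x,h^*(x))$), so $h^*\in H_{k^*}(S)$ and hence $k\le k^*$ always after Step~\ref{step:realizable-advance-k}. Moreover, each for-loop iteration either increments $\ell$ by one (SEARCH returns $\bot$ and we do not halt) or strictly increases $k$ (the counterexample $e$ satisfies $h(x)\neq h^*(x)$ for every $h\in H_k(S)$, so $H_k(S\cup\{e\})=\emptyset$ and Step~\ref{step:realizable-advance-k} bumps the index). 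Since $\ell$ caps at $\lceil\log_2(1/\epsilon)\rceil$ before the halt condition fires and $k\le k^*$, there are at most $k^*+\log_2(1/\epsilon)$ for-loop iterations, each contributing exactly one SEARCH query.

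For correctness, let $V=H_k(S)$ at the halting iteration. By the good event applied to the most recent CAL call, $\Pr_{D_\calX}[\DIS(V)]$ is at most a constant times $2^{-\ell}$; by the halting condition, SEARCH returned $\bot$ on $V$, so for every $x$ in the support of $D_\calX$ there is some $h_x\in V$ with $h_x(x)=h^*(x)$. Combining these: for $x\notin\DIS(V)$ all elements of $V$ agree on a single label, and that label must equal $h^*(x)$ because $h_x$ does. Hence $\{x:h(x)\neq h^*(x)\}\subseteq\DIS(V)$ for every $h\in V$, giving $\err(h)=O(2^{-\ell})=O(\epsilon)$; absorbing the constant into the learning parameters recovers $\err(h)\le\epsilon$. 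Crucially, this step never uses $h^*\in H_k$ at halting---only the combination of SEARCH's $\bot$-response with CAL's disagreement-region shrinkage.

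Finally, for the complexity bounds, invoke the CAL lemma on a per-call basis: with version space $V\subseteq H_{k'}$ and target $\eta$, CAL draws $\tilde O(d_{k'}/\eta)$ unlabeled samples and makes $\tilde O(\theta_V(\eta)\cdot d_{k'}\log^2(1/\eta))$ LABEL queries. Using $d_{k'}\le d_{k^*}$, $\theta_V(\cdot)\le\max_{k'\le k^*}\theta_{k'}(\epsilon)$, and summing over the at most $k^*+\log_2(1/\epsilon)$ iterations yields the stated LABEL bound. The unlabeled-sample count follows by noting that at most $k^*+1$ CAL calls occur at any fixed value of $\ell$ (while $k$ climbs), then geometrically summing $2^{\ell}$ up to $\ell=\log_2(1/\epsilon)$, yielding $\tilde O(k^* d_{k^*}/\epsilon)$. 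The main obstacle I anticipate is carefully pinning down CAL's disagreement-region guarantee in the case $h^*\notin H_k$---the standard realizable CAL analysis assumes $h^*\in V$---and combining it cleanly with the SEARCH-$\bot$ certificate to produce the error bound even at halting moments where $k<k^*$; the rest of the proof is essentially bookkeeping.
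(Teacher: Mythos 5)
Your steps (i), (ii), and (iv) match the paper's proof, but step (iii) contains a genuine error: you assert that, on the good event, CAL guarantees $\Pr_{D_\calX}[\DIS(V)] = O(2^{-\ell})$. This is \emph{not} what Lemma~\ref{lem:cal} provides. The lemma only guarantees, for each $h \in V(T)$, that
\[
  \Pr_{(x,y)\sim D}\bigl[h(x)\neq y \;\wedge\; x\in\DIS(V(T))\bigr] \ \leq \ 2^{-\ell},
\]
which is strictly weaker. In general the disagreement region of the post-CAL version space can remain large even after CAL succeeds: many hypotheses can each err on a small fraction of $\DIS(V(T))$ while their pairwise disagreements cover a much larger region, especially because here the version space need not contain $h^*$. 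This is precisely the subtlety that Lemma~\ref{lem:cal} is carefully phrased to handle, and the one you flag at the end as the ``main obstacle''---but your argument still falls into it.

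The fix is short and you already have the right ingredients. From the SEARCH $\bot$-response you correctly derived the set inclusion $\{x : h(x) \neq h^*(x)\} \subseteq \DIS(V)$ for every $h \in V$. Instead of combining it with a bound on $\Pr[\DIS(V)]$, combine it directly with the CAL guarantee above: $\err(h) = \Pr[h(x)\neq h^*(x)] = \Pr[h(x)\neq h^*(x) \wedge x\in\DIS(V)] \leq 2^{-\ell} \leq \epsilon$. This also removes the constant factor you tried to ``absorb into the learning parameters''---its appearance was a symptom of using the wrong quantity. The paper's proof proceeds exactly this way via the law of total probability. One small bookkeeping point you glossed over: to get $\theta_V(\cdot) \leq \max_{k'\leq k^*}\theta_{k'}(\cdot)$ for the version space $V$ passed to CAL in iteration $i$, you need the monotonicity observation that $H_{k_i}(S) \subseteq H_{k_i}(S_{[k_i]})$ (since $S \supseteq S_{[k_i]}$), so the disagreement coefficient of the former is dominated by that of the latter; the paper makes this explicit.
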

%\cz{This is incomparable to~\cite{BH12}, where in realizable case they have a 
%query complexity of $O(d\log1/\epsilon)$ to CCQ, for a fixed hypothesis class.}

\paragraph{Union-of-intervals example.}
%\label{sec:intervals}
We now show an implication of Theorem~\ref{thm:realizable} in the case
where the target hypothesis $h^*$ is the union of non-trivial
intervals in $\calX := [0,1]$, assuming that $D_\calX$ is uniform.
For $k\geq0$, let $H_k$ be the hypothesis class of the union of up to $k$
intervals in $[0,1]$ with $H_0$ containing only the always-negative hypothesis.
(Thus, $h^*$ is the union of $k^*$ non-empty intervals.)
The disagreement coefficient of $H_1$ is $\Omega(1/\epsilon)$, and hence
\LABEL-only active learners like \CAL are not very effective at learning with
such classes.
However, the first \SEARCH query by \algr provides a counterexample to
$H_0$, which must be a positive example $(x_1,+1)$. Hence,
$H_1(S_{[1]})$ (where $S_{[1]}$ is defined in
Theorem~\ref{thm:realizable}) is the class of intervals that
contain $x_1$ with disagreement coefficient $\theta_1 \leq 4$.

Now consider the inductive case.
Just before \algr advances its index to a value $k$ (for any $k
\leq k^*$), \SEARCH returns a counterexample $(x,h^*(x))$ to the
version space; every hypothesis in this version space (which could be
empty) is a union of fewer than $k$ intervals.
If the version space is empty, then $S$ must already contain positive
examples from at least $k$ different intervals in $h^*$ and at least $k-1$ negative examples separating them.
If the version space is not empty, then the point $x$ is either a positive example belonging to a previously uncovered interval in $h^*$ or a negative
example splitting an existing interval. In either case, 
$S_{[k]}$ contains positive examples from at least $k$ distinct
intervals separated by at least $k-1$ negative examples.
The disagreement coefficient of the set of unions of $k$ intervals
consistent with $S_{[k]}$ is at most $4k$,
independent of $\epsilon$.

The VC dimension of $H_k$ is $O(k)$, so
Theorem~\ref{thm:realizable} implies that with high probability,
\algr makes at most $k^* + \log(1/\epsilon)$
queries to \SEARCH and $\tilde{O}((k^*)^3\log(1/\epsilon) +
(k^*)^2 \log^3(1/\epsilon))$ queries to \LABEL.

\subsection{Proof of Theorem~\ref{thm:realizable}}
\label{sec:realizable-proof}

The proof of Theorem~\ref{thm:realizable} uses the following lemma
regarding the \CAL subroutine, proved in Appendix~\ref{sec:cal}.
It is similar to a result of~\citet{Hanneke11}, but an important
difference here is that the input version space $V$ is not assumed to
contain $h^*$.

\begin{lemma}
  \label{lem:cal}
  Assume $\LABEL(x) = h^*(x)$ for every $x$ in the support of
  $D_\calX$.
  For any hypothesis set $V \subseteq \calY^\calX$ with VC dimension
  $d<\infty$, and any $\epsilon,\delta \in (0,1)$, the following holds
  with probability at least $1-\delta$.
  $\CAL(V,\LABEL,\epsilon,\delta)$ returns labeled examples $T
  \subseteq \{ (x,h^*(x)) : x \in \calX \}$ such that for any $h$ in
  $V(T)$, $\Pr_{(x,y)\sim D}[h(x) \neq y\ \wedge \ x \in \DIS(V(T))]
  \leq \epsilon$; furthermore, it draws at most $\tilde{O}(d/\epsilon)$
  unlabeled examples from $D_\calX$, and makes at most $\tilde{O}\del[0]{
  \theta_V(\epsilon) \cdot d \cdot \log^2(1/\epsilon) }$ queries to \LABEL.
%  \[
%    \tilde{O}
%    \del{
%      \theta_V(\epsilon)
%      \cdot
%      d
%      \cdot
%      \log^2(1/\epsilon)
%    }
%  \]
%  queries to \LABEL.
\end{lemma}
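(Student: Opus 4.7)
The plan is to adapt the standard analysis of \CAL{} so as to accommodate the possibility that $h^* \notin V$. I would set up $\CAL(V,\LABEL,\epsilon,\delta)$ to draw a prefix of $m = \tilde{O}(d/\epsilon)$ unlabeled examples $x_1, \dots, x_m$ from $D_\calX$, maintaining version spaces $V_0 := V$ and $V_i := V_{i-1} \cap \{h : h(x_i) = \LABEL(x_i)\}$ when $x_i \in \DIS(V_{i-1})$ (and $V_i := V_{i-1}$ otherwise), and take $T$ to be the collection of queried labeled examples, so that $V(T) = V_m$. Since $\LABEL(x) = h^*(x)$ on the support of $D_\calX$, every element of $T$ is of the form $(x,h^*(x))$, as required.

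The key observation for correctness is that on the entire unlabeled sample, no point lies in $\DIS(V(T))$. For each $i$: if $x_i \in \DIS(V_{i-1})$, \CAL{} queried $h^*(x_i)$, so every $h \in V(T) \subseteq V_i$ agrees with $h^*$ at $x_i$ and hence any two members of $V(T)$ agree there; if instead $x_i \notin \DIS(V_{i-1})$, then $V(T) \subseteq V_{i-1}$ already agrees at $x_i$. Either way $x_i \notin \DIS(V(T))$. Applying a realizable-case VC uniform convergence bound, exploiting that $\{\mathds{1}\{h(x)\neq h'(x)\} : h, h' \in V\}$ has VC dimension $O(d)$ and hence that the induced disagreement-region class is controlled, the emptiness of the empirical disagreement region implies $\Pr_{D_\calX}[\DIS(V(T))] \leq \epsilon$ with probability at least $1-\delta/2$ once $m = \tilde{O}(d/\epsilon)$. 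This immediately yields the stated error bound: for any $h \in V(T)$, $\Pr_{(x,y)\sim D}[h(x)\neq y \,\wedge\, x\in \DIS(V(T))] \leq \Pr_{D_\calX}[\DIS(V(T))] \leq \epsilon$, using that $y = h^*(x)$ almost surely under realizability.

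To bound the number of queries to \LABEL{}, which equals $\sum_{i=1}^m \mathds{1}\{x_i \in \DIS(V_{i-1})\}$, I would take expectations and apply a Chernoff-type concentration. The standard argument is: with high probability, after $i$ samples any two surviving hypotheses in $V_{i-1}$ differ on mass at most $r_i := \tilde{O}(d/i)$ (a realizable VC inequality applied to the pair class), so $V_{i-1} \subseteq \B_V(h, r_i)$ for any $h \in V_{i-1}$. Invoking the disagreement coefficient then gives $\Pr_{D_\calX}[\DIS(V_{i-1})] \leq \theta_V(\epsilon) \cdot r_i$ whenever $r_i \geq \epsilon$, and summing yields the claimed bound of $\tilde{O}(\theta_V(\epsilon) \cdot d \cdot \log^2(1/\epsilon))$ \LABEL{} queries. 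A final union bound merges all of the high-probability events, each charged $O(\delta/\mathrm{polylog})$.

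The main obstacle is the correctness step: the natural invariant when $h^* \notin V$ is to control the measure of $\DIS(V(T))$ itself rather than the distance from $V(T)$ to $h^*$ (which need not be meaningful), and one has to verify that the VC-type uniform convergence still applies uniformly to the sample-dependent subset $V(T) \subseteq V$. Once this reframing is in place, the label-complexity piece is essentially the same disagreement-coefficient calculation as in \citet{Hanneke11}, with the coefficient taken with respect to the input set $V$ rather than the full hypothesis class.
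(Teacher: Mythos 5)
There is a genuine gap in the correctness step, and it is precisely at the place you flag as "the main obstacle."

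You claim that because the empirical disagreement region over the $m$-point sample is empty, a VC uniform convergence bound gives $\Pr_{D_\calX}[\DIS(V(T))] \leq \epsilon$. This does not follow. The pair class $\{\ind{h(x)\neq h'(x)} : h,h'\in V\}$ does have VC dimension $O(d)$, and uniform convergence over it tells you that every pair $h,h' \in V(T)$ satisfies $\Pr_{D_\calX}[h(x)\neq h'(x)] \leq \epsilon$, i.e.\ $V(T)$ has small \emph{diameter}. But $\DIS(V(T))$ is a union $\bigcup_{h,h'\in V(T)}\{x: h(x)\neq h'(x)\}$ of pair-disagreement sets, and unions over a sample-dependent subcollection are not in a bounded-VC family derived from $V$; "the induced disagreement-region class is controlled" does not follow from the pair class having VC dimension $O(d)$. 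Converting small diameter into a bound on $\Pr[\DIS(V(T))]$ requires the disagreement coefficient, which gives only $\Pr[\DIS(V(T))] \leq \theta_V(\epsilon)\cdot\epsilon$. Since $\theta_V(\epsilon)$ is typically much larger than $1$, this is strictly weaker than $\Pr[\DIS(V(T))]\leq\epsilon$, and so it cannot deliver the lemma's conclusion $\Pr_{(x,y)\sim D}[h(x)\neq y \wedge x\in\DIS(V(T))]\leq\epsilon$. In short, the reframing from "distance of $V(T)$ to $h^*$" to "measure of $\DIS(V(T))$" overshoots: the latter quantity is not controllable at scale $\epsilon$ with $\tilde O(d/\epsilon)$ samples.

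The paper avoids this by a different device that you would need to supply: for each iteration it constructs a modified distribution $D_{i-1}$ whose labels equal the common label $V_{i-1}(x)$ outside $\DIS(V_{i-1})$ and $h^*(x)$ inside, observes that the inferred-plus-queried labels form an i.i.d.\ sample from $D_{i-1}$ on which every $h\in V_i$ has zero empirical error, and applies the \emph{realizable} VC bound directly to $\err_{D_{i-1}}(h) = \Pr_{x\sim D_\calX}[h(x)\neq h^*(x) \wedge x\in\DIS(V_{i-1})]$. That identity is exactly the quantity the lemma needs (up to $\DIS(V_i)\subseteq\DIS(V_{i-1})$), with no detour through $\Pr[\DIS(V(T))]$. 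Your label-complexity argument (small diameter of $V_{i-1}$ from pair-class VC, then disagreement coefficient, then Bernstein/Chernoff for the query count) is essentially the same as the paper's and is fine; it is the error guarantee that needs the modified-distribution trick.
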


We now prove Theorem~\ref{thm:realizable}.
By Lemma~\ref{lem:cal} and a union bound, there is an event with
probability at least $1-\sum_{i\geq1} \delta/(i^2+i) \geq 1-\delta$
such that each call to \CAL made by \algr
satisfies the high-probability guarantee from Lemma~\ref{lem:cal}.
We henceforth condition on this event.

We first establish the guarantee on the error rate of a hypothesis
returned by \algr.
By the assumed properties of \LABEL and \SEARCH, and the
properties of \CAL from Lemma~\ref{lem:cal}, the labeled examples
$S$ in \algr are always consistent with
$h^*$.
Moreover, the return property of \CAL implies that at the end of any
loop iteration, with the present values of $S$, $k$, and $\ell$, we
have $\Pr_{(x,y)\sim D}[h(x) \neq y \wedge x \in \DIS(H_k(S))] \leq
2^{-\ell}$ for all $h \in H_k(S)$.
(The same holds trivially before the first loop iteration.)
Therefore, if \algr halts and returns a
hypothesis $h \in H_k(S)$, then there is no counterexample to
$H_k(S)$, and $\Pr_{(x,y)\sim D}[h(x) \neq y \wedge x \in
\DIS(H_k(S))] \leq \epsilon$.
These consequences and the law of total probability imply $\err(h) =
\Pr_{(x,y)\sim D}[ h(x) \neq y \wedge x \in \DIS(H_k(S)) ] \leq
\epsilon$.

We next consider the number of for-loop iterations executed by
\algr.
Let $S_i$, $k_i$, and $\ell_i$ be, respectively, the values of $S$,
$k$, and $\ell$ at the start of the $i$-th for-loop iteration in
\algr.
We claim that if \algr does not halt in the
$i$-th iteration, then one of $k$ and $\ell$ is incremented by at
least one.
Clearly, if there is no counterexample to $H_{k_i}(S_i)$ and
$2^{-\ell_i} > \epsilon$, then $\ell$ is incremented by one
(Step~\ref{step:realizable-halve-error}).
If, instead, there is a counterexample $(x,y)$, then $H_{k_i}(S_i
\cup \{(x,y)\}) = \emptyset$, and hence $k$ is incremented to some
index larger than $k_i$ (Step~\ref{step:realizable-advance-k}).
This proves that $k_{i+1} + \ell_{i+1} \geq k_i + \ell_i + 1$.
We also have $k_i \leq k^*$, since $h^* \in H_{k^*}$
is consistent with $S$, and $\ell_i \leq
\log_2(1/\epsilon)$, as long as \algr does
not halt in for-loop iteration $i$.
So the total number of for-loop iterations is at most $k^* +
\log_2(1/\epsilon)$.
Together with Lemma~\ref{lem:cal}, this bounds the number of
unlabeled examples drawn from $D_\calX$.

Finally, we bound the number of queries to \SEARCH and \LABEL.
The number of queries to \SEARCH is the same as the number of
for-loop iterations---this is at most $k^* + \log_2(1/\epsilon)$.
By Lemma~\ref{lem:cal} and the fact that $V(S' \cup S'') \subseteq
V(S')$ for any hypothesis space $V$ and sets of labeled examples
$S', S''$, the number of \LABEL queries made by \CAL in the $i$-th
for-loop iteration is at most
$\tilde{O}(
  \theta_{k_i}(\epsilon)
  \cdot
  d_{k_i}
  \cdot
  \ell_i^2
  \cdot
  \polylog(i)
)$.
The claimed bound on the number of \LABEL queries made by
\algr now readily follows by taking a
$\max$ over $i$, and using the facts that $i \leq k^*$ and $d_{k'}
\leq d_{k^*}$ for all $k' \leq k$.
\qed

\subsection{An Improved Algorithm}

\algr is somewhat conservative in its use of \SEARCH, interleaving
just one \SEARCH query between sequences of \LABEL queries (from
\CAL).  Often, it is advantageous to advance to higher complexity
hypothesis classes quickly, as long as there is justification to
do so.  Counterexamples from \SEARCH provide such justification,
and a $\bot$ result from \SEARCH also provides useful feedback about
the current version space: outside of its disagreement region, the
version space is in complete agreement with $h^*$ (even if the version
space does not contain $h^*$).  Based on these observations, we
propose an improved algorithm for the realizable setting, which we
call \algrb.  Due to space limitations, we present it in
Appendix~\ref{sec:seabel}.  We prove the following performance
guarantee for \algrb.

\begin{theorem}
  \label{thm:seabel}
  Assume that $\err(h^*) = 0$.
  Let $\theta_k(\cdot)$ denote the disagreement coefficient of $V_i^{k_i}$ at
  the first iteration $i$ in \algrb where $k_i \geq k$.
  Fix any $\epsilon, \delta \in (0,1)$.
  If \algrb is run with inputs hypothesis classes
  $\cbr{H_k}_{k=0}^{\infty}$, oracles \SEARCH and \LABEL, and learning
  parameters $\epsilon,\delta \in (0,1)$, then with probability $1-\delta$:
  \algrb halts and returns a classifier with error rate at most
  $\epsilon$; furthermore, it draws at most $\tilde{O}((d_{k^*} + \log
  k^*)/\epsilon)$ unlabeled examples from $D_\calX$, makes at most $k^* +
  O\del[0]{ \log(d_{k^*}/\epsilon) + \log\log k^* }$ queries to \SEARCH, and at
  most
  $\tilde{O}\del[0]{
    \max_{k \leq k^*} \theta_k(2\epsilon) \cdot
    \del[0]{ d_{k^*} \log^2(1/\epsilon) + \log k^* }
  }$
  queries to \LABEL.
\end{theorem}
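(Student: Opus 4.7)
The plan is to mirror the proof of Theorem~\ref{thm:realizable} while accounting for two refinements in \algrb: (i) a schedule that can advance the complexity index $k$ by more than one step per \SEARCH call (yielding the $\log\log k^*$ term), and (ii) amortization of \LABEL queries across iterations via a single incrementally refined version space rather than independent \CAL invocations (removing the $k^* + \log(1/\epsilon)$ factor from the \LABEL bound). I would first define a good event on which every \CAL-style subroutine invocation satisfies the high-probability guarantee of Lemma~\ref{lem:cal}; assigning failure probability $\delta/\poly(i)$ to the $i$-th call and union bounding keeps the total failure probability below $\delta$.

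For correctness, the invariant is that the accumulated labeled set is consistent with $h^*$ throughout the execution, since both oracles return labels under $h^*$ in the realizable case. At termination, $\SEARCH_{H_{k_i}}(V_i^{k_i}) = \bot$, so every $h \in V_i^{k_i}$ agrees with $h^*$ outside $\DIS(V_i^{k_i})$. Combined with the \CAL-style guarantee $\Pr_{(x,y)\sim D}[h(x)\neq y \wedge x\in \DIS(V_i^{k_i})]\leq \epsilon$, the law of total probability yields $\err(h)\leq\epsilon$, exactly as in the proof of Theorem~\ref{thm:realizable}.

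To bound query counts, I would partition for-loop iterations by the outcome of \SEARCH. Iterations returning a counterexample strictly advance $k$, and there are at most $k^*$ of them, since $h^* \in H_{k^*}$ remains consistent with the maintained labeled set. Among iterations returning $\bot$, a geometric schedule doubling the processed sample size (up to the final size $\tilde{O}(d_{k^*}/\epsilon)$ dictated by Lemma~\ref{lem:cal}) limits the count to $O(\log(d_{k^*}/\epsilon))$; the additional $O(\log\log k^*)$ \SEARCH queries arise from a doubling probe of the class index (trying $k = 1, 2, 4, \ldots$) that pinpoints a sufficiently rich class in $O(\log\log k^*)$ steps. The unlabeled complexity $\tilde{O}((d_{k^*}+\log k^*)/\epsilon)$ then follows from standard VC sample bounds at the terminal scale plus the probe overhead.

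The main obstacle, and the essential improvement over \algr, is the amortization argument for \LABEL. Rather than summing independent \CAL costs, I would show that \algrb effectively executes a single incremental \CAL-style run on the final version space, so Lemma~\ref{lem:cal} applied once gives the $\tilde{O}(\theta \cdot d_{k^*} \log^2(1/\epsilon))$ term, and the additive $\log k^*$ overhead comes from the $O(\log\log k^*)$ probe iterations each contributing $\tilde{O}(\theta \cdot \log k^*)$-type cost. The delicate step is justifying the radius $2\epsilon$ in $\theta_k(2\epsilon)$: after each class advance, the surviving version space must be contained in a $2\epsilon$-ball around some previously-good hypothesis, which follows by combining the prior iteration's \CAL guarantee (bounding error inside the disagreement region) with the triangle inequality for disagreement distances.
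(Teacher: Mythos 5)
Your high-level structure is sound: condition on a good event formed by union-bounding VC/concentration bounds with per-iteration failure probabilities $\delta_i$, maintain the invariant that everything seen is consistent with $h^*$, use the $\bot$-termination of the verification stage to show the version space agrees with $h^*$ outside its disagreement region, and combine with a generalization bound to get $\err(\hat h)\leq\epsilon$. This matches the paper. However, your account of where the $\SEARCH$ bound comes from is wrong, and the invented mechanism would not produce a proof.

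You attribute the $\log\log k^*$ term to ``a doubling probe of the class index (trying $k=1,2,4,\ldots$)'' and to a schedule that can advance $k$ by more than one step per call. Neither is present in \algrb. In the actual algorithm each iteration $i$ runs a verification loop in which every call to $\SEARCH$ either returns a counterexample, which strictly increases $k$ (never past $k^*$, since $h^*\in H_{k^*}$ stays consistent with $S\cup T_i$), or returns $\bot$, which exits the loop. Thus the total number of counterexample-returning calls is at most $k^*$, and there is exactly one $\bot$-returning call per iteration, giving a total of $k^*+I$ calls where $I$ is the number of iterations. The $\log\log k^*$ term comes entirely from the bound on $I$: the halting condition $\sigma_{k_i}(2^i,\delta_{i,k_i})\leq\epsilon$ combined with Fact~\ref{fact:sigma} yields $2^I=\tilde{O}\bigl((d_{k^*}\log(1/\epsilon)+\log(k^*/\delta))/\epsilon\bigr)$, hence $I=O(\log(d_{k^*}/\epsilon)+\log\log(k^*/\delta))$. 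The $\log\log k^*$ traces back to the confidence allocation $\delta_{i,k}=\delta/(i(i+1)(k+1)(k+2))$ inside the VC term, not to any adaptive probing of $k$. Your proposed probe mechanism also contradicts the problem setting: \algrb cannot ``try'' a class $H_k$ without a consistency certificate; it only advances $k$ when $\SEARCH$ or the emptiness of $H_k(S\cup T_i)$ forces it to.

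Your \LABEL argument is closer but imprecise on the same point. The paper does not ``apply Lemma~\ref{lem:cal} once''; \algrb does not call $\CAL$ at all. Instead it redraws $2^{i+1}$ fresh examples at each iteration, forms $T_{i+1}$ by \SAL, and bounds the per-iteration \LABEL count by $\tilde{O}(2^i\cdot\theta_{k_i}\cdot\sigma_{k^*}(2^i,\delta_{i,k^*}))$ after showing any two hypotheses in $V_i^{k_i}$ are within disagreement distance $2\sigma_{k^*}(2^i,\delta_{i,k^*})$ of each other (triangle inequality on $\err(\cdot,D_{i-1})$). Summing over $i\leq I$ then gives the claimed bound, with the $\log k^*$ arising from $\log(1/\delta_{i,k^*})$ inside $\sigma_{k^*}$, not from a separate overhead attached to probe iterations. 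The scale in the disagreement coefficient is justified by the lower bound $\sigma_{k^*}(2^i,\delta_{i,k^*})\geq\epsilon/2$ for $i\leq I$, not by a post-advance $2\epsilon$-ball argument.
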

It is not generally possible to directly compare Theorems~\ref{thm:realizable}
and~\ref{thm:seabel} on account of the algorithm-dependent disagreement
coefficient bounds.
However, in cases where these disagreement coefficients are comparable (as in
the union-of-intervals example), the \SEARCH
complexity in Theorem~\ref{thm:seabel} is slightly higher (by additive log
terms), but the \LABEL complexity is smaller than that from
Theorem~\ref{thm:realizable} by roughly a factor of $k^*$.
For the union-of-intervals example, \algrb
would learn target union of $k^*$ intervals with $k^* + O(\log(k^*/\epsilon))$
queries to \SEARCH and $\tilde{O}((k^*)^2 \log^2(1/\epsilon))$ queries to
\LABEL.

\section{Non-Realizable Case}
\label{sec:agnostic}

In this section, we consider the case where the optimal hypothesis
$h^*$ may have non-zero error rate, i.e., the non-realizable (or
agnostic) setting.
In this case, the algorithm \algr, which was designed for the
realizable setting, is no longer applicable.
First, examples obtained by \LABEL and \SEARCH are of different
quality: those returned by \SEARCH always agree with $h^*$, whereas
the labels given by \LABEL need not agree with $h^*$.
Moreover, the version spaces (even when $k=k^*$) as defined by \algr
may always be empty due to the noisy labels.

Another complication arises in our SRM setting that differentiates it
from the usual agnostic active learning setting.  When working with a
specific hypothesis class $H_k$ in the nested sequence, we may observe
high error rates because (i) the finite sample error is too high (but
additional labeled examples could reduce it), or (ii) the current
hypothesis class $H_k$ is impoverished.
% (i.e., $k < k^*$).
In case (ii), the best hypothesis in $H_k$ may have a much larger
error rate than $h^*$, and hence lower bounds~\citep{Kaariainen06}
imply that active learning on $H_k$ instead of $H_{k^*}$ may be
substantially more difficult.

These difficulties in the SRM setting are circumvented by an algorithm that
adaptively estimates the error of $h^*$.
The algorithm,
\alga (Algorithm~\ref{alg:budgetalarch}), is presented in
Appendix~\ref{sec:alarch}.
\begin{theorem}
Assume $\err(h^*) = \nu$. Let $\theta_k(\cdot)$ denote the disagreement coefficient of $V_i^{k_i}$ at
the first iteration $i$ in \alga where $k_i \geq k$. 
Fix any $\epsilon, \delta \in (0,1)$.
If \alga is run with inputs hypothesis classes $\cbr{H_k}_{k=0}^{\infty}$,
  oracles \SEARCH and \LABEL, learning parameter $\delta$, and
  unlabeled example budget $\tilde{O}((d_{k^*} + \log k^*)(\nu +
  \epsilon)/\epsilon^2)$,
then with probability $1-\delta$:
\alga returns a classifier with error rate
  $\leq\nu + \epsilon$; it makes at most $k^* +
  O\del[0]{ \log(d_{k^*}/\epsilon) + \log\log k^* }$ queries to \SEARCH, and
  $\tilde{O}\del[0]{
    \max_{k \leq k^*} \theta_k(2\nu + 2\epsilon) \cdot
    \del[0]{ d_{k^*} \log^2(1/\epsilon) + \log k^* } \cdot
    \del[0]{ 1 + \nu^2/\epsilon^2 }
  }$
  queries to \LABEL.
\label{thm:alarch}
\end{theorem}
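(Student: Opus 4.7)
The plan is to lift the analysis of \algrb (Theorem~\ref{thm:seabel}) to the non-realizable setting by replacing ``consistency with the labeled sample'' with ``empirical error within a data-dependent slack,'' and then layering on an adaptive estimate of $\nu$ that tells the algorithm when its current class $H_k$ is impoverished (case (ii) above) versus merely under-sampled (case (i)). I would first define a single good event that ensures: (a) a uniform Bernstein/VC concentration of $\err(h,S) - \err(h)$ for every $h$ in each $H_k$ visited, with confidence $\delta/(i^2+i)$ per iteration so that a union bound over iterations costs only $\delta$; (b) every \SEARCH call on a version space $V \subseteq H_k$ returns either $\bot$ or a pair $(x, h^*(x))$ whose label is consistent with $h^*$ (this is automatic from the oracle definition); and (c) the adaptive estimate $\hat\nu$ maintained by \alga satisfies $|\hat\nu - \nu| \leq \epsilon/8$ (say) whenever enough labels have been drawn. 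I will condition on this good event for the remainder.

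Next I would establish \emph{safety}: on the good event, $h^*$ is never eliminated from any version space $V_i^{k_i}$ with $k_i \geq k^*$. For the \LABEL-based pruning this follows from the concentration in (a), since the slack on empirical error is always chosen to dominate the deviation of $\err(h^*, S)$ from $\nu$; for the \SEARCH-based pruning this follows from (b), because \SEARCH returns labels agreeing with $h^*$. Safety, combined with the stopping rule based on the estimated excess error $\hat\nu + \epsilon$, then gives the claimed error guarantee: when \alga returns, every surviving hypothesis has empirical error within slack of $\hat\nu$, which transfers to $\err(h) \leq \nu + \epsilon$ by (a) and (c).

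For the query bounds, I would reuse the SEARCH-counting structure from the proof of Theorem~\ref{thm:seabel}: each \SEARCH call either produces a counterexample, forcing $k$ to strictly increase (this can happen at most $k^*$ times because $h^* \in H_{k^*}$ is consistent with all counterexamples), or returns $\bot$ and triggers a doubling-style refinement step. The refinement stage contributes $O(\log(d_{k^*}/\epsilon) + \log\log k^*)$ additional \SEARCH calls, as in \algrb. For \LABEL queries I would invoke the standard agnostic disagreement-based analysis on each ``phase'' (fixed $k_i$, target excess error parameter): the number of labels needed to shrink the version space to accuracy $\epsilon$ around $h^*$ is $\tilde{O}(\theta_{k_i}(2\nu + 2\epsilon)\, d_{k^*} \log^2(1/\epsilon)\,(1+\nu^2/\epsilon^2))$, where the $(1+\nu^2/\epsilon^2)$ factor is the familiar Hanneke/Dasgupta-Hsu-Monteleoni agnostic overhead from Bernstein-based slack, and the $\log k^*$ term absorbs the cost of the \SEARCH-triggered refinements. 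Maximizing the disagreement coefficient over $k \leq k^*$ and summing over the $O(k^* + \log(d_{k^*}/\epsilon) + \log\log k^*)$ phases yields the bound in the theorem.

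The main obstacle, in my view, is handling the interplay between the SRM index $k$ and the adaptive estimate $\hat\nu$: while $k < k^*$, the ``best in class'' error exceeds $\nu$, so the concentration in (a) alone does not prevent the algorithm from over-pruning or from under-estimating $\nu$, and one must argue that \SEARCH is guaranteed to deliver a counterexample (advancing $k$) before too many \LABEL queries are wasted at that level. I would address this by showing that whenever the empirical best error in $H_k$ exceeds $\hat\nu + \epsilon/2$, the version space at level $k$ cannot be empty of disagreement with $h^*$ on the input distribution (otherwise every surviving $h$ would agree with $h^*$ almost everywhere, contradicting its large error), and hence \SEARCH must return a counterexample on the next call. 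This is the one step where the agnostic analysis genuinely departs from the realizable proof, and it is what justifies charging each $k$-increment to a single \SEARCH query rather than to an unbounded number of \LABEL queries.
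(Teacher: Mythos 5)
There is a genuine gap in your handling of the ``$k < k^*$ burns too many labels'' problem, and it is the central difficulty of the agnostic case. You claim that when the empirical best error in $H_k$ exceeds $\hat\nu + \epsilon/2$, the version space ``cannot be empty of disagreement with $h^*$'' and hence ``\SEARCH must return a counterexample on the next call.'' This does not follow. \SEARCH only returns an example that is a \emph{systematic} mistake, i.e., a single point $x$ on which \emph{every} $h \in V$ disagrees with $h^*$. Your observation shows only that \emph{some} $h \in V$ disagrees with $h^*$ on a non-negligible region; different surviving hypotheses can disagree with $h^*$ on disjoint regions, so there may be no common counterexample and \SEARCH can legitimately return $\bot$ while every hypothesis in $V \subseteq H_k$ has error far above $\nu$. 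In that situation your algorithm would sit at level $k$ issuing \LABEL queries with no mechanism to advance, and the number of labels needed to further shrink $V$ scales with the (potentially much larger) best-in-class error at level $k$, precisely the lower-bound obstruction the paper flags. Charging each $k$-increment to a \SEARCH query therefore does not go through.

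The paper's algorithm \alga avoids this by not relying on \SEARCH to detect impoverishment at all. In each iteration it first computes an SRM estimate $\gamma_{i-1} = \min_{k' \geq k_{i-1}, h \in H_{k'}}\{\err(h,T_i) + \text{slack}\}$, which (on the good event) sandwiches $\err(h^*, D_{i-1})$; then in the verification stage it runs an explicit empirical test (Step~\ref{step:check-error-alarch}): if $\min_{h \in H_k(S)} \err(h,T_i)$ exceeds $\gamma_{i-1}$ plus slack, $k$ is advanced by \UVS \emph{without} calling \SEARCH. \SEARCH is invoked only after this test passes, so its role is limited to catching systematic mistakes in an already-low-error version space. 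Claim~\ref{claim:alarch-helper} then shows that at $k = k^*$ neither the error test nor \SEARCH can fire, giving the $k \leq k^*$ invariant and the $k^* + I$ bound on \SEARCH calls. Your plan also leaves the construction of the adaptive estimate $\hat\nu$ unspecified; the SRM minimization over $k' \geq k_{i-1}$ is what makes the estimate valid even before the algorithm has reached $k^*$. Finally, a smaller technical point: the paper's error guarantee is proved by comparing $\err(\hat h, D)$ to $\err(\hat h, D_{I-1})$ via the bias invariant (invariant 5 of Lemma~\ref{lem:invkagnostic}); the distribution $D_{i}$, which mixes inferred and true labels, is where the ``favorable bias'' structure lives, and your outline does not introduce this object or its invariant, which is needed to transfer the empirical stopping condition back to the true distribution $D$.
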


The proof is in Appendix~\ref{sec:alarch}.
The \LABEL query
complexity is at least a factor of $k^*$ better
than that in~\cite{Hanneke11}, and sometimes exponentially better
thanks to the reduced disagreement coefficient of the version space
when consistency constraints are incorporated.

\subsection{\algaa: an Opportunistic Anytime Algorithm}
In many practical scenarios, termination conditions based on
quantities like a target excess error rate $\epsilon$ are undesirable.
The target $\epsilon$ is unknown, and we instead prefer an
algorithm that performs as well as possible until a cost budget is
exhausted.
Fortunately, when the primary cost being considered are \LABEL
queries, there are many \LABEL-only active learning algorithms that
readily work in such an ``anytime'' setting~\citep[see, e.g.,][]{DHM07,hanneke15}.

The situation is more complicated when we consider both \SEARCH
and \LABEL: we can often make substantially more progress with
\SEARCH queries than with \LABEL queries (as the error rate of the
best hypothesis in $H_{k'}$ for $k' > k$ can be far lower than in
$H_k$).
\algaa (Algorithm~\ref{alg:aalarch}) shows that
although these queries come at a higher cost, the cost can be
amortized.

\begin{algorithm}[h]
  \caption{\algaa}
  \label{alg:aalarch}

  \begin{algorithmic}[1]

    \REQUIRE Nested hypothesis set $H_0 \subseteq H_1 \subseteq \dotsb$; oracles
    $\LABEL$ and $\SEARCH$; learning parameter $\delta \in (0,1)$;
    $\SEARCH$-to-$\LABEL$ cost ratio $\tau$, dataset size upper bound $N$.

    \ENSURE hypothesis $\tilde{h}$.

    \STATE Initialize: consistency constraints $S \gets \emptyset$, counter $c \gets 0$, $k \gets 0$, verified labeled dataset $\tilde{L} \gets \emptyset$,
    working labeled dataset $L_0 \gets \emptyset$,
    unlabeled examples processed $i \gets 0$, $V_i \gets H_k(S)$.

    \LOOP

          \STATE Reset counter $c \gets 0$.

          \REPEAT
          \label{step:aalarch-start-repeat}

          \IF{$\EC(V_i, L_i, \delta_i)$ }
          \label{step:aalarch-err}

            \STATE $(k, S, V_i) \gets \UVS(k, S, \emptyset)$
            \label{step:aalarch-err-uvs}

            \STATE $V_i \gets \PVS(V_i, \tilde{L}, \delta_i)$

            \STATE $L_i \gets \tilde{L}$

            \STATE \textbf{continue loop}

          \ENDIF

          \STATE $i \gets i + 1$
          \label{step:aalarch-inc-i}

          \STATE $(L_i, c) \gets \SAL(V_{i-1}, \LABEL, L_{i-1}, c)$
          \label{step:aalarch-label-query}

          \STATE $V_i \gets \PVS(V_{i-1}, L_i, \delta_i)$

          %\STATE $V_i \gets \PVS(V_{i-1}, L_i, \delta_i)$

		      %\STATE Let $l_i \gets |L_i|$.
          %  \[
          %    \kern-35pt
          %    V_i \gets \cbr[2]{h \in V_{i-1}: \err(h, L_i) \leq \min_{h' \in H_k(S)} \err(h', L_i) + 2\sqrt{\err(h', L_i)\sigma_{k}(l_i, \delta_{i,k})} + 3\sigma_{k}(l_i, \delta_{i,k}) }.
          %  \]
            \label{step:aalarch-update-version-space}

            %\STATE $(\hat{h}_i, \hat{k}_i) \gets \arg\min_{k' \geq k, h \in H_{k'}}\cbr{\err(h, L_i) + 2\sqrt{\err(h, L_i)\sigma_{k'}(l_i, \delta_{i,k'})} + 3\sigma_{k'}(l_i, \delta_{i,k'})}$
            %\label{alg:algaa:line8}

		    %\STATE $\gamma_i \gets \err(\hat{h}_i, L_i) + 2\sqrt{\err(\hat{h}_i, L_i)\sigma_{\hat{k}_i}(l_i, \delta_{i,\hat{k}_i})} + 3\sigma_{\hat{k}_i}(l_i, \delta_{i,\hat{k}_i})$

          %\STATE $\gamma_i \gets \min_{k' \geq k, h \in H_{k'}}\cbr[1]{\err(h, L_i) + 2\sqrt{\err(h, L_i)\sigma_{k'}(l_i, \delta_{i,k'})} + 3\sigma_{k'}(l_i, \delta_{i,k'})}$
          %\label{alg:algaa:line8}

		      %\IF{$\min_{h \in H_k(S)} \err(h, L_i) > \gamma_i + 2\sqrt{\gamma_i \sigma_k(l_i, \delta_{i,k})} + 3\sigma_k(l_i, \delta_{i,k})$}
          %\label{alg:algaa:line10}

		      %\STATE $k \gets \min\cbr{k' > k: H_{k'}(S) \neq \emptyset}$
		      %\label{alg:algaa:line11}

		      %\STATE $V_{i-1} \gets H_k(S)$

          \label{alg:algaa:line11}

          %\ENDIF

            %\STATE Unverified labeled dataset $U \gets \emptyset$

            %\STATE $l_i \gets |L_i|$

			%\STATE Initialize new version space:
	        %\[
            %  \kern-38pt
            %  V_i \gets \cbr{h \in H_k(S): \err(h, L) \leq \min_{h' \in H_k(S)} \err(h', L) + 3\sqrt{\err(h', L)\sigma_{k}(|L|, \delta_{i,k})} + 4\sigma_{k}(|L|, \delta_{i,k}) }
            %\]

			%\cz{If \SEARCH returns $\bot$ in line 24 after this break, it is still fine to include the new set of examples collected (line 29), since it is favorably biased.}

		    %\ELSE

          \UNTIL{$c = \tau$ \OR $l_i = N$}
          \label{step:aalarch-end-repeat}

          \STATE $e \gets \SEARCH_{H_k}(V_i)$

            \IF{$e \neq \bot$}
            \label{step:aalarch-seed}

              %\STATE $S \gets S \cup \cbr{e}$

              %\STATE $k \gets \min\cbr{k' > k: H_{k'}(S) \neq \emptyset}$
		%\label{alg:algaa:line28}

              %\STATE $V_{i-1} \gets H_k(S)$

              \STATE $(k, S, V_i) \gets \UVS(k, S, \cbr{e})$
              \label{step:aalarch-seed-uvs}

              \STATE $V_i \gets \PVS(V_i, \tilde{L}, \delta_i)$

              \STATE $L_i \gets \tilde{L}$

              %\STATE Insert $U$ back to unlabeled example oracle $O$ (conceptually a stack). \cz{Maybe simply $U \gets \emptyset$ for ease of analysis.}

            \ELSE
                \label{step:aa-verify}

                \STATE Update verified dataset $\tilde{L} \gets L_i$.
                \label{step:aa-storeverified}

                \STATE Store temporary solution $\tilde{h} = \argmin_{h' \in V_i} \err(h', \tilde{L})$.

            \ENDIF

    \ENDLOOP

  \end{algorithmic}

\end{algorithm}

\algaa relies on several subroutines: $\SAL$, $\EC$, $\PVS$ and $\UVS$
(Algorithms~\ref{alg:sal},~\ref{alg:ec}, ~\ref{alg:pvs}, and~\ref{alg:uvs}).
The detailed descriptions are deferred to
Appendix~\ref{appendix-amortized}.
$\SAL$ performs standard disagreement-based selective sampling
using oracle $\LABEL$; labels of examples in the disagreement
region are queried, otherwise inferred. $\PVS$ prunes the version
space given the labeled examples collected, based on standard
generalization error bounds. $\EC$ checks if the best
hypothesis in the version space has large error;
$\SEARCH$ is used to find a systematic mistake for the version space;
if either event happens, \algaa calls $\UVS$ to increase $k$, the level of our working
hypothesis class.

%\begin{theorem}
%Assume $\err(h^*) = \nu$. Let $\theta_{k'}(\cdot)$ denote the disagreement coefficient of $V_i$ at
%the first iteration $i$ after which $k \geq k'$.
%Let $\epsilon > 0$,
%$n_\epsilon = \tilde{O}(\max_{k \leq k^*}\theta_k(2\nu + 2\epsilon) d_{k^*} (1 + \nu^2/\epsilon^2))$,
%$C_\epsilon = 2(n_\epsilon + (k^*+1)\tau)$.
%Algorithm~\ref{alg:aalarch} has inputs hypothesis classes $\cbr{H_k}_{k=0}^{\infty}$,
% oracles $\LABEL$ and $\SEARCH$, learning parameter $\delta$, \SEARCH-to-\LABEL cost ratio $\tau$,
% dataset size upper bound $N = \tilde{O}(d_{k^*}/\epsilon^2)$.
%Then with probability $1-\delta$, if at some step $i$, the cost spent is at least $C_\epsilon$, then the
%hypothesis $\tilde{h}$ has error at most $\nu + \epsilon$.
%\label{thm:amortized}
%\end{theorem}

\begin{theorem}
Assume $\err(h^*) = \nu$. Let $\theta_{k'}(\cdot)$ denote the disagreement coefficient of $V_i$ at
the first iteration $i$ after which $k \geq k'$.
Fix any $\epsilon \in (0,1)$. Let
$n_\epsilon = \tilde{O}(\max_{k \leq k^*}\theta_k(2\nu + 2\epsilon) d_{k^*} (1 + \nu^2/\epsilon^2))$
and define $C_\epsilon = 2(n_\epsilon + k^*\tau)$.
Run Algorithm~\ref{alg:aalarch} with a nested sequence of hypotheses $\cbr{H_k}_{k=0}^{\infty}$,
oracles $\LABEL$ and $\SEARCH$,
confidence parameter $\delta$,
cost ratio $\tau \geq 1$,
and upper bound
$N = \tilde{O}(d_{k^*}/\epsilon^2)$.
If the cost spent is at least $C_\epsilon$,
then with probability $1-\delta$, the current hypothesis $\tilde{h}$ has error at most $\nu + \epsilon$.
\label{thm:amortized}
\end{theorem}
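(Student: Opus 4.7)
The plan is to reduce the analysis to the guarantee already established for \alga in Theorem~\ref{thm:alarch}, and then account for the amortization between \SEARCH and \LABEL cost induced by the $\tau$-bounded \texttt{repeat-until} block. First, I would argue that, conditioned on the high-probability events defining the correctness of \PVS and \EC (which hold for all iterations simultaneously up to total probability $\delta$ by a union bound over the geometric schedule of $\delta_i$), the following invariants hold throughout the run of \algaa: (i) $h^* \in V_i$ at every iteration $i$; (ii) whenever \SEARCH returns $\bot$ on $V_i$, the version space has no systematic mistake, so $\tilde{h}$ stored in line~\ref{step:aa-storeverified} has error on $\DIS(V_i)$ bounded in terms of the current sample size via the standard generalization bound underlying \PVS; (iii) each call to $\UVS$ (either from the \EC branch on line~\ref{step:aalarch-err-uvs} or the \SEARCH branch on line~\ref{step:aalarch-seed-uvs}) strictly increases $k$, so either branch is invoked at most $k^*$ times total.

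Next, I would import the quantitative guarantee of Theorem~\ref{thm:alarch}: once $k = k^*$ and the cumulative number of \LABEL queries reaches $n_\epsilon = \tilde{O}(\max_{k\le k^*}\theta_k(2\nu + 2\epsilon) d_{k^*}(1 + \nu^2/\epsilon^2))$, standard disagreement-based selective sampling (which is exactly what $\SAL \circ \PVS$ implements) produces a version space $V_i$ whose empirical risk minimizer has excess error at most $\epsilon$. At this point, the next \SEARCH call must return $\bot$ (by invariant (i) and because $h^*$ is then consistent with the verified sample), triggering lines~\ref{step:aa-storeverified} and the assignment $\tilde{h}\gets \arg\min_{h'\in V_i}\err(h',\tilde{L})$ with error $\leq \nu + \epsilon$.

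The heart of the proof is then the amortization accounting. Each full traversal of the outer \texttt{LOOP} spends either (a) at most $\tau$ \LABEL queries followed by exactly one \SEARCH query, or (b) zero \LABEL queries when the \EC branch fires at the top and restarts via \textbf{continue loop}. Case (b) happens at most $k^*$ times by invariant (iii), contributing at most $k^*\tau$ cost from the forthcoming \SEARCH in those iterations; but more cleanly, I would charge all \SEARCH queries into two buckets: the at most $k^*$ that return a counterexample (handled by invariant (iii)), and the remainder, each of which is preceded by exactly $\tau$ \LABEL queries that survived the \EC check. Hence if $L$ is the total number of \LABEL queries and $Q$ is the total number of \SEARCH queries, then $Q \leq k^* + L/\tau$, and the total cost is
\[
L + \tau Q \;\leq\; L + \tau k^* + L \;=\; 2L + \tau k^*.
\]
By the previous paragraph, once $L \geq n_\epsilon$ the algorithm has produced $\tilde h$ with error $\leq \nu+\epsilon$ at the next $\bot$-returning \SEARCH; this happens within one additional block, contributing at most one further $(\tau, \tau)$ pair. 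Therefore the cost expended by the time such a $\tilde h$ is stored is at most $2n_\epsilon + 2\tau k^* = C_\epsilon$, so any budget $\geq C_\epsilon$ suffices.

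The main obstacle I expect is case (b): the \EC branch can interrupt a \texttt{repeat-until} block before it accumulates $\tau$ \LABEL queries, so one has to verify that such interruptions do not decouple the \SEARCH count from the \LABEL count. The clean way to handle this is the bucketing above, charging interrupted blocks' \SEARCH cost to the $k^*$ budget via invariant (iii), rather than to the \LABEL count. A secondary subtlety is ensuring that the Theorem~\ref{thm:alarch} bound $n_\epsilon$ actually governs the \LABEL work done by \algaa, despite the different control flow; this follows because the pair $(\SAL, \PVS)$ inside the \texttt{repeat-until} is exactly the selective-sampling primitive analyzed in Theorem~\ref{thm:alarch}, and all \SEARCH-induced \UVS operations can only shrink or advance the version space in the same manner as \alga.
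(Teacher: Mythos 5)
Your amortization bookkeeping (charging each $\SEARCH$ query either to the budget of at most $k^*$ $k$-increments, or to the preceding block of $\tau$ $\LABEL$ queries, giving $Q \leq k^* + L/\tau$ and cost $\leq 2L + \tau k^*$) matches the structure of the paper's counting via verified/unverified iterations and the quantities $W_k, Y_k$, and your claim that $\UVS$ is invoked at most $k^*$ times is indeed the content of Lemma~\ref{lem:k}. However, there are two genuine gaps.

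First, your invariant (i) --- that $h^* \in V_i$ at every iteration --- is false in general. While $k < k^*$ we have $V_i \subseteq H_k$, and $h^* \in H_{k^*}$ need not lie in $H_k$. The paper only establishes $h^* \in V_i$ once $k$ has reached $k^*$ (Claim~\ref{claim:kstar}), and the error guarantee for $\tilde h$ at earlier levels $k < k^*$ is obtained by a different route (Lemma~\ref{lemma:errh}, which does not require $h^* \in V_i$). Likewise your statement that the next $\SEARCH$ must return $\bot$ ``by invariant (i)'' is circular for $k < k^*$ and is not how the algorithm makes progress.

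Second, and more fundamentally, the import of Theorem~\ref{thm:alarch} as a black box is not valid, and this is exactly where the real work lies. The dataset $L_i$ maintained by \algaa is not an i.i.d.~sample from any fixed induced distribution $D_i$ the way $T_{i+1}$ is in \alga: it is accumulated one example at a time, the inference rule ($V(x)$ for $x \notin \DIS(V)$) changes as $V$ shrinks, and crucially some of the inferred labels are obtained while the version space is \emph{unverified} --- before a $\bot$-returning $\SEARCH$ has certified that the agreement region is consistent with $h^*$. Such labels can be systematically wrong in a way that hurts $h^*$. The paper handles this by maintaining the verified dataset $\tilde L$, reverting $L_i \gets \tilde L$ whenever $k$ is incremented, bounding $i \leq l_i + k^* N$ (Lemma~\ref{lem:k}), and proving the ``favorable bias'' property (Lemma~\ref{lem:favbias}) which then feeds Lemma~\ref{lem:hsgood} and Lemma~\ref{lemma:errh}; concentration is via Freedman's inequality for martingales rather than the i.i.d.~VC bound used for \alga. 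Your observation that ``the pair $(\SAL, \PVS)$ is exactly the selective-sampling primitive analyzed in Theorem~\ref{thm:alarch}'' does not account for any of this, so the conclusion that $n_\epsilon$ $\LABEL$ queries suffice cannot simply be lifted from \alga. To close the gap you would need to reproduce (or cite) the favorable-bias and verified-dataset machinery of Appendix~\ref{appendix-amortized}.
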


The proof is in
Appendix~\ref{appendix-amortized}.
A comparison to
Theorem~\ref{thm:alarch} shows that \algaa is
adaptive: for any cost complexity $C$, the excess error rate $\epsilon$ is
roughly at most twice that achieved by \alga.

\section{Discussion}

The \SEARCH oracle captures a powerful form of interaction
that is useful for machine learning.
Our theoretical analyses of \algr and variants demonstrate that \SEARCH can
substantially improve \LABEL-based active learners, while being
plausibly cheaper to implement than oracles like \CCQ.

%Unclear from this analysis is whether or not there is a gap in the
%power between \CCQ and \SEARCH.  
Are there examples where \CCQ is substantially more powerful than \SEARCH?  
This is a key question,
because a good active learning system should use minimally powerful
oracles.
Another key question is:
%\algr, \alga, and
%\algaa are designed to prove that \SEARCH can effectively
%assist \LABEL via seeds and counterexamples rather than practical
%algorithms.  
Can the benefits of \SEARCH be provided in a
computationally efficient general purpose manner?%  An efficient
%algorithm is essentially required for any practical use.

%\newpage

\bibliographystyle{plainnat}
\bibliography{alsearch}

\appendix
\onecolumn

\section{Basic Facts and Notations Used in Proofs}

\subsection{Concentration Inequalities}

\begin{lemma}[Bernstein's Inequality]
Let $X_1, \ldots, X_n$ be independent zero-mean random variables. Suppose that $|X_i| \leq M$ almost surely. Then for all positive $t$,
\[
  \Pr\sbr{\sum_{i=1}^n X_i > t}
  \ \leq \
  \exp\del{-\frac{t^2/2}{\sum_{j=1}^n \E[X_j^2] + Mt/3} }
  \,.
\]
\end{lemma}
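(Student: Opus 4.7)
The plan is to prove Bernstein's Inequality via the standard Chernoff-moment-generating-function argument, optimizing the free parameter at the end. Since the statement is the classical one-sided tail bound for bounded independent zero-mean variables, I would follow the textbook route and not attempt anything clever.

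First, I would apply the exponential Markov inequality: for any $\lambda > 0$,
\[
  \Pr\sbr{\sum_{i=1}^n X_i > t}
  \ = \ \Pr\sbr{e^{\lambda \sum_i X_i} > e^{\lambda t}}
  \ \leq \ e^{-\lambda t} \prod_{i=1}^n \E\sbr{e^{\lambda X_i}},
\]
using independence in the last step. The main technical step is to control each $\E[e^{\lambda X_i}]$. Here I would use the Taylor expansion $e^{\lambda x} = 1 + \lambda x + \sum_{j \geq 2} (\lambda x)^j/j!$ and the moment bound $\E[X_i^j] \leq \E[X_i^2] M^{j-2}$ for $j \geq 2$, which follows from $|X_i| \leq M$ and $\E[X_i] = 0$. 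Summing the resulting geometric-type series yields
\[
  \E\sbr{e^{\lambda X_i}}
  \ \leq \ 1 + \E[X_i^2] \cdot \frac{e^{\lambda M} - 1 - \lambda M}{M^2}
  \ \leq \ \exp\del{ \E[X_i^2] \cdot \frac{e^{\lambda M} - 1 - \lambda M}{M^2} },
\]
using $1 + u \leq e^u$.

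Setting $\sigma^2 := \sum_i \E[X_i^2]$ and combining across $i$, the bound becomes
\[
  \Pr\sbr{\sum_{i=1}^n X_i > t}
  \ \leq \ \exp\del{ -\lambda t + \frac{\sigma^2}{M^2}\del{e^{\lambda M} - 1 - \lambda M} }.
\]
The final step is the optimization over $\lambda > 0$. I would not optimize exactly (which yields a logarithmic expression) but instead use the standard convenient upper bound $e^{u} - 1 - u \leq u^2 / (2(1 - u/3))$ for $0 \leq u < 3$, valid by comparing Taylor coefficients. Applied with $u = \lambda M$, this gives an exponent of $-\lambda t + \tfrac{\lambda^2 \sigma^2}{2(1 - \lambda M/3)}$. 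Choosing $\lambda = t / (\sigma^2 + Mt/3)$ (which keeps $\lambda M/3 < 1$) produces the claimed bound $\exp(-t^2 / (2\sigma^2 + 2Mt/3))$ after algebraic simplification.

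The main obstacle, such as there is one, is the moment-bound lemma $e^u - 1 - u \leq u^2/(2(1-u/3))$; this is a routine but slightly fiddly calculus fact that I would verify by comparing the series $\sum_{j \geq 2} u^j/j!$ term-by-term against $\tfrac{u^2}{2} \sum_{j \geq 0} (u/3)^j$, using $j! \geq 2 \cdot 3^{j-2}$ for $j \geq 2$. Everything else is mechanical.
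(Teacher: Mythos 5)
Your proposal is the standard Chernoff--MGF proof of Bernstein's inequality and it is correct: the moment bound $\E[X_i^j] \leq \E[X_i^2] M^{j-2}$, the elementary estimate $e^u - 1 - u \leq u^2/(2(1-u/3))$ via $j! \geq 2\cdot 3^{j-2}$, and the choice $\lambda = t/(\sigma^2 + Mt/3)$ all check out and yield exactly the stated exponent $-\tfrac{t^2/2}{\sum_j \E[X_j^2] + Mt/3}$. The paper states this lemma as a classical fact without proof, so there is nothing to compare against; your argument is the standard one and fills that role correctly.
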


\begin{lemma}
Let $Z_1, \ldots, Z_n$ be independent Bernoulli random variables with mean $p$. Let $\bar{Z} = \frac{1}{n}\sum_{i=1}^n Z_i$.
Then with probability $1-\delta$,
\[ \bar{Z} \ \leq \ p + \sqrt{\frac{2p\ln(1/\delta)}{n}} +
\frac{2\ln(1/\delta)}{3n} \,. \]
\label{lem:invbern}
\end{lemma}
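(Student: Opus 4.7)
The plan is to deduce this Bernstein-type bound for Bernoulli averages directly from the general Bernstein inequality stated just above as Lemma~1. Define the centered variables $X_i := Z_i - p$ for $i = 1, \ldots, n$. These are independent, zero-mean, and bounded by $|X_i| \leq 1$ almost surely (so $M = 1$), and their second moments satisfy $\E[X_i^2] = p(1-p) \leq p$, so $\sum_{j=1}^n \E[X_j^2] \leq np$. Noting that $\sum_{i=1}^n X_i = n(\bar{Z} - p)$, Bernstein's inequality gives, for every $t > 0$,
\[
  \Pr\sbr{\bar{Z} - p > t/n}
  \ \leq \ \exp\del{-\frac{t^2/2}{np + t/3}}.
\]

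The main step is then to choose $t$ so that the right-hand side is at most $\delta$, and read off the resulting bound on $\bar{Z}$. Setting the exponent's magnitude to be at least $\ln(1/\delta)$ is equivalent to the quadratic inequality
\[
  t^2 - \frac{2\ln(1/\delta)}{3}\, t - 2np\ln(1/\delta) \ \geq \ 0.
\]
Rather than solving this exactly, I would use the standard subadditivity trick $\sqrt{a+b} \leq \sqrt{a} + \sqrt{b}$ (for $a,b \geq 0$) on the positive root of the quadratic to show that the choice
\[
  t \ := \ \sqrt{2np\ln(1/\delta)} + \tfrac{2\ln(1/\delta)}{3}
\]
already satisfies the inequality. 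Concretely, with $b = \tfrac{2\ln(1/\delta)}{3}$ and $c = 2np\ln(1/\delta)$, the positive root is $\tfrac{1}{2}\bigl(b + \sqrt{b^2 + 4c}\bigr)$, and one checks $\sqrt{b^2 + 4c} \leq b + 2\sqrt{c}$ by squaring, so $t \geq b + \sqrt{c}$ suffices.

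Plugging this $t$ into the Bernstein bound yields $\Pr[\bar{Z} - p > t/n] \leq \delta$, and dividing $t$ by $n$ gives exactly the stated expression $\sqrt{2p\ln(1/\delta)/n} + 2\ln(1/\delta)/(3n)$. There is no real obstacle here; the only step that requires a moment's care is the subadditivity manipulation that replaces the exact quadratic root by the clean sum-of-two-terms upper bound, since writing the answer in the form stated in the lemma (rather than as the unwieldy quadratic-formula expression) is the whole content of the calculation.
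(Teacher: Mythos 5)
Your proof is correct and follows exactly the route the paper intends: center the $Z_i$, apply the Bernstein inequality stated just above (the paper's own proof is literally "let $X_i = Z_i - p$ \dots the lemma follows from Bernstein's Inequality and algebra"), and your quadratic-root/subadditivity calculation is a valid filling-in of that "algebra."
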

\begin{proof}
Let $X_i = Z_i - p$ for all $i$, note that $|X_i| \leq 1$. The lemma follows from Bernstein's Inequality and algebra.
\end{proof}

\begin{lemma}[Freedman's Inequality]
Let $X_1, \ldots, X_n$ be a martingale difference sequence, and $|X_i| \leq M$ almost surely.
Let V be the sum of the conditional variances, i.e.
\[ V = \sum_{i=1}^n \E[X_i^2 | X_1, \ldots. X_{i-1}]\]
Then, for every $t, v > 0$,
\[ \Pr\sbr{\sum_{i=1}^n X_i > t \text{ and } V \leq v} \leq
\exp\del{-\frac{t^2/2}{v + Mt/3} } \,. \]
\end{lemma}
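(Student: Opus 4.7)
I would prove Freedman's inequality by the standard exponential-moment / supermartingale method. Fix a parameter $\lambda \in (0, 3/M)$ and introduce the process
\[
L_n \;=\; \exp\!\Bigl(\lambda S_n \;-\; \tfrac{\lambda^2}{2(1 - \lambda M/3)}\, V_n\Bigr),
\qquad S_n := \sum_{i=1}^n X_i.
\]
The first step is to show $(L_n)_{n \ge 0}$ is a non-negative supermartingale with respect to the natural filtration $(\mathcal{F}_n)$ generated by $X_1, \ldots, X_n$, so that in particular $\E L_n \le L_0 = 1$.

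The key numerical input for the supermartingale property is the Taylor-type bound $e^u \le 1 + u + \tfrac{u^2/2}{1 - |u|/3}$, valid for $|u| < 3$, which I would establish by writing $(e^u - 1 - u)/u^2$ as a power series and dominating each tail term geometrically. Applying it with $u = \lambda X_i$ (admissible since $|\lambda X_i| \le \lambda M < 3$), taking conditional expectation given $\mathcal{F}_{i-1}$, using the martingale-difference property $\E[X_i \mid \mathcal{F}_{i-1}] = 0$, and invoking $1 + y \le e^y$ yields
\[
\E\!\bigl[e^{\lambda X_i} \,\big|\, \mathcal{F}_{i-1}\bigr]
\;\le\; \exp\!\Bigl(\tfrac{\lambda^2}{2(1 - \lambda M/3)}\, \E[X_i^2 \mid \mathcal{F}_{i-1}]\Bigr).
\]
Multiplying through by the $\mathcal{F}_{i-1}$-measurable factor $\exp\!\bigl(\lambda S_{i-1} - \tfrac{\lambda^2}{2(1 - \lambda M/3)} V_i\bigr)$ and noting that $V_i$ is $\mathcal{F}_{i-1}$-measurable then gives $\E[L_i \mid \mathcal{F}_{i-1}] \le L_{i-1}$.

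Second, I would observe that on the event $\{S_n > t,\ V_n \le v\}$ both components of $\log L_n$ move in the direction of making $L_n$ large: the linear term satisfies $\lambda S_n > \lambda t$, while the subtracted variance penalty is bounded because $V_n \le v$. Hence $L_n \ge \exp\!\bigl(\lambda t - \tfrac{\lambda^2 v}{2(1 - \lambda M/3)}\bigr)$ on this event, and Markov's inequality combined with $\E L_n \le 1$ gives
\[
\Pr\!\bigl[S_n > t,\ V_n \le v\bigr]
\;\le\; \exp\!\Bigl(-\lambda t + \tfrac{\lambda^2 v}{2(1 - \lambda M/3)}\Bigr).
\]
It is crucial that $\{V_n \le v\}$ sits inside the probability, since this is what converts the upper bound on $V_n$ into the required deterministic lower bound on $L_n$; no stopping-time truncation is needed.

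Finally, I would optimize by choosing $\lambda = t/(v + Mt/3)$, which lies in $(0, 3/M)$ because $\lambda M/3 = Mt/(3v + Mt) < 1$. A short direct calculation shows $\lambda t = t^2/(v + Mt/3)$ and $\lambda^2 v / \bigl(2(1 - \lambda M/3)\bigr) = t^2/\bigl(2(v + Mt/3)\bigr)$, so the exponent collapses to $-t^2/\bigl(2(v + Mt/3)\bigr)$, matching the statement. The only real obstacle is bookkeeping: verifying the Taylor-type numerical inequality for $e^u$ on the admissible range, and confirming that the minimizer $\lambda$ remains in $(0, 3/M)$; once the supermartingale $L_n$ has been written down, the rest is routine algebra.
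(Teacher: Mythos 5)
The paper does not prove this lemma at all: Freedman's inequality is stated as a known classical result (Freedman, 1975) and used as a black box to derive Lemma~\ref{lem:invfreed}, so there is no in-paper proof to compare against. Your proposal is the standard and correct exponential-supermartingale proof: the bound $e^u \le 1+u+\frac{u^2/2}{1-|u|/3}$ for $|u|<3$ (via $k!\ge 2\cdot 3^{k-2}$), the $\mathcal{F}_{i-1}$-measurability of $V_i$ making $L_n$ a supermartingale with $\E L_n\le 1$, Markov on the event $\{S_n>t,\ V_n\le v\}$ (correctly noting no stopping-time argument is needed for this fixed-$n$, non-maximal form), and the choice $\lambda=t/(v+Mt/3)$, which indeed gives $1-\lambda M/3=v/(v+Mt/3)$ and collapses the exponent to $-\frac{t^2/2}{v+Mt/3}$. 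All steps check out; the argument is complete modulo the routine verifications you flag.
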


\begin{lemma}
Let $Z_1, \ldots, Z_n$ be a sequence of Bernoulli random variables, where $\E[Z_i|Z_1,\ldots,Z_{i-1}] = p_i$.
Then, for every $\delta > 0$, with probability $1-\delta$:
\[ \sum_{i=1}^n Z_i \leq 2v_n + \sqrt{4v_n \ln\frac{\log 4 n}{\delta}} + \frac{2}{3}\ln\frac{\log 4n}{\delta} \,. \]
where $v_n = \max(\sum_{i=1}^n p_i, 1)$.
\label{lem:invfreed}
\end{lemma}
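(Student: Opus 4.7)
The plan is to apply Freedman's inequality through a peeling (dyadic slicing) argument over scales of the conditional variance, to account for the fact that the variance sum is itself random. First I would center the Bernoullis: set $X_i := Z_i - p_i$, so that $X_1,\ldots,X_n$ is a martingale difference sequence with $|X_i|\leq 1$, and $\E[X_i^2 \mid X_1,\ldots,X_{i-1}] = p_i(1-p_i) \leq p_i$. Writing $V := \sum_{i=1}^n \E[X_i^2 \mid X_1,\ldots,X_{i-1}]$, this gives $V \leq \sum_{i=1}^n p_i \leq v_n$. The target bound on $\sum_i Z_i$ will then be obtained by adding back $\sum_i p_i \leq v_n$ to a high-probability bound on $\sum_i X_i$.

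Next I would invert the raw Freedman tail into a deviation bound usable as a building block: for a fixed $v>0$ and $\delta'\in(0,1)$, setting $\exp(-t^2/(2(v+t/3))) = \delta'$ and using the elementary implication ``if $t^2 \leq a + b t$ then $t \leq \sqrt{a}+b$'' yields that with probability at least $1-\delta'$, either $V>v$ or
\[
\textstyle \sum_{i=1}^n X_i \;\leq\; \sqrt{2v\,\ln(1/\delta')} \;+\; \tfrac{2}{3}\ln(1/\delta').
\]
This is the same standard inversion used to turn Bernstein into a Bennett-type statement.

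To make $v$ adapt to the (random) $V$ I would then peel. Let $J := \lceil \log_2 (2n) \rceil$, so $J+1 \leq \log_2(4n)$, and define the dyadic grid $v^{(j)} := 2^j$ for $j=0,1,\ldots,J$. Apply the inverted Freedman bound at each level $j$ with confidence $\delta_j := \delta/(J+1)$, then union bound over the $J+1 \leq \log_2(4n)$ levels. On the resulting event of probability at least $1-\delta$, let $j^\star$ be the smallest index with $v^{(j^\star)} \geq V$; such a $j^\star$ exists since $V \leq n \leq 2^J$. If $V \leq 1$ then $v^{(j^\star)}=1 \leq v_n$, and if $V>1$ then $v^{(j^\star)} \leq 2V \leq 2 v_n$, so in all cases $v^{(j^\star)} \leq 2 v_n$. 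Plugging this choice into the Freedman bound with $\ln(1/\delta_{j^\star}) \leq \ln(\log(4n)/\delta)$ gives
$\sum_i X_i \leq \sqrt{4 v_n \ln(\log(4n)/\delta)} + \tfrac{2}{3}\ln(\log(4n)/\delta)$.
Combining with $\sum_i Z_i = \sum_i X_i + \sum_i p_i \leq \sum_i X_i + v_n \leq \sum_i X_i + 2v_n$ yields the claimed inequality.

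The only real obstacle is bookkeeping: making sure the number of dyadic levels is at most $\log_2(4n)$ so that the $\log\log$ factor in the confidence matches the ``$\ln(\log(4n)/\delta)$'' in the statement, and keeping track of where each factor of $2$ enters. The factor $4$ under the square root arises from $v^{(j^\star)} \leq 2 v_n$ inside $\sqrt{2v^{(j^\star)}\ln(\cdot)}$, while the coefficient $2v_n$ (rather than the tighter $v_n$ one could actually prove) gives room to absorb the additive $\sum_i p_i \leq v_n$ from the shift $Z_i = X_i + p_i$, yielding the lemma exactly as stated.
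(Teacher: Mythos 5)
Your proof follows essentially the same route as the paper's: center to get a martingale difference sequence, invert Freedman's inequality at a fixed variance level, union bound over the dyadic levels $2^0,\dotsc,2^J$ with confidence $\delta/(J+1)$ each, and then select the level just above the (random) variance proxy, which is at most $2v_n$, exactly as the paper does (the paper peels on $\sum_i p_i$ rather than on $V$, an immaterial difference). The only quibble is bookkeeping: with $J=\lceil\log_2(2n)\rceil$ you have $J+1$ levels, which can exceed $\log_2(4n)$ when $n$ is not a power of two; taking $J=\lceil\log_2 n\rceil$ (sufficient, since $V\le\sum_i p_i\le n$) gives $J+1\le \log_2 n+2=\log_2(4n)$ and recovers the stated constant exactly.
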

\begin{proof}
Let $X_i = Z_i - p_i$ for all $i$, note that $\cbr{X_i}$ is a martingale difference sequence and $|X_i| \leq 1$. From Freedman's Inequality and algebra, for any $v$,
\[ \Pr\sbr{\frac{1}{n}\sum_{i=1}^n Z_i > v + \sqrt{\frac{2v \ln\frac{\log 4n}{\delta}}{n}} + \frac{2\ln\frac{\log 4n}{\delta}}{3n} \text{ and } \sum_{i=1}^n p_i \leq v} \leq
\frac{\delta}{\log n + 2} \,. \]
The proof follows by taking union bound over $v = 2^i, i = 0,1,\ldots,\lceil \log n \rceil$.
\end{proof}

Define
\begin{equation}
  \phi(d,m,\delta)
  \ := \
  \frac{1}{m}\del{
    d\log em^2 + \log\frac2\delta
  }
  .
  \label{eq:phi}
\end{equation}

\begin{theorem}[\citealp{VC71}]
  \label{thm:vc}
  Let $\mathcal{F}$ be a family of functions $f \colon \mathcal{Z} \to
  \cbr{0,1}$ on a domain $\mathcal{Z}$ with VC dimension at most $d$, and let
  $P$ be a distribution on $\mathcal{Z}$.
  Let $P_n$ denote the empirical measure from an iid sample of size $n$ from
  $P$.
  For any $\delta \in \intoo{0,1}$, with probability at least $1-\delta$, for
  all $f \in \mathcal{F}$,
  \[
    -\min\cbr{
      \varepsilon
      + \sqrt{ Pf \varepsilon }
      ,\,
      \sqrt{ P_nf \varepsilon }
    }
    \ \leq \
    Pf - P_nf
    \ \leq \
    \min\cbr{
      \varepsilon
      + \sqrt{ P_nf \varepsilon }
      ,\,
      \sqrt{ Pf \varepsilon }
    }
  \]
  where $\varepsilon := \phi(d,n,\delta)$.
\end{theorem}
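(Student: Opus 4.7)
The plan is to establish this two-sided relative-deviation inequality by the classical Vapnik--Chervonenkis argument, namely (i) symmetrization against a ghost sample, (ii) reduction to a finite number of dichotomies via the Sauer--Shelah growth-function bound, (iii) a Bernstein/Chernoff tail bound for each fixed function, and (iv) algebraic inversion to convert a one-sided relative-deviation bound into each of the two forms appearing inside the $\min$. Because $\mathcal{F}$ has VC dimension at most $d$, its growth function on any $2n$ points is at most $(2en/d)^d$, and this is what produces the $d \log(en^2)$ term inside $\phi(d,n,\delta)$; the $\log(2/\delta)$ term comes from applying the tail bound at confidence $\delta$ and taking a union bound over the two tails.

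First I would prove the one-sided upper relative bound
\[
  \Pr\!\sbr{\,\sup_{f \in \mathcal{F}} \frac{Pf - P_n f}{\sqrt{Pf}} > \sqrt{\varepsilon}\,} \ \leq \ \delta/2.
\]
Let $P_n'$ denote an independent ghost sample of size $n$. The standard symmetrization lemma for relative deviations (Vapnik--Chervonenkis 1974, Anthony--Shawe-Taylor) gives, for $nt^2 \geq 2$,
\[
  \Pr\!\sbr{\,\sup_{f} \frac{Pf - P_n f}{\sqrt{Pf}} > t\,} \ \leq \ 2\,\Pr\!\sbr{\,\sup_{f} \frac{P_n' f - P_n f}{\sqrt{(P_n f + P_n' f)/2}} > t/\sqrt{2}\,}.
\]
The right-hand event depends on $(Z_1,\ldots,Z_n,Z_1',\ldots,Z_n')$ only through the multiset of values $f(Z_i), f(Z_j')$, so conditionally on this multiset there are at most $(2en/d)^d$ distinct values of $f$ to consider, by Sauer--Shelah. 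Conditional on the pooled $2n$-sample the randomness reduces to a uniform random split, and for each fixed $f$ a one-dimensional Hoeffding (or Bernstein) inequality applied to the random partition gives an $\exp(-c\,n t^2)$ tail. A union bound over the dichotomies and tuning $t^2 \asymp (d \log(en^2) + \log(2/\delta))/n = \varepsilon$ yields the displayed one-sided bound.

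Given the one-sided bound $Pf - P_n f \leq \sqrt{Pf \cdot \varepsilon}$, I would then read off the two forms inside the $\min$ on the upper side of the theorem. The form $\sqrt{Pf\,\varepsilon}$ is immediate. To obtain $\varepsilon + \sqrt{P_n f\,\varepsilon}$, write the bound as a quadratic in $\sqrt{Pf}$: $(\sqrt{Pf})^2 - \sqrt{\varepsilon}\sqrt{Pf} - P_n f \leq 0$, from which $\sqrt{Pf} \leq \sqrt{P_n f} + \sqrt{\varepsilon}$ (after using $\sqrt{a+b} \leq \sqrt{a}+\sqrt{b}$ and absorbing constants into $\varepsilon$); squaring and subtracting $P_n f$ gives $Pf - P_n f \leq \varepsilon + \sqrt{P_n f\,\varepsilon}$ up to constants. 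Applying the same symmetrization argument with the roles of $P_n$ and $P$ interchanged, via the normalizer $\sqrt{P_n f}$, yields the one-sided lower bound $P_n f - Pf \leq \sqrt{P_n f\,\varepsilon}$, from which the symmetric pair of lower-side inequalities follows by the same algebraic manipulation. A union bound over the two directions charges $\delta/2 + \delta/2 = \delta$.

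The main obstacle is the symmetrization step for \emph{relative} deviations: one cannot directly apply the classical additive symmetrization because the normalizer $\sqrt{Pf}$ is not empirical. The standard workaround is either to introduce the surrogate normalizer $\sqrt{Pf + t^2}$ and then transfer to the symmetric empirical normalizer $\sqrt{(P_nf+P_n'f)/2}$, or to argue that when $Pf$ is small the desired bound follows from a trivial union bound on Bernoulli events and when $Pf$ is not too small the relative deviation is within constants of the purely empirical version. The remaining work is bookkeeping: choosing absolute constants so the final threshold matches $\phi(d,n,\delta) = \bigl(d \log(en^2) + \log(2/\delta)\bigr)/n$ exactly, and verifying that the algebraic inversion step does not introduce extraneous factors beyond those already absorbed into $\varepsilon$.
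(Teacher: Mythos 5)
This theorem is not proved in the paper at all---it is quoted from the cited Vapnik--Chervonenkis reference---so the relevant comparison is with the classical argument, and your outline is precisely that canonical route: ghost-sample symmetrization for relative deviations, Sauer--Shelah to reduce to finitely many dichotomies, a per-dichotomy tail bound for the random split, and quadratic inversion plus a union bound over the two directions. The structure is sound, and the inversion step in fact needs no constant absorption at all: from $Pf - P_n f \le \sqrt{Pf\,\varepsilon}$, solving the quadratic in $\sqrt{Pf}$ gives $\sqrt{Pf} \le \sqrt{P_n f} + \sqrt{\varepsilon}$ exactly, hence $Pf - P_n f \le \sqrt{Pf\,\varepsilon} \le \varepsilon + \sqrt{P_n f\,\varepsilon}$, and symmetrically on the lower side starting from $P_n f - Pf \le \sqrt{P_n f\,\varepsilon}$. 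The one genuine loose end is the constant bookkeeping you defer: with the parameters you fix (ghost sample of size $n$, growth function evaluated at $2n$ points, an $\exp(-c\,n t^2)$ tail, and a factor-$2$ symmetrization), the threshold you can reach has the form $\frac{c'}{n}\bigl(d\log(2en/d) + \log(c''/\delta)\bigr)$ with $c' > 1$, which is not bounded by $\phi(d,n,\delta) = \frac{1}{n}\bigl(d\log(en^2) + \log(2/\delta)\bigr)$ in general; the $n^2$ inside the paper's logarithm signals a variant of the argument using a much larger ghost sample (or a direct binomial/beta tail computation) rather than the standard $2n$-point symmetrization. So as written your sketch establishes the theorem with an $\varepsilon$ that is worse by absolute constants; that weaker form would suffice for every use of the theorem in this paper, but it does not literally yield the displayed $\phi(d,n,\delta)$, and you should either import the sharper symmetrization or restate the theorem with your constants.
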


\subsection{Notations}

For convenience, we define
\begin{equation}
  \sigma(d,m,\delta)
  \ := \
  \phi(d,m,\delta / 3)
  \,,
  \label{eq:sigma}
\end{equation}
as we will often split the overall allowed failure probability $\delta$ across
two or three separate events.

Because we apply the deviation inequalities to the hypothesis classes from
$\cbr{H_k}_{k=0}^\infty$, we also define:
\begin{equation}
  \sigma_k(m,\delta)
  \ := \
  \sigma(d_k,m,\delta)
  ,
  \label{eq:sigma_k}
\end{equation}
where $d_k$ is the VC dimension of $H_k$.
We have the following simple fact.
\begin{fact}
\[
  \sigma\del{
    d, m, \frac{\delta}{2 \log m (\log m + 1)}
  }
  \ \geq \ \epsilon
  \ \Longrightarrow \
  m \ \leq \ \frac{64}{\epsilon} \left(d \log\frac{512}{\epsilon} +
  \log\frac{24}{\delta}\right) \,.
\]
\label{fact:sigma}
\end{fact}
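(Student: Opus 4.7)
The plan is to argue by contraposition. Setting $m_0 := \frac{64}{\epsilon}\bigl(d\log(512/\epsilon) + \log(24/\delta)\bigr)$ and assuming $m > m_0$, the goal is to show $\sigma\bigl(d, m, \delta/(2\log m(\log m+1))\bigr) < \epsilon$. Unfolding the definitions~\eqref{eq:sigma} and~\eqref{eq:phi} and noting that $\sigma(d,m,\delta') = \frac{1}{m}\bigl(d\log(em^2) + \log(6/\delta')\bigr)$, substituting $\delta' = \delta/(2\log m(\log m+1))$ reduces the target to
\[
  d\log(em^2) + \log\frac{12\log m(\log m+1)}{\delta} \ < \ \epsilon m.
\]

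I would then apply routine logarithmic estimates to reduce this to a clean transcendental inequality in $m, d, \epsilon, \delta$. Specifically, $\log(em^2) = 1 + 2\log m \leq 3\log m$ and $\log(12\log m(\log m+1)/\delta) \leq \log(24/\delta) + 2\log\log m \leq \log(24/\delta) + \log m$ both hold for $m$ moderately large (say $m \geq 16$). The target then becomes $(3d+1)\log m + \log(24/\delta) \leq \epsilon m$, which splits naturally into the pair $(3d+1)\log m \leq \epsilon m/2$ and $\log(24/\delta) \leq \epsilon m/2$. The second inequality is immediate from $m \geq m_0 \geq (64/\epsilon)\log(24/\delta)$, since it gives $\epsilon m / 2 \geq 32\log(24/\delta)$. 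For the first, I would use the fact that $m \mapsto (\log m)/m$ is decreasing on $[e, \infty)$ to reduce to checking at $m = m_0$, where $\log m_0 = \log(64/\epsilon) + \log\bigl(d\log(512/\epsilon) + \log(24/\delta)\bigr)$ can be bounded against $d\log(512/\epsilon)$ using the $\log(xy) = \log x + \log y$ identity and the fact that $\log(A+B) \leq \log(2\max(A,B))$.

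The main obstacle is the constant-chasing: the numerical values $64$, $512$, and $24$ are calibrated to absorb several overlapping sources of slack --- the factor $2$ from $\log(em^2) = 1 + 2\log m$, the $2\log m(\log m+1)$ inflation of the confidence parameter which turns $6/\delta$ into $12\log m(\log m+1)/\delta$, and the $\log\log m$ terms that emerge both from the confidence inflation and from bounding $\log m_0$ recursively. The argument is a careful but routine exercise in tracking constants through these log-log estimates, an instance of the standard VC-theoretic trick of solving $x \leq a\log x + b$ by monotonicity; no fundamentally new idea is required beyond choosing the split between $\log m_0$'s contribution to the $d\log(1/\epsilon)$ term and its contribution to a lower-order $\log\log$ term absorbed by the slack in $512$ and $64$.
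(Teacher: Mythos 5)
The paper never proves Fact~\ref{fact:sigma} (it is asserted as a ``simple fact'' right after \eqref{eq:sigma}), so your outline has to stand on its own, and as written it does not close. The reductions you make are fine: unfolding \eqref{eq:phi}--\eqref{eq:sigma}, the estimates $\log(em^2)\le 3\log m$ and $\log\bigl(12\log m(\log m+1)/\delta\bigr)\le\log\frac{24}{\delta}+\log m$ for $m\ge 16$ (small $m$ is trivial), the split into two halves, and the monotonicity of $(\log m)/m$. The gap is exactly the step you defer to ``routine constant-chasing'': at $m=m_0$ you need $(3d+1)\log m_0\le \epsilon m_0/2=32\bigl(d\log\frac{512}{\epsilon}+\log\frac{24}{\delta}\bigr)$, but $m_0\ge d$, so $\log m_0\ge\log d$ and the left-hand side contains a term of order $d\log d$. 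No use of $\log(xy)=\log x+\log y$ or $\log(A+B)\le\log 2+\max(\log A,\log B)$ can absorb $d\log d$ into $C\,d\log\frac1\epsilon+C\log\frac1\delta$ once $d$ exceeds a fixed power of $1/\epsilon$.

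This is not a fixable bookkeeping issue with your route: the implication as literally stated fails in that regime, so no constants work. Taking logarithms natural, let $\epsilon=\delta=\tfrac12$, $m=\lceil e^{222}\rceil$, $d=\lceil m/890\rceil$. Then $d\,(1+2\log m)\ge\frac{m}{890}\cdot 445=\frac{m}{2}$, so $\sigma\bigl(d,m,\tfrac{\delta}{2\log m(\log m+1)}\bigr)\ge\tfrac12=\epsilon$, while $\frac{64}{\epsilon}\bigl(d\log\frac{512}{\epsilon}+\log\frac{24}{\delta}\bigr)\le 128\bigl(6.94\,(\tfrac{m}{890}+1)+3.88\bigr)\le 0.999\,m+1385<m$; a different fixed base for $\log$ only shifts the threshold. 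The reason is structural: because $\phi$ in \eqref{eq:phi} carries $d\log(em^2)$ rather than $d\log\frac{em}{d}$, inverting $\epsilon m\le d\log(em^2)+\log(\cdot)$ unavoidably yields $m=O\bigl(\frac1\epsilon\bigl(d\log\frac{d}{\epsilon}+\log\frac1\delta\bigr)\bigr)$, i.e.\ with $d$ inside the logarithm. So a correct write-up must either prove that weaker conclusion (which is all the paper actually needs, since Fact~\ref{fact:sigma} is only invoked inside $\tilde O(\cdot)$ bounds whose convention already hides factors logarithmic in quantities that appear, such as $d$), or add an explicit restriction keeping $\log d$ and $\log\log\frac1\delta$ bounded by a small constant times $\log\frac1\epsilon$; under either modification your outline, including the monotonicity reduction to $m=m_0$, does go through.
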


%\begin{fact}
%\[
%  \sigma\del{
%    d, m, \frac{\delta}{2 m (m + 1)}
%  }
%  \ \geq \ \epsilon
%  \ \Longrightarrow \
%  m \ \leq \ \frac{64}{\epsilon} \left(d \log\frac{512}{\epsilon} +
%  \log\frac{24}{\delta}\right) \,.
%\]
%\label{fact:sigma}
%\end{fact}

%\begin{proof}
%By standard algebra.
%\end{proof}

For integers $i \geq 1$ and $k \geq 0$, define
\[
  \delta_i
  :=
  \frac{\delta}{i(i+1)}
  \,,
  \qquad
  \delta_{i,k}
  :=
  \frac{\delta_i}{(k+1)(k+2)}
  \,
\]
Note that $\sum_{i=1}^{\infty} \delta_i = \delta$ and $\sum_{k=0}^{\infty}
\delta_{i,k} = \delta_i$.

Finally, for any distribution $\tilde D$ over $\calX \times \calY$ and any
hypothesis $h \colon \calX \to \calY$, we use $\err(h,\tilde D) := \Pr_{(x,y)
\sim \tilde D}[h(x) \neq y]$ to denote the probability with respect to $\tilde
D$ that $h$ makes a classification error.

\section{Active Learning Algorithm \textsc{CAL}}
\label{sec:cal}

In this section, we describe and analyze a variant of the \LABEL-only
active learning algorithm of \citet{CAL94}, which we refer to as \CAL.
Note that \citet{Hanneke11} provides a label complexity analysis of
\CAL in terms of the disagreement coefficient under the assumption
that the \LABEL oracle is consistent with some hypothesis in the
hypothesis class used by \CAL.
We cannot use that analysis because we call \CAL as a subroutine in
\algr with sets of hypotheses $V$ that do not necessarily contain the
optimal hypothesis $h^*$.

\subsection{Description of \textsc{CAL}}

\CAL takes as input a set of hypotheses $V$, the \LABEL oracle (which
always returns $h^*(x)$ when queried with a point $x$), and learning
parameters $\epsilon, \delta \in (0,1)$.

The pseudocode for \CAL is given in Algorithm~\ref{alg:cal} below,
where we use the notation
\[
  U_{{\leq}i}
  \ := \
  \bigcup_{j=1}^i U_j
\]
for any sequence of sets $(U_j)_{j\in\mathbb{N}}$.

%\begin{itemize}
%  \item
%    $\displaystyle
%      \sigma(m,\delta)
%      \ := \
%      \frac{1}{m}\del{ d \log\frac{em^2}{d} + \log\frac2\delta }
%    $,
%    where $d$ is (an upper bound on) the VC dimension of $V$;
%
%  \item
%    $\displaystyle
%      U_{{\leq}i}
%      \ := \
%      \bigcup_{j=1}^i U_j
%    $
%    for any sequence of sets $U_1, U_2, \dotsc$;
%
%  \item
%    $\displaystyle
%      \delta_i \ := \ \frac{\delta}{2(i^2+i)}$.
%
%\end{itemize}

\begin{algorithm}[t]
  \caption{$\CAL$}
  \label{alg:cal}
  \begin{algorithmic}[1]
    \REQUIRE Hypothesis set $V$ with VC dimension ${\leq}d$; oracle $\LABEL$;
    learning parameters $\epsilon,\delta \in (0,1)$

    \ENSURE Labeled examples $T$

    \FOR{$i = 1, 2, \dotsc$}
      \STATE $T_i \gets \emptyset$

      \FOR{$j = 1, 2, \dotsc, 2^i$}
        \STATE $x_{i,j} \gets \text{independent draw from $D_\calX$}$
        (the corresponding label is hidden)

        \IF{$x_{i,j} \in \DIS(V(T_{{\leq}i-1}))$}

          \STATE $T_i \gets T_i \cup \cbr[0]{ (x_{i,j},\LABEL(x_{i,j})) }$

        \ENDIF

      \ENDFOR

      \IF{$\phi(d,2^i,\delta_i/2) \leq \epsilon$ or
      $V(T_{{\leq}i}) = \emptyset$}

        \RETURN $T_{{\leq}i}$

      \ENDIF
    \ENDFOR

  \end{algorithmic}
\end{algorithm}

\subsection{Proof of Lemma~\ref{lem:cal}}

We now give the proof of Lemma~\ref{lem:cal}.

Let $V_0 := V$ and $V_i := V(T_{{\leq}i})$ for each $i \geq 1$.
Clearly $V_0 \supseteq V_1 \supseteq \dotsb$, and hence
$\DIS(V_0) \supseteq \DIS(V_1) \supseteq \dotsb$ as well.

Let $E_i$ be the event in which the following hold:
\begin{enumerate}
  \item
    If \CAL executes iteration $i$, then
    every $h \in V_i$ satisfies
    \[
      \Pr_{x \sim D_{\calX}}
      [ h(x) \neq h^*(x) \ \wedge\ x \in \DIS(V_i) ]
      \ \leq \ \phi(d,2^i,\delta_i/2)
      \,.
    \]

  \item
    If \CAL executes iteration $i$, then
    the number of \LABEL queries in iteration $i$ is at most
    \[
      2^i \mu_i + O\del{
        \sqrt{2^i \mu_i \log(2/\delta_i)} + \log(2/\delta_i)
      }
      \,,
    \]
    where
    \[
      \mu_i \ := \ \theta_{V_{i-1}}(\epsilon) \cdot
      2\phi(d,2^{i-1},\delta_{i-1}/2)
      \,.
    \]

\end{enumerate}
We claim that $E_0 \cap E_1 \cap \dotsb \cap E_i$ holds with probability at
least $1-\sum_{i'=1}^i\delta_{i'} \geq 1-\delta$.
The proof is by induction.
The base case is trivial, as $E_0$ holds deterministically.
For the inductive case, we just have to show that $\Pr(E_i \mid E_0
\cap E_1 \cap \dotsb \cap E_{i-1}) \geq 1 - \delta_i$.

Condition on the event $E_0 \cap E_1 \cap \dotsb \cap E_{i-1}$.
Suppose \CAL executes iteration $i$.
For all $x \notin \DIS(V_{i-1})$, let $V_{i-1}(x)$ denote the label
assigned by every $h \in V_{i-1}$ to $x$.
Define
\[
  \hat S_i
  \ := \
  \cbr{
    (x_{i,j},\hat y_{i,j}) :
    j \in \cbr[0]{1,2,\dotsc,2^i} ,\
    x_{i,j} \notin \DIS(V_{i-1}) ,\
    \hat y_{i,j} = V_{i-1}(x_{i,j})
  }
  \,.
\]
Observe that $\hat S_i \cup T_i$ is an iid sample of size $2^i$ from a
distribution (which we call $D_{i-1}$) over labeled examples $(x,y)$, where
$x \sim D_\calX$ and $y$ is given by
\[
  y \ := \
  \begin{cases}
    V_{i-1}(x) & \text{if $x \notin \DIS(V_{i-1})$} \,, \\
    h^*(x) & \text{if $x \in \DIS(V_{i-1})$} \,.
  \end{cases}
\]
In fact, for any $h \in V_{i-1}$, we have
\begin{equation}
  \err_{D_{i-1}}(h)
  \ = \
  \Pr_{(x,y) \sim D_{i-1}}[ h(x) \neq y ]
  \ = \
  \Pr_{x \sim D_{\calX}}
  [ h(x) \neq h^*(x) \ \wedge \ x \in \DIS(V_{i-1}) ]
  \,.
  \label{eq:err-modified}
\end{equation}
The VC inequality (Theorem~\ref{thm:vc})
implies that,
with probability at least $1-\delta_i/2$,
\begin{equation}
  \forall h \in V
  \centerdot
  \del{
    \err(h,\hat S_i \cup T_i) \ = \ 0
    \ \ \Longrightarrow\ \
    \err_{D_{i-1}}(h) \ \leq \ \phi(d,2^i,\delta_i/2)
  }
  \,.
  \label{eq:cal-vc}
\end{equation}
Consider any $h \in V_i$.
We have $\err(h,T_i) = 0$ by definition of $V_i$.
We also have $\err(h,\hat S_i) = 0$ since $h \in V_i \subseteq
V_{i-1}$.
So in the event that~\eqref{eq:cal-vc} holds, we have
\begin{align*}
  \Pr_{x \sim D_{\calX}}
  [ h(x) \neq h^*(x) \ \wedge \ x \in \DIS(V_i) ]
  & \ \leq \
  \Pr_{x \sim D_{\calX}}
  [ h(x) \neq h^*(x) \ \wedge \ x \in \DIS(V_{i-1}) ]
  \\
  & \ = \
  \err_{D_{i-1}}(h) 
  \\
  & \ \leq \
  \phi(d,2^i,\delta_i/2)
  \,,
\end{align*}
where the first inequality follows because $\DIS(V_i) \subseteq
\DIS(V_{i-1})$, and the equality follows
from~\eqref{eq:err-modified}.

Now we prove the \LABEL query bound.
\begin{claim}
  On event $E_{i-1}$ for every $h,h' \in V_{i-1}$,
  \[
      \Pr_{x \sim D_{\calX}}[ h(x) \neq h'(x) ] \leq \
      2\phi(d,2^{i-1},\delta_{i-1}/2)
  \]
\end{claim}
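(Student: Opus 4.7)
The plan is to bound $\Pr[h(x) \neq h'(x)]$ for $h, h' \in V_{i-1}$ by relating it to the quantity bounded in event $E_{i-1}$, namely the disagreement of each hypothesis with $h^*$ restricted to the disagreement region.

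First I would observe that since $h, h' \in V_{i-1}$, they must agree on every point outside $\DIS(V_{i-1})$ by the very definition of the disagreement region. Hence
\[
  \Pr_{x \sim D_\calX}[h(x) \neq h'(x)]
  \ = \
  \Pr_{x \sim D_\calX}[h(x) \neq h'(x) \ \wedge \ x \in \DIS(V_{i-1})].
\]
Next, I would apply a pointwise triangle-type inequality for 0-1 disagreements: for any fixed $x$, if $h(x) \neq h'(x)$ then $h^*(x)$ disagrees with at least one of them, so
\[
  \ind{h(x) \neq h'(x)}
  \ \leq \
  \ind{h(x) \neq h^*(x)} + \ind{h'(x) \neq h^*(x)}.
\]
Multiplying through by $\ind{x \in \DIS(V_{i-1})}$ and taking expectations gives
\[
  \Pr[h(x) \neq h'(x) \wedge x \in \DIS(V_{i-1})]
  \ \leq \
  \Pr[h(x) \neq h^*(x) \wedge x \in \DIS(V_{i-1})]
  + \Pr[h'(x) \neq h^*(x) \wedge x \in \DIS(V_{i-1})].
\]

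Finally, I would invoke the conditioning: event $E_{i-1}$ asserts that for \emph{every} $g \in V_{i-1}$, $\Pr_{x \sim D_\calX}[g(x) \neq h^*(x) \wedge x \in \DIS(V_{i-1})] \leq \phi(d, 2^{i-1}, \delta_{i-1}/2)$. Applying this to both $h$ and $h'$ yields the claimed bound $2\phi(d, 2^{i-1}, \delta_{i-1}/2)$.

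This claim is essentially a one-line consequence of the triangle inequality combined with the inductive invariant, so I do not anticipate any real obstacle. The only subtle point worth double-checking is that the bound from $E_{i-1}$ is indeed stated over the disagreement region $\DIS(V_{i-1})$ (matching what we need), and that the triangle step does not require $h^* \in V_{i-1}$, which it does not—$h^*$ only plays the role of a fixed reference classifier in the pointwise inequality.
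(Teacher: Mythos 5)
Your proof is correct and follows essentially the same route as the paper's: restrict to the disagreement region (since $h,h'\in V_{i-1}$ agree outside it), apply the triangle inequality with $h^*$ as a reference point, and invoke the $E_{i-1}$ bound twice. Your closing observation that the argument does not require $h^*\in V_{i-1}$ is exactly the point the paper is careful about, since \CAL is called on version spaces that may not contain $h^*$.
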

\begin{proof}
  On event $E_{i-1}$, every $h \in V_{i-1}$ satisfies
  \[
    \Pr_{x \sim D_{\calX}}[ h(x) \neq h^*(x) ,\, x \in \DIS(V_{i-1}) ]
    \ \leq \ \phi(d,2^{i-1},\delta_{i-1}/2)
    \,.
  \]
  Therefore, for any $h, h' \in V_{i-1}$, we have
  \begin{align*}
    \Pr_{x \sim D_{\calX}}[ h(x) \neq h'(x) ]
    & \ = \
    \Pr_{x \sim D_{\calX}}[ h(x) \neq h'(x) ,\, x \in \DIS(V_{i-1}) ]
    \\
    & \ \leq \
    \Pr_{x \sim D_{\calX}}[ h(x) \neq h^*(x) ,\, x \in \DIS(V_{i-1}) ]
    \\
    & \qquad
    + \Pr_{x \sim D_{\calX}}[ h'(x) \neq h^*(x) ,\, x \in \DIS(V_{i-1}) ]
    \\
    & \ \leq \
    2\phi(d,2^{i-1},\delta_{i-1}/2)
    \,.
    \qedhere
  \end{align*}
\end{proof}
Since \CAL does not halt before iteration $i$, we have
$2\phi(d,2^{i-1},\delta_{i-1}/2) \geq \epsilon$, and hence the above claim and
the definition of the disagreement coefficient imply
\[
  \Pr_{x \sim D_{\calX}}[ x \in \DIS(V_{i-1}) ]
  \ \leq \
  \theta_{V_{i-1}}(\epsilon)
  \cdot
  2\phi(d,2^{i-1},\delta_{i-1}/2)
  \ = \
  \mu_i
  \,.
\]
Therefore, $\mu_i$ is an upper bound on the probability that \LABEL
is queried on $x_{i,j}$, for each $j = 1,2,\dotsc,2^i$.
By Lemma~\ref{lem:invbern}, the number of queries to \LABEL is at most
\[
  2^i \mu_i
  +
  O\del{
    \sqrt{
      2^i \mu_i \log(2/\delta_i)
    }
    + \log(2/\delta_i)
  }
  \,.
\]
with probability at least $1-\delta_i/2$.
We conclude by a union bound that $\Pr(E_i \mid E_0 \cap E_1 \cap
\dotsb \cap E_{i-1}) \geq 1 - \delta_i$ as required.

We now show that in the event $E_0 \cap E_1 \cap \dotsb$, which holds with
probability at least $1-\delta$, the required consequences from
Lemma~\ref{lem:cal} are satisfied.
The definition of $\phi$ from~\eqref{eq:phi} and the halting condition in
\CAL imply that the number of iterations $I$ executed by \CAL satisfies
\[ 
  \sigma(d,2^{I-1},\delta_{I-1}/2)
  \ \geq \
  \epsilon
  \,.
\]
Thus by Fact~\ref{fact:sigma},
\[ 
  2^I
  \ \leq \
  O\del{
    \frac{1}{\epsilon}
    \del{ d\log \frac{1}{\epsilon} + \log \frac{1}{\delta} }
  }
  \,,
\]
which immediately gives the required bound on the number of unlabeled points
drawn from $D_\calX$.
Moreover, $I$ can be bounded as
\[
  I
  \ = \
  O\del{
    \log(d/\epsilon)
    + \log\log(1/\delta)
  }
  \,.
\]
Therefore, in the event $E_0 \cap E_1 \cap \dotsb \cap E_I$, \CAL
returns a set of labeled examples $T := T_{{\leq}I}$ in which every $h
\in V(T)$ satisfies
\[
  \Pr_{x \sim D_{\calX}}
  [ h(x) \neq h^*(x) \ \wedge \ x \in \DIS(V(T)) ]
  \ \leq \
  \epsilon
  \,,
\]
and the number of \LABEL queries is bounded by
\begin{align*}
  \lefteqn{
    \sum_{i=1}^I
    \del{
      2^i \mu_i
      +
      O\del{
        \sqrt{
          2^i \mu_i \log(2/\delta_i)
        }
        + \log(2/\delta_i)
      }
    }
  }
  \\
  & \ = \
  \sum_{i=1}^I
  O\del{
    2^i 
    \cdot
    \del{
       \theta_{V_{i-1}}(\epsilon)
       \frac{d \log 2^i 
       + \log(2/\delta_i)}
       {2^i}
    }
    +\log(2/\delta_i)
  }
  \\
  & \ = \
  \sum_{i=1}^I
  O\del{
    \theta_{V_{i-1}}(\epsilon)
    \cdot
    \del{
      d \cdot i + \log(1/\delta)
    }
  }
  \\
  & \ = \
  O\del{
    \theta_V(\epsilon)
    \cdot
    \del{
      d
      \cdot \del{ \log(d/\epsilon) + \log\log(1/\delta) }^2
      +
      \del{ \log(d/\epsilon) + \log\log(1/\delta) }
      \cdot \log(1/\delta)
    }
  }
  \\
  & \ = \
  \tilde{O}
  \del{
    \theta_V(\epsilon)
    \cdot
    d
    \cdot
    \log^2(1/\epsilon)
  }
\end{align*}
as claimed.
\qed

\section{An Improved Algorithm for the Realizable Case}
\label{sec:seabel}

In this section, we present an improved algorithm for using \SEARCH and \LABEL
in the realizable section.
We call this algorithm \algrb (Algorithm~\ref{alg:seabel}).

\subsection{Description of \algrb}

\algrb proceeds in iterations like \algr, but takes more advantage of \SEARCH.
Each iteration is split into two stages: the verification stage, and the
sampling stage.

In the verification stage
(Steps~\ref{step:start-verification-seabel}--\ref{step:end-verification-seabel}),
\algrb makes repeated calls to \SEARCH to advance to as high of a complexity
class as possible, until $\bot$ is returned.
When $\bot$ is returned, it guarantees that whenever the latest version space
completely agrees on an unlabeled point, then it is also in agreement with
$h^*$, even if it does not contain $h^*$.

In the sampling stage
(Steps~\ref{step:start-sampling-seabel}--\ref{step:end-sampling-seabel}), \algrb
performs selective sampling, querying and infering labels based on disagreement
over the new version space $V_i^{k_i}$.
The preceding verification stage ensures that whenever a label is inferred, it
is guaranteed to be in agreement with $h^*$.

The algorithm calls Algorithm~\ref{alg:sal} in Appendix~\ref{appendix-amortized},
 where we slightly abuse the
notation in \SAL that if the counter parameter
is missing then it simply does not get updated.

\begin{algorithm}[t]
  \caption{\algrb}
  \label{alg:seabel}

  \begin{algorithmic}[1]

    \REQUIRE Nested hypothesis classes $H_0 \subseteq H_1 \subseteq \dotsb$; oracles
    $\SEARCH$ and $\LABEL$; learning parameters $\epsilon,\delta \in (0,1)$

    \STATE \textbf{initialize} $S_0 \gets \emptyset$, $k_0 \gets 0$.

    \STATE Draw $x_{1,1}, x_{1,2}$ at random from $D_\calX$, $T_1 \gets \cbr{(x_{1,1},\LABEL(x_{1,1})), (x_{1,2},\LABEL(x_{1,2}))}$

    \FOR{iteration $i = 1, 2, \dotsc$}

          \STATE
          $S \gets S_{i-1}$,
          $k \gets \min\cbr{k' \geq k_{i-1} : H_{k'}(S_{i-1} \cup T_i) \neq \emptyset}$
          \COMMENT{Verification stage
          (Steps~\ref{step:start-verification-seabel}--\ref{step:end-verification-seabel})}
          \label{step:start-verification-seabel}

          \LOOP

                  \STATE $e \gets \SEARCH_{H_k}(H_k(S \cup T_i))$

                  \IF{$e \neq \bot$}

                    \STATE $S \gets S \cup \cbr{e}$

                    \STATE $k \gets \min\cbr{k' > k: H_{k'}(S \cup T_i) \neq \emptyset}$

                    %\STATE $(k, S, V_i^{k_i}) \gets \UVS(k, S, \cbr{e})$

                  \ELSE

                    \STATE \textbf{break}

                  \ENDIF

          \ENDLOOP
          \label{step:end-verification-seabel}

          \STATE $S_i \gets S$, $k_i \gets k$
          \COMMENT{Sampling stage
          (Steps~\ref{step:start-sampling-seabel}--\ref{step:end-sampling-seabel})}
          \label{step:start-sampling-seabel}

          \STATE Define new version space $V_{i}^{k_i} = H_{k_i}(S_i \cup T_i)$

          \STATE $T_{i+1} \gets \emptyset$

          \FOR{$j = 1, 2, \dotsc, 2^{i+1}$}

            \STATE $T_{i+1} \gets \SAL(V_i^{k_i}, \LABEL, T_{i+1})$
            %\STATE $x_{i+1,j} \gets \text{independent draw from $D_\calX$}$
            %(the corresponding label is hidden)

            %\IF{$x_{{i+1},j} \in \DIS(V_i^{k_i})$}

            %  \STATE $T_{i+1} \gets T_{i+1} \cup \cbr[0]{ (x_{i+1,j},\LABEL(x_{i+1,j})) }$

            %\ELSE

            %  \STATE $T_{i+1} \gets T_{i+1} \cup \cbr[0]{ (x_{i+1,j},V_i^k(x_{i+1,j})) }$

            %\ENDIF

          \ENDFOR
          \label{step:end-sampling-seabel}

          \IF{$\sigma_{k_i}(2^{i},\delta_{i,k_i}) \leq \epsilon$}
            \RETURN any $\hat{h} \in V_i^{k_i}$
          \ENDIF

    \ENDFOR

  \end{algorithmic}

\end{algorithm}

\subsection{Proof of Theorem~\ref{thm:seabel}}

Observe that $T_{i+1}$ is an iid sample of size $2^{i+1}$ from a distribution
(which we call $D_i$) over
labeled examples $(x,y)$, where $x \sim D_\calX$, and
\[
  y
  \ := \
  \begin{cases}
    V_i^{k_i}(x)
    %\ind{y = V_i^{k_i}(x)}
    & \text{if $x \notin \DIS(V_i^{k_i})$} \,, \\
    h^*(x)
    %\ind{y = h^*(x)}
    & \text{if $x \in \DIS(V_i^{k_i})$} \,,
  \end{cases}
\]
for every $x$ in the support of $D_\calX$.
($T_1$ is an iid sample from $D_0 := D$; also note $k_0 = 0$ and $S_0 = \emptyset$.)

\begin{lemma}
  \label{lem:seabel}
  Algorithm~\ref{alg:seabel} maintains the following invariants:
  \begin{enumerate}
    \item The loop in the verification stage of iteration $i$ terminates for all $i\geq1$.
    \item $k_i \leq k^*$ for all $i\geq0$.
    \item $h^*(x) = V_i^{k_i}(x)$ for all $x \notin \DIS(V_i^{k_i})$ for all $i\geq1$.
    \item $h^*$ is consistent with $S_i \cup T_{i+1}$ for all $i\geq0$.
  \end{enumerate}
\end{lemma}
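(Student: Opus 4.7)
The plan is to prove all four invariants simultaneously by induction on $i$. The base case $i=0$ is essentially immediate: $S_0 = \emptyset$ and $k_0 = 0 \leq k^*$ give invariant 2, invariants 1 and 3 are vacuous at $i=0$, and invariant 4 at $i=0$ reduces to the claim that $h^*$ is consistent with $T_1$, which holds because $T_1$ is labeled by $\LABEL$ and in the realizable case $\LABEL(x) = h^*(x)$ on the support of $D_\calX$.

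For the inductive step, I would propagate the invariants in the order $2 \to 1 \to 3 \to 4$, since this matches their logical dependencies. Assuming invariant 4 at $i-1$, the hypothesis $h^*$ is consistent with $S_{i-1} \cup T_i$, so $h^* \in H_{k^*}(S_{i-1} \cup T_i)$ and the initial $k$ at the start of iteration $i$ is at most $k^*$. Every counterexample $e$ returned by $\SEARCH$ during the verification loop has the form $(x, h^*(x))$ by the oracle specification, so adding $e$ to $S$ preserves consistency with $h^*$. Hence $H_{k^*}(S \cup T_i)$ remains nonempty and the new $k$ chosen as $\min\{k' > k : H_{k'}(S \cup T_i) \neq \emptyset\}$ exists and is still at most $k^*$, which is invariant 2. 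Since $k$ strictly increases on every non-breaking pass through the loop and is bounded by $k^*$, the loop terminates after at most $k^* - k_{i-1} + 1$ iterations, giving invariant 1.

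Invariant 3 at iteration $i$ then follows from the specification of $\SEARCH$ at the moment the loop breaks: $\SEARCH_{H_{k_i}}(V_i^{k_i}) = \bot$ means there is no labeled example $(x, h^*(x))$ with $h(x) \neq h^*(x)$ for every $h \in V_i^{k_i}$. Equivalently, for every $x$ there exists some $h \in V_i^{k_i}$ with $h(x) = h^*(x)$, and when $x \notin \DIS(V_i^{k_i})$ all hypotheses in $V_i^{k_i}$ agree, forcing that common label to equal $h^*(x)$. Finally, for invariant 4 at $i$, the set $S_i$ differs from $S_{i-1}$ only by counterexamples of the form $(x,h^*(x))$, so $h^*$ remains consistent with $S_i$; and each label in $T_{i+1}$ is either queried from $\LABEL$ (which returns $h^*(x)$ in the realizable case) or inferred from agreement on $V_i^{k_i}$ (which equals $h^*(x)$ by invariant 3 just established), so $h^*$ is consistent with $T_{i+1}$ as well.

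The main subtlety is simply keeping the four invariants in the correct order within one inductive step so the argument does not become circular; once the dependencies $2 \to 1 \to 3 \to 4$ are respected, each implication is a direct consequence of the $\SEARCH$ and $\LABEL$ oracle specifications together with the fact that the verification loop always selects the minimum complexity level at which the current constraints are satisfiable.
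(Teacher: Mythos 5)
Your proof is correct and takes essentially the same approach as the paper: a simultaneous induction on $i$ over all four invariants, using that $\SEARCH$ always returns pairs $(x,h^*(x))$ and that $\LABEL(x)=h^*(x)$ in the realizable case, with the invariants established in the same logical order. The one small elision is the explicit observation that once $k$ reaches $k^*$ the verification loop must break, because $h^*\in H_{k^*}(S\cup T_i)$ forces $\SEARCH_{H_{k^*}}(H_{k^*}(S\cup T_i))$ to return $\bot$; this is what actually justifies your claim that the ``new $k$'' is still at most $k^*$ (that claim only applies when the old $k<k^*$), and the paper states it explicitly.
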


\begin{proof}
It is easy to see that $S$ only contains examples provided by \SEARCH, and hence the labels are
consistent with $h^*$.

Now we prove that the invariants hold by induction on $i$, starting with $i=0$.
For the base case, only the last invariant needs checking, and it is true because the labels in
$T_1$ are obtained from \LABEL.

For the inductive step, fix any $i\geq1$, and assume that $k_{i-1} \leq k^*$, and that $h^*$ is
consistent with $T_i$.
Now consider the verification stage in iteration $i$.
We first prove that the loop in the verification stage will terminate and establish some properties
upon termination.
Observe that $k$ and $S$ are initially $k_{i-1}$ and $S_{i-1}$, respectively.
Throughout the loop, the examples added to $S$ are obtained from \SEARCH, and hence are consistent
with $h^*$.
Thus, $h^* \in H_{k^*}(S \cup T_i)$, implying $H_{k^*}(S \cup T_i) \neq \emptyset$.
If $k = k^*$, then $\SEARCH_{H_{k^*}}(H_{k^*}(S \cup T_i))$ would return $\bot$ and
Algorithm~\ref{alg:seabel} would exit the loop.
If $\SEARCH_{H_k}(H_k(S \cup T_i)) \neq \bot$, then $k < k^*$, and $k$ cannot be increased beyond
$k^*$ since $H_{k^*}(S \cup T_i) \neq \emptyset$.
Thus, the loop must terminate with $k \leq k^*$, implying $k_i \leq k^*$.
This establishes invariants 1 and 2.
Moreover, because the loop terminates with $\SEARCH_{H_k}(H_k(S \cup T_i))$ returning $\bot$ (and
here, $k = k_i$ and $H_k(S \cup T_i) = V_i^{k_i}$), there is no counterexample $x \in \calX$ such
that $h^*$ disagrees with every $h \in V_i^{k_i}$.
This implies that $h^*(x) = V_i^{k_i}(x)$ for all $x \notin \DIS(V_i^{k_i})$, i.e., invariant 3.

Now consider any $(x,y)$ added to $T_{i+1}$ in the sampling stage.
If $x \in \DIS(V_i^{k_i})$, the label is obtained from \LABEL, and hence is consistent with $h^*$;
if $x \notin \DIS(V_i^{k_i})$, the label is $V_i^{k_i}(x)$, which is the same as $h^*(x)$ as
previously argued.
So $h^*$ is consistent with all examples in $T_{i+1}$, and hence also all examples in $S_i \cup
T_{i+1}$, proving invariant 4. This completes the induction.
%\emph{Base Case.}
%For $i = 1$, since the labels in $S$ are obtained from \SEARCH, and the labels in $T_1$ are obtained
%from \LABEL, we have $h^*(x) = y$ for all $(x,y)$ in $S \cup T_1$.
%Thus, $h^* \in H_{k^*}(S \cup T_1)$, implying $H_{k^*}(S \cup T_1) \neq \emptyset$.
%Moreover, if $k = k^*$, then $\SEARCH_{H_{k^*}}(H_{k^*}(S \cup T_1))$ would return $\bot$ and
%Algorithm~\ref{alg:seabel} exits the loop in the verification stage.
%So the loop in the verification stage terminates, and $k_1 \leq k^*$, and $h^*$ is consistent with
%$S_1 \cup T_1$.
%
%\emph{Inductive Case.}
%Suppose the invariants hold through iteration $i-1$.
%Then $\SEARCH_{H_k}(H_k(S \cup T_i))$ returning $\bot$ implies that $h^*$ agrees with
%$H_k(S \cup T_i)$ outside of $\DIS(H_k(S \cup T_i))$.
%This, in turn, implies that $h^*(x) = y$ for all $(x,y) \in T_i$ (as well as all $(x,y) \in S$).
%Thus, $h^* \in H_{k^*}(S \cup T_i)$, implying $H_{k^*}(S \cup T_i) \neq \emptyset$.
%Moreover, if $k$ were $k^*$, $\SEARCH_{H_{k^*}}(H_{k^*}(S \cup T_i)$ would return $\bot$ and
%Algorithm~\ref{alg:seabel} exits the loop in the verification stage.
%Therefore, at the end of iteration $i$, we have $k \leq k^*$.
\end{proof}

Let $E_i$ be the event in which the following hold:
\begin{enumerate}
  \item
    For every $k \geq 0$, every $h \in H_k$ satisfies
    \[
      \err(h, D_{i-1})
      \ \leq \
      \err(h, T_i)
      +
      \sqrt{ \err(h, T_i)
      \sigma_k(2^i,\delta_{i,k})}
      +
      \sigma_k(2^i,\delta_{i,k})
      \,.
    \]

  \item The number of $\LABEL$ queries in iteration $i$ (to form $T_{i+1}$) is at most
    \[
      2^{i+1} \Pr_{x \sim D_\calX}[x \in \DIS(V_i^{k_i})] + O\del{
        \sqrt{2^{i+1} \Pr_{x \sim D_\calX}[x \in \DIS(V_i^{k_i})] \log(1/\delta_i)} + \log(1/\delta_i)
      }
      \,,
    \]
\end{enumerate}
Using Theorem~\ref{thm:vc} and Lemma~\ref{lem:invbern}, along with the union bound, $\Pr(E_i) \geq 1 -
\delta_i$.
Define $E := \cap_{i=1}^{\infty} E_i$; a union bound implies that $\Pr(E) \geq 1 - \delta$.

We now prove Theorem~\ref{thm:seabel}, starting with the error rate guarantee.
Condition on the event $E$.
Since $k_i \leq k^*$ (Lemma~\ref{lem:seabel}), the definition of $\sigma_k$ from~\eqref{eq:sigma_k}, the
halting condition in Algorithm~\ref{alg:seabel}, and Fact~\ref{fact:sigma} imply that the algorithm
must halt after at most $I$ iterations, where
\begin{equation}
  2^I
  \ \leq \
  O\del{
    \frac{1}{\epsilon}
    \del{ d_{k^*} \log \frac{1}{\epsilon} + \log \frac{k^*}{\delta} }
  }
  \,.
  \label{eq:seabel-iter}
\end{equation}
So let $I$ denote the iteration in which Algorithm~\ref{alg:seabel} halts.
By definition of $E_I$, we have
\begin{align*}
  \err(\hat{h}, D_{i-1})
  & \ \leq \ \err(\hat{h}, T_{i}) + \sqrt{\err(\hat{h}, T_{i}) \sigma_{k_i}(2^{i}, \delta_{i,{k_i}})} +
  \sigma_{k_i}(2^{i}, \delta_{i,{k_i}}) \\
  & \ = \ \sigma_{k_i}(2^{i}, \delta_{i,{k_i}})
  \ \leq \ \epsilon
  \,,
\end{align*}
where the second inequality follows from the termination condition.
By Lemma~\ref{lem:seabel}, $h^*(x) = V_{i-1}^{k_{i-1}}(x)$ for all $x \notin \DIS(V_{i-1}^{k_{i-1}})$.
Therefore, $D(\cdot \mid x) = D_{i-1}(\cdot \mid x)$ for every $x$ in the support of $D_\calX$, and
\[
  \err(\hat{h},D)
  \ = \ \err(\hat{h},D_{i-1})
  \ \leq \ \epsilon
  \,.
\]

Now we bound the unlabeled, \LABEL, and \SEARCH complexities, all conditioned on
event $E$.
First, as argued above, the algorithm halts after at most $I$ iterations, where
$2^I$ is bounded as in~\eqref{eq:seabel-iter}.
The number of unlabeled examples drawn from $D_\calX$ across all iterations is
within a factor of two of the number of examples drawn in the final sampling
stage, which is $O(2^I)$.
Thus~\eqref{eq:seabel-iter} also gives the bound on the number of unlabeled
examples drawn.

Next, we consider the \SEARCH complexity.
For each iteration $i$, each call to $\SEARCH$ either returns a counterexample
that forces $k$ to increment (but never past $k^*$, as implied by
Lemma~\ref{lem:seabel}), or returns $\bot$ which causes an exit from the
verification stage loop.
Therefore, the total number of $\SEARCH$ calls is at most
\[
  k^* + I
  \ = \
  k^* + O\del{ \log\frac{d_{k^*}}{\epsilon} + \log\log\frac{k^*}{\delta} }
  .
\]

Finally, we consider the \LABEL complexity.
For $i \leq I$, we first show that the version space $V_i^{k_i}$ is always contained in a ball of
small radius (with respect to the disagreement pseudometric).
Specifically, for every $h, h'$ in $V_i^{k_i}$, $\err(h, T_i) = 0$ and $\err(h, T_i) = 0$.
By definition of $E_i$, this implies that
\[ \err(h, D_{i-1}) \ \leq \ \sigma_{k_i}(2^i,\delta_{i,k_i})
\quad \text{and} \quad
 \err(h', D_{i-1}) \ \leq \ \sigma_{k_i}(2^i,\delta_{i,k_i}) . \]
Therefore, by the triangle inequality and the fact $k_i \leq k^*$,
\[
  \Pr_{x\sim D}[h(x) \neq h'(x)]
  \ \leq \
  2\sigma_{k_i}(2^i,\delta_{i,k_i})
  \ \leq \
  2\sigma_{k^*}(2^i,\delta_{i,k^*})
  .
\]
Also, the upper bound $2^I \leq \tilde{O}(d_{k^*}/\epsilon)$
from~\eqref{eq:seabel-iter} implies the lower bound $\sigma_{k^*}(2^i,\delta_{i,
k^*}) \geq \epsilon/2$ for $i \leq I$.
Thus, the probability mass of the disagreement region can be bounded as
\begin{eqnarray*}
  \Pr_{x \sim D_\calX}[x \in \DIS(V_i^{k_i})]
  &\leq& \theta_{k_i}(\epsilon) \cdot 2\sigma_{k^*}(2^i, \delta_{i, k^*})
  .
\end{eqnarray*}
By definition of $E_i$, the number of queries to $\LABEL$ at iteration $i$ is at most
\[
  2^{i+1} \Pr_{x \sim D_\calX}[x \in \DIS(V_i^{k_i})] + O\del{\sqrt{2^{i+1} \Pr_{x \sim D_\calX}[x
  \in \DIS(V_i^{k_i})] \log(1/\delta_{i, k})} + \log(1/\delta_{i})}
  ,
\]
which is at most
\[
  O\del{2^i \cdot \theta_{k_i}(\epsilon) \cdot \sigma_{k^*}(2^i, \delta_{i, k^*}) }
  .
\]
%From the setting of $m = 2^I$, we get that
%\[
%  I \ = \ O \del{\log\frac{d_{k^*}}{\epsilon} + \log\log\frac{k^*}{\delta}}
%  .
%\]
%  Now, combining the results in items 1 and 2,
We conclude that the total number of $\LABEL$ queries by Algorithm~\ref{alg:seabel} is bounded by
\begin{align*}
  \lefteqn{
    2 + \sum_{i=1}^I O\del{2^i \cdot \theta_{k_i}(\epsilon) \cdot \sigma_{k^*}(2^i, \delta_{i, k^*})}
  } \\
  & \ = \ 2 + \sum_{i=1}^I O\del{2^i \cdot \max_{k \leq k^*} \theta_k(\epsilon) \cdot \sigma_{k^*}(2^i, \delta_{i, k^*}) } \\
%  & \ = \ O\del{\max_{k \leq k^*}\theta_k(\epsilon) \cdot \left(\sum_{i=1}^I 2^i \sigma_{k^*}(2^i, \delta_{i, k^*})\right) } \\
  & \ = \ O\del{\max_{k \leq k^*}\theta_k(\epsilon) \cdot \left( \sum_{i=1}^I 2^i \cdot \frac{d\ln(2^i) + \ln(\frac{(i^2+i)(k^*)^2}{\delta})}{2^i} \right) } \\
  & \ = \ O\del{\max_{k \leq k^*}\theta_k(\epsilon) \cdot \left( d_{k^*} I^2 + I \log\frac{k^*}{\delta} \right) }\\
  & \ = \ O\del{\max_{k \leq k^*}\theta_k(\epsilon) \cdot \del{ d_{k^*} \del{
\log\frac{d_{k^*}}{\epsilon} + \log\log\frac{k^*}{\delta} }^2 + \del{ \log\frac{d_{k^*}}{\epsilon} +
\log\log\frac{k^*}{\delta} } \log\frac{k^*}{\delta} } } \\
& \ = \ \tilde{O}\del{\max_{k \leq k^*}\theta_k(\epsilon) \cdot \del{ d_{k^*} \cdot
\log^2\frac{1}{\epsilon} + \log k^*}}
\end{align*}
as claimed.
\qed

\section{\alga: An Adaptive Agnostic Algorithm}
\label{sec:alarch}

In this section, we present a generalization of \algrb that works in the
agnostic setting.
We call this algorithm \alga (Algorithm~\ref{alg:budgetalarch}).

%Below we present \alga, an adaptive agnostic algorithm.

\subsection{Description of \alga}

\alga proceeds in iterations like \algrb.
Each iteration is split into three stages: the error estimation stage,
the verification stage, and the
sampling stage.

In the error estimation stage, $\alga$ uses a structural risk minimization
approach (Step~\ref{step:srm-alarch})
%(Steps~\ref{step:start-srm-alarch}--\ref{step:end-srm-alarch})
to
compute $\gamma_{i-1}$, a (tight) upper bound on $\Pr[h^*(x) \neq y, x \in
\DIS(V_{i-1})]$. (See item 1 of Lemma~\ref{lem:invkagnostic} for justification.)

The verification stage
(Steps~\ref{step:start-verification-alarch}--\ref{step:end-verification-alarch})
and sampling stage
(Steps~\ref{step:start-sampling-alarch}--\ref{step:end-sampling-alarch}) in
\alga are similar to the corresponding stages in \algrb.

Same as \algrb, the algorithm calls Algorithm~\ref{alg:sal},~\ref{alg:pvs}
and~\ref{alg:uvs} in Appendix~\ref{appendix-amortized} (\SAL, \PVS and \UVS,
respectively), where we slightly abuse the notation in \SAL that if the counter parameter
is missing then it simply does not get updated.

%~\ref{alg:pvs}
%\PVS

%In the verification stage, \alga makes repeated calls to \SEARCH to advance
% to a hypothesis class
%as of high complexity as possible, until $\bot$ is returned (Steps~\ref{step:start-verification-alarch}--\ref{step:end-verification-alarch}).
%When $\bot$ is returned, it guarantees that whenever the latest version space
%completely agrees on an unlabeled point, then it is also in agreement with
%$h^*$, even if it does not contain $h^*$.
%
%%Second, it performs frequent checks using $\SEARCH$ oracle to make sure there is no counterexample in version space , so that $h^*$ is still the optimal hypothesis in $\cup_k H_k$ with respsect to the distribution from which $T_{i+1}$ gets sampled subsequently.
%
%In the sampling stage, \alga performs selective sampling
%(Steps~\ref{step:start-sampling-alarch}--\ref{step:end-sampling-alarch}),
%querying and infering labels based on disagreement over the verified version space $V_i^{k_i}$.
%
%%Lastly, if the number of unlabeled examples processed is enough, we return hypothesis $\hat{h}_I$, which is guaranteed to have error at most $\nu + \epsilon$ with high probability.

\begin{algorithm}[H]
  \caption{\alga}
  \label{alg:budgetalarch}

  \begin{algorithmic}[1]

    \REQUIRE Nested hypothesis set $H_0 \subseteq H_1 \subseteq \dotsb$; oracles $\LABEL$ and $\SEARCH$; learning parameter $\delta \in (0,1)$; unlabeled examples budget $m = 2^{I+2}$.

    \ENSURE hypothesis $\hat{h}$.

    \STATE \textbf{initialize} $S \gets \emptyset$, $k_0\gets 0$.

    \STATE Draw $x_{1,1}, x_{1,2}$ at random from $D_\calX$, $T_1 \gets \cbr{(x_{1,1},\LABEL(x_{1,1})), (x_{1,2},\LABEL(x_{1,2}))}$

    \FOR{$i = 1, 2, \dotsc, I$}

          \STATE $\gamma_{i-1} \gets \min_{k' \geq k_{i-1}, h \in H_{k'}}\cbr{\err(h, T_i) + \sqrt{\err(h, T_i)\sigma_{k'}(2^i, \delta_{i,k'})} + \sigma_{k'}(2^i, \delta_{i,k'})}$
          \label{step:srm-alarch}
          \COMMENT{Error estimation stage (Step~\ref{step:srm-alarch})}

          %\STATE $(\hat{h}_i, \hat{k}_i) \gets \arg\min_{k' \geq k_{i-1}, h \in H_{k'}}\cbr{\err(h, T_i) + \sqrt{\err(h, T_i)\sigma_{k'}(2^i, \delta_{i,k'})} + \sigma_{k'}(2^i, \delta_{i,k'})}$
          %\label{step:start-srm-alarch}
          %\COMMENT{Error estimation stage (Steps~\ref{step:start-srm-alarch}--\ref{step:end-srm-alarch})}

          %\STATE $\gamma_{i-1} \gets \err(\hat{h}_i, T_i) + \sqrt{\err(\hat{h}_i, T_i)\sigma_{\hat{k}_i}(2^i, \delta_{i,\hat{k}_i})} + \sigma_{\hat{k}_i}(2^i, \delta_{i,\hat{k}_i})$
          %\label{step:end-srm-alarch}

          \STATE $S \gets S_{i-1}$, $k \gets k_{i-1}$.
          \COMMENT{Verification stage (Steps~\ref{step:start-verification-alarch}--\ref{step:end-verification-alarch})}
          \label{step:start-verification-alarch}

          \STATE $V_i^k \gets \PVS(H_k(S), T_i, \delta_i)$

          %\STATE Update version space:

          %\[
          %    \kern-47pt
          %    V_i^k \gets \cbr{h \in H_k(S): \err(h, T_i) \leq \min_{h' \in H_k(S)} \err(h', T_i) + 2\sqrt{\err(h', T_i)\sigma_{k}(2^i, \delta_{i,k})} + 3\sigma_{k}(2^i, \delta_{i,k}) }
          %\]

          \LOOP

              \IF{$\min_{h \in H_k(S)} \err(h, T_i) > \gamma_{i-1} + \sqrt{\gamma_{i-1}\sigma_k(2^i, \delta_{i,k})} + \sigma_k(2^i, \delta_{i,k})$}
              %(See item (a) of Claim~\ref{claim:alarch-helper}.)
              \label{step:check-error-alarch}

                  \STATE $(k, S, V_i^k) \gets \UVS(k, S, \emptyset)$

              \ELSE

                  %Update version space:
                  %\[ \kern-47pt V_i^k \gets \cbr{h \in H_k(S): \err(h, T_i) \leq \min_{h' \in H_k(S)} \err(h', T_i) + 2\sqrt{\err(h', T_i)\sigma_{k}(2^i, \delta_{i,k})} + 3\sigma_{k}(2^i, \delta_{i,k}) }\]
                  %(See item (b) of Claim~\ref{claim:alarch-helper}.)

                  \STATE $e \gets \SEARCH_{H_k}(V_{i}^k)$

                  \IF{$e \neq \bot$}

                    \STATE $(k, S, V_i^k) \gets \UVS(k, S, \cbr{e})$

                    %\STATE $S \gets S \cup \cbr{e}$

                    %\STATE $k \gets \min\cbr{k' > k: H_{k'}(S) \neq \emptyset}$

                  \ELSE

                    \STATE \textbf{break}

                  \ENDIF

              \ENDIF

          \ENDLOOP
          \label{step:end-verification-alarch}

          \STATE $S_i \gets S$, $k_i \gets k$

          \STATE $T_{i+1} \gets \emptyset$ \COMMENT{Sampling stage (Steps~\ref{step:start-sampling-alarch}--\ref{step:end-sampling-alarch})}
          \label{step:start-sampling-alarch}

          \FOR{$j = 1, 2, \dotsc, 2^{i+1}$}

            %\STATE $x_{i+1,j} \gets \text{independent draw from $D_\calX$}$
            %(the corresponding label is hidden)

            %\IF{$x_{{i+1},j} \in \DIS(V_i^{k_i})$}

            %  \STATE $T_{i+1} \gets T_{i+1} \cup \cbr[0]{ (x_{i+1,j},\LABEL(x_{i+1,j})) }$

            %\ELSE

            %  \STATE $T_{i+1} \gets T_{i+1} \cup \cbr[0]{ (x_{i+1,j},V_{i}^{k_i}(x_{i+1,j})) }$

            %\ENDIF

            \STATE $T_{i+1} \gets \SAL(V_i^{k_i}, \LABEL, T_{i+1})$

          \ENDFOR
          \label{step:end-sampling-alarch}

    \ENDFOR

    \RETURN any $\hat{h} \in V_I^{k_I}$.

  \end{algorithmic}

\end{algorithm}

%\begin{assumption}
%Assume there is a minimal index $k^*$ and classifier $h^* = h_{k^*}^*$ is in $H_{k^*}$ such that $\err(h^*) = \nu$.
%\end{assumption}
\subsection{Proof of Theorem~\ref{thm:alarch}}
%We prove Theorem~\ref{thm:alarch} by showing the following two theorems.
Let
\begin{eqnarray*}
  M(\nu, k^*,\epsilon,\delta)
  \ &:=& \
  \min\cbr{ 2^n : n \in \mathbb{N} ,\, 6\sqrt{\nu \sigma_{k^*}(2^n,
  \delta_{n,{k^*}})} + 21\sigma_{k^*}(2^n, \delta_{n,{k^*}}) \leq \epsilon }
  \\
  \ &=& \
  O\del{
    \frac{
      (d_{k^*}\log(1/\epsilon) + \log(k^*/\delta))
      (\nu + \epsilon)
    }{\epsilon^2}
  }
  .
\end{eqnarray*}
where the second line is from Fact~\ref{fact:sigma}.

%\begin{theorem}[Accuracy]
%Assume $\err(h^*) = \nu$. If Algorithm~\ref{alg:budgetalarch} is run with inputs hypothesis classes $\cbr{H_k}_{k=0}^{\infty}$, oracles $\SEARCH$ and $\LABEL$, learning parameter $\delta$, unlabeled examples budget $m = M(\nu, k^*,\epsilon,\delta)$, then with probability $1-\delta$,

%\label{thm:errorboundbudget}
%\end{theorem}
%[Query Complexity]

\begin{theorem}[Restatement of Theorem~\ref{thm:alarch}]
Assume $\err(h^*) = \nu$. If Algorithm~\ref{alg:budgetalarch} is run with inputs hypothesis classes $\cbr{H_k}_{k=0}^{\infty}$, oracles $\SEARCH$ and $\LABEL$, learning parameter $\delta$, unlabeled examples budget $m = M(\nu, k^*,\epsilon,\delta)$ and the disagreement coefficient of $H_k(S)$ is at most $\theta_k(\cdot)$, then, with probability $1-\delta$:\\
(1) The returned hypothesis $\hat{h}$ satisfies
\[ \err(\hat{h}) \leq \nu + \epsilon~. \]
(2) The total number of queries to oracle $\SEARCH$ is at most
\[ k^* + \log m \leq k^* + O\del{\log\frac{d_{k^*}}{\epsilon} + \log\log\frac{k^*}{\delta}}. \]
(3) The total number of queries to oracle $\LABEL$ is at most
\[ \tilde{O}\left(\max_{k \leq k^*} \theta_k(2\nu + 2\epsilon) \cdot d_{k^*}\left(\log \frac{1}{\epsilon}\right)^2 \cdot \left(1 + \frac{\nu^2}{\epsilon^2}\right)\right)~. \]
\label{thm:restatealarch}
\end{theorem}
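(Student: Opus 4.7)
The plan is to follow the same overall scaffolding as the proof of Theorem~\ref{thm:seabel} for \algrb, but with the invariants carefully slackened to accommodate label noise. First I would define a high-probability good event $E$ on which, for every iteration $i$ and every complexity index $k$, the two-sided Bernstein/VC inequality (Theorem~\ref{thm:vc}) holds for all $h \in H_k$ on the sample $T_i$ with tolerance $\sigma_k(2^i, \delta_{i,k})$, and simultaneously the number of $\LABEL$ queries made by $\SAL$ in iteration $i$ concentrates around its expectation $2^{i+1}\Pr[x \in \DIS(V_i^{k_i})]$. A union bound over $i$ and $k$ using the $\delta_{i,k}$ choice bounds $\Pr(E^c) \leq \delta$; everything that follows is argued on $E$.

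Next I would establish the analogue of Lemma~\ref{lem:seabel}, by induction on $i$. The key invariants to maintain are: (a) $\gamma_{i-1}$ is a valid upper bound on $\err(h^*, D_{i-1})$, where $D_{i-1}$ is the distribution obtained by relabeling outside $\DIS(V_{i-1}^{k_{i-1}})$ using the version space's unanimous vote; (b) $h^* \in V_i^{k_i}$, because $\PVS$'s empirical threshold is lax enough to retain $h^*$ once $\gamma_{i-1}$ upper bounds its error; (c) $k_i \leq k^*$; and (d) whenever $\SEARCH$ returns $\bot$ in the verification stage, $h^*(x) = V_i^{k_i}(x)$ for every $x \notin \DIS(V_i^{k_i})$. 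Invariant (a) follows from the VC bound applied to $(h^*,k^*)$ inside the SRM at Step~\ref{step:srm-alarch}. For (c), I must argue that at $k = k^*$ the check at Step~\ref{step:check-error-alarch} does not fire: since $h^* \in H_{k^*}(S)$ and $\err(h^*,D_{i-1}) \leq \gamma_{i-1}$, the other direction of the VC inequality gives $\err(h^*, T_i) \leq \gamma_{i-1} + \sqrt{\gamma_{i-1}\sigma_{k^*}(2^i,\delta_{i,k^*})} + \sigma_{k^*}(2^i,\delta_{i,k^*})$, preventing any further $\UVS$ call; and then $\SEARCH_{H_{k^*}}(V_i^{k^*}) = \bot$ because $h^* \in V_i^{k^*}$ by (b).

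For the error bound I would proceed as in \algrb. Any $\hat h \in V_I^{k_I}$ satisfies $\err(\hat h, T_I) \leq \min_{h \in V_I^{k_I}} \err(h,T_I) + (\text{\PVS{} slack})$, which combined with the VC bound on the side of $D_{I-1}$ and the fact that $h^* \in V_I^{k_I}$ yields $\err(\hat h, D_{I-1}) \leq \err(h^*, D_{I-1}) + O(\sqrt{\nu \sigma_{k^*}(2^I, \delta_{I,k^*})} + \sigma_{k^*}(2^I, \delta_{I,k^*}))$. Since $D$ and $D_{I-1}$ agree on the non-disagreement region and $h^*(x) = V_I^{k_I}(x)$ there by invariant (d), the $D$-error of $\hat h$ differs from the $D_{I-1}$-error by at most the same additive slack. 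Plugging in $m = M(\nu,k^*,\epsilon,\delta)$ so that $6\sqrt{\nu\sigma_{k^*}(2^I,\delta_{I,k^*})} + 21\sigma_{k^*}(2^I,\delta_{I,k^*}) \leq \epsilon$ delivers $\err(\hat h) \leq \nu + \epsilon$.

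For the query complexities, each iteration contributes at most one trailing $\SEARCH$ call plus one call per $k$-advancement, and $k$ advances at most $k^*$ times in total; hence $\SEARCH$ is called at most $k^* + I \leq k^* + \log m$ times. For $\LABEL$, invariants (b) and (a) imply that every $h \in V_i^{k_i}$ has $\Pr[h(x) \neq h^*(x)] \leq 2\nu + O(\sigma_{k^*}(2^i,\delta_{i,k^*}))$ via the triangle inequality on $D_{i-1}$, so $\Pr[x \in \DIS(V_i^{k_i})] \leq \theta_{k_i}(2\nu + 2\epsilon) \cdot (2\nu + O(\sigma_{k^*}(2^i,\delta_{i,k^*})))$; multiplying by the $2^{i+1}$ samples drawn, summing the geometric series up to $i = I$, and absorbing log factors yields the stated $\tilde O(\max_k \theta_k(2\nu+2\epsilon) \cdot d_{k^*}\log^2(1/\epsilon)\cdot (1+\nu^2/\epsilon^2))$ bound.

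The hardest part will be pinning down the constants in the chain that closes invariant (c): the pruning threshold in $\PVS$, the error-check threshold at Step~\ref{step:check-error-alarch}, and the slack used when inverting the VC inequality on $h^*$ must all be consistent so that, simultaneously, $h^*$ survives every $\PVS$, the error check never fires at $k = k^*$, and $\SEARCH$ returns $\bot$ there. Once this bookkeeping is done correctly—essentially mirroring the slack used in $\gamma_{i-1}$—the rest of the argument parallels the realizable proof with an extra additive $\nu$-term absorbed into the definition of $M$.
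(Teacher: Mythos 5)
Your plan starts from the right scaffolding (event $E$, induction on iterations, verification-stage check at $k^*$, radius bound for the disagreement region), and your argument for $k_i \leq k^*$ is correct. But there is a genuine gap in your invariant~(b), and it is load-bearing for your error bound.

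You claim that $h^* \in V_i^{k_i}$ for all $i$. This is false in the agnostic setting whenever $k_i < k^*$: by the tie-breaking in the definition of $k^*$ (favoring the smallest index), $h^* \notin H_{k_i}$ for any $k_i < k^*$, and therefore $h^* \notin V_i^{k_i} \subseteq H_{k_i}$. Unlike the realizable algorithm, \alga can legitimately terminate with $k_I < k^*$ --- the error check at Step~\ref{step:check-error-alarch} is designed precisely so that the algorithm stops increasing $k$ once the empirical best in $H_{k_i}(S)$ is already within the $\gamma_{i-1}$ slack, even if that class does not contain $h^*$. Your error-bound argument invokes ``the fact that $h^* \in V_I^{k_I}$'' to pass from $\min_{h \in V_I^{k_I}} \err(h,T_I)$ to $\err(h^*,T_I)$, and this step breaks down when $h^* \notin V_I^{k_I}$. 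What the paper's proof does instead (Lemma~\ref{lem:invkagnostic} and Claim~\ref{claim:alarchvs}) is exploit the fact that the check at Step~\ref{step:check-error-alarch} did \emph{not} fire for $k = k_i$: this gives $\min_{h\in H_{k_i}(S)}\err(h,T_i) \leq \gamma_{i-1} + \sqrt{\gamma_{i-1}\sigma_{k_i}} + \sigma_{k_i}$, and chaining with the SRM estimate $\gamma_{i-1} \leq \err(h^*,D_{i-1}) + 2\sqrt{\err(h^*,D_{i-1})\sigma_{k^*}} + 3\sigma_{k^*}$ and the \PVS{} pruning threshold yields the error bound for every $h \in V_i^{k_i}$ \emph{without} ever placing $h^*$ inside that version space. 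Similarly, your triangle-inequality step for the \LABEL bound should compare two hypotheses inside $V_i^{k_i}$ rather than a hypothesis to $h^*$, since the disagreement coefficient $\theta_{k_i}$ is defined by suprema over centers \emph{in} $H_{k_i}(S)$.

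A second, smaller omission: to justify that $\gamma_{i-1} \geq \err(h^*,D_{i-1})$ (the lower-bound direction you label invariant~(a)) you need $h^*$ to be the minimizer of $\err(\cdot,D_{i-1})$ over $\bigcup_k H_k$ --- this is not automatic, because $D_{i-1}$ relabels the agreement region. Applying VC to $(h^*,k^*)$ gives only the \emph{upper} bound $\gamma_{i-1} \leq \err(h^*,D_{i-1}) + \text{slack}$. The lower bound requires the paper's invariant~5, that passing from $D$ to $D_i$ can only increase the excess error of competitors over $h^*$, which must be established inductively as part of the same lemma.
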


The proof relies on an auxillary lemma.
First, we need to introduce the following notation.

%Denote by $k_i$ the value of $k$ at the end of iteration $i$.

Observe that $T_{i+1}$ is an iid sample of size $2^{i+1}$ from a
distribution (which we call $D_i$) over labeled examples $(x,y)$,
where $x \sim D_\calX$ and the conditional distribution is
\[
  D_{i}(y\mid x) \ := \
  \begin{cases}
    \ind{y = V_{i}^{k_{i}}(x)} & \text{if $x \notin \DIS(V_{i}^{k_{i}})$} \,, \\
    D(y\mid x) & \text{if $x \in \DIS(V_{i}^{k_{i}})$} \,.
  \end{cases}
\]
$T_1$ is a sample of size 2 from $D_0 := D$.

Let $E_i$ be the event in which the following hold:
\begin{enumerate}
  \item
    For every $k \geq 0$, every $h \in H_k$ satisfies
    \begin{align*}
      \err(h, D_{i-1})
      & \ \leq \
      \err(h, T_i)
      +
      \sqrt{ \err(h, T_i)
      \sigma_k(2^i,\delta_{i,k})}
      +
      \sigma_k(2^i,\delta_{i,k})
      \,,
      \\
      \err(h, T_i)
      & \ \leq \
      \err(h, D_{i-1})
      +
      \sqrt{ \err(h, D_{i-1})
      \sigma_k(2^i,\delta_{i,k})}
      +
      \sigma_k(2^i,\delta_{i,k})
      \,.
    \end{align*}

  \item The number of $\LABEL$ queries at iteration $i$ is at most
    \[
      2^{i+1} \Pr_{x \sim D_\calX}[x \in \DIS(V_i^{k_i})] + O\del{
        \sqrt{2^{i+1} \Pr_{x \sim D_\calX}[x \in \DIS(V_i^{k_i})] \log(1/\delta_i)} + \log(1/\delta_i)
      }
      \,.
    \]
\end{enumerate}

Using Theorem~\ref{thm:vc} and Lemma~\ref{lem:invbern}, along with the union bound, $\Pr(E_i) \geq 1 - \delta_i$.
Define $E: =\cap_{i=1}^{\infty} E_i$, by union bound, $\Pr(E) \geq 1 - \delta$. Recall that $k_i$ is the value of $k$ at the end of iteration $i$.
%Suppose $E$ happens.

\begin{lemma}\label{lemma:alarch-invariants}
On event $E$, Algorithm~\ref{alg:budgetalarch} maintains the following invariants:
\begin{enumerate}
\item For all $i \geq 1$, $\gamma_{i-1}$ is such that
\[ \err(h^*, D_{i-1}) \leq \gamma_{i-1} \leq \err(h^*, D_{i-1}) + 2\sqrt{\err(h^*, D_{i-1}) \sigma_{k^*}(2^i, \delta_{i,k^*})} + 3\sigma_{k^*}(2^i, \delta_{i,k^*}).\]
\item The loop in the verification stage of iteration $i$ terminates for all $i\geq1$.
\item $k_i \leq k^*$ for all $i\geq0$.
\item $h^*(x) = V_i^{k_i}(x)$ for all $x \notin \DIS(V_i^{k_i})$ for all $i\geq1$.
\item For all $i\geq0$, for every hypothesis $h$, $\err(h, D_i) - \err(h^*, D_i) \geq \err(h, D) - \err(h^*, D)$. Therefore, $h^*$ is the optimal hypothesis among $\cup_{k} H_k$ with respect to $D_i$.
\end{enumerate}
\label{lem:invkagnostic}
\end{lemma}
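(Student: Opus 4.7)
The plan is to prove all five invariants simultaneously by induction on $i$, conditioning on the event $E = \bigcap_{i \geq 1} E_i$, which holds with probability at least $1 - \delta$ by the union bound. The base case $i = 0$ is immediate: $k_0 = 0 \leq k^*$ gives invariant 3, $D_0 = D$ makes invariant 5 an equality, and the remaining three invariants are vacuous at $i = 0$. For the inductive step at iteration $i$, I would first use invariants at $i-1$ to establish invariant 1 at $i$, then use invariant 1 to handle invariants 2 and 3 (loop termination and $k_i \leq k^*$), and finally derive invariants 4 and 5 from the exit condition of the verification loop.

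For invariant 1, I would apply the two-sided Bernstein-type bound in event $E_i$. Because the inductive invariant 3 ensures $k_{i-1} \leq k^*$, the pair $(k^*, h^*)$ is a valid candidate in the SRM minimization on line~\ref{step:srm-alarch}, and one direction of $E_i$ applied to $(k^*, h^*)$ yields the upper bound on $\gamma_{i-1}$. For the lower bound, let $(k', h)$ attain the minimum; the other direction of $E_i$ gives $\err(h, D_{i-1}) \leq \gamma_{i-1}$, and the inductive invariant 5 (which makes $h^*$ optimal under $D_{i-1}$) then gives $\err(h^*, D_{i-1}) \leq \err(h, D_{i-1}) \leq \gamma_{i-1}$.

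For invariants 2 and 3, I would show the verification loop cannot push $k$ beyond $k^*$. Each loop iteration either exits or strictly increments $k$ via \UVS, so termination follows once $k$ is bounded. I would argue that when $k$ reaches $k^*$, the loop exits on that same pass. Since $S$ contains only \SEARCH outputs (hence is labelled by $h^*$), we have $h^* \in H_{k^*}(S)$; applying event $E_i$ to $h^*$ and using the monotonicity of $x \mapsto x + \sqrt{xc} + c$ together with invariant 1's bound $\err(h^*, D_{i-1}) \leq \gamma_{i-1}$ shows that $\min_{h \in H_{k^*}(S)} \err(h, T_i)$ falls below the threshold tested on line~\ref{step:check-error-alarch}. \SEARCH is then invoked on $V_i^{k^*}$; event $E_i$ ensures $h^* \in V_i^{k^*}$ after \PVS pruning, so no $x$ can have every $h \in V_i^{k^*}$ disagreeing with $h^*(x)$, forcing \SEARCH to return $\bot$ and the loop to exit with $k_i \leq k^*$.

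Invariant 4 is then immediate from the exit condition: if $x \notin \DIS(V_i^{k_i})$ and the unanimous label $V_i^{k_i}(x)$ disagreed with $h^*(x)$, the pair $(x, h^*(x))$ would be a counterexample contradicting \SEARCH's $\bot$. For invariant 5, I would decompose the excess error under $D_i$ by conditioning on whether $x$ lies in $\DIS(V_i^{k_i})$: inside, $D_i(\cdot\mid x) = D(\cdot\mid x)$ by construction, matching the $D$-contribution exactly; outside, invariant 4 makes $D_i(\cdot\mid x)$ a point mass at $h^*(x)$, giving a contribution $\ind{h(x) \neq h^*(x)}$ that dominates the $D$-contribution $D(h^*(x)\mid x) - D(h(x)\mid x) \in [0,1]$. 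The hardest step will be the verification-loop analysis in invariant 3: it requires that \PVS (parametrized with $\delta_i$) provably retains $h^*$ under $E_i$, that the error threshold on line~\ref{step:check-error-alarch} is tuned so the check provably passes at $k = k^*$, and that these two properties compose cleanly across repeated \UVS and \PVS calls as $k$ climbs from $k_{i-1}$ to its final value.
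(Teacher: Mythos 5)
Your proposal is correct and follows essentially the same route as the paper's proof: induction on $i$, establishing invariant 1 first from $E_i$ together with the inductive invariant 5, then invariants 2--3 by arguing that the error-check at Step~\ref{step:check-error-alarch} provably does not trigger at $k = k^*$ and that $h^*$ survives \PVS pruning so \SEARCH returns $\bot$ there (the paper packages exactly these two facts as Claim~\ref{claim:alarch-helper}(a)--(b)), and finally deriving invariants 4 and 5 from the $\bot$ exit condition. One tiny imprecision in your treatment of invariant 5: on the agreement region the $D$-contribution $D(h^*(x)\mid x) - D(h(x)\mid x)$ need not lie in $[0,1]$ (since $h^*$ is only best in $\bigcup_k H_k$, not Bayes optimal), but it is always at most $1$, which is all the inequality $\ind{h(x)\neq h^*(x)} \geq D(h^*(x)\mid x) - D(h(x)\mid x)$ requires, so the conclusion stands.
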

\begin{proof}
  Throughout, we assume the event $E$ holds.

It is easy to see that $S$ only contains examples provided by \SEARCH, and hence the labels are
consistent with $h^*$.

Now we prove that the invariants hold by induction on $i$, starting with $i=0$. For the base case, invariant 3 holds since $k_0 = 0 \leq k^*$, and invariant 5 holds since $D_0 = D$ and $h^*$ is the optimal hypothesis in $\cup_k H_k$.

Now consider the inductive step.
We first prove that invariant 1 holds.
  \begin{enumerate}
    \item[(1)]
By definition of $E_i$, for all $k' \geq k_{i-1}$, we have for all $h \in H_{k'}$,
\[ \err(h, D_{i-1}) \leq \err(h, T_i) + \sqrt{\err(h, T_i)\sigma_{k'}(2^i,
  \delta_{i,k'})} + \sigma_{k'}(2^i, \delta_{i,k'}) \,. \]
Thus,
\[ \min_{h \in H_{k'}} \err(h, D_{i-1}) \leq \min_{h \in H_{k'}}\err(h, T_i) +
  \sqrt{\err(h, T_i)\sigma_{k'}(2^i, \delta_{i,k'})} + \sigma_{k'}(2^i,
  \delta_{i,k'}) \,. \]
%Recall that $\gamma_{i-1} = \min_{k' \geq k, h \in H_{k'}}\cbr{\err(h, T_i) + \sqrt{\err(h, T_i)\sigma_{k'}(2^i, \delta_{i,k'})} + \sigma_{k'}(2^i, \delta_{i,k'})}$,
Taking minimum over $k' \geq k_{i-1}$ on both sides, notice that $h^*$ is the optimal hypothesis with respect to $D_{i-1}$ and recall the definition of $\gamma_{i-1}$, we get
\[ \err(h^*, D_{i-1}) \leq \gamma_{i-1}  \,. \]

      \item[(2)]
By definition of $\gamma_{i-1}$, we have
\[ \gamma_{i-1} = \min_{k' \geq k_{i-1}, h \in H_{k'}}\cbr{\err(h, T_i) + \sqrt{\err(h, T_i)\sigma_{k'}(2^i, \delta_{i,k'})} + \sigma_{k'}(2^i, \delta_{i,k'})} \]
Taking $k' = k^*$, $h = h^*$, we get
\[ \gamma_{i-1} \leq \err(h^*, T_i) + \sqrt{\err(h^*, T_i)\sigma_{k^*}(2^i, \delta_{i,k^*})} + \sigma_{k^*}(2^i, \delta_{i,k^*}) \]
In conjunction with the fact that by definition of $E_i$,
\[ \err(h^*, T_i) \leq \err(h^*, D_{i-1}) + \sqrt{\err(h^*, D_{i-1}) \sigma_{k^*}(2^i, \delta_{i,k^*})} + \sigma_{k^*}(2^i, \delta_{i,k^*}) \]
We get
\[ \gamma_{i-1} \leq \err(h^*, D_{i-1}) + 2\sqrt{\err(h^*, D_{i-1})
      \sigma_{k^*}(2^i, \delta_{i,k^*})} + 3\sigma_{k^*}(2^i, \delta_{i,k^*})
      \,.
      \]
  \end{enumerate}
  Thus, invariant 1 is established for iteration $i$.

Now consider the verification stage in iteration $i$.
We first prove that the loop in the verification stage will terminate and establish some properties
upon termination.
Observe that $k$ and $S$ are initially $k_{i-1}$ and $S_{i-1}$, respectively.
Throughout the loop, the examples added to $S$ are obtained from \SEARCH, and hence are consistent
with $h^*$. In addition, we have the following claim regarding $k^*$.
\begin{claim}
If invariants 1--5 holds for iteration $i-1$, then for iteration $i$, the following holds:
%For any iteration $i$, the following holds:
\begin{enumerate}[label=(\alph*),leftmargin=*]
\item $\min_{h \in H_{k^*}(S)} \err(h, T_i) \leq \gamma_{i-1} + \sqrt{\gamma_{i-1} \sigma_{k^*}(2^i, \delta_{i,k^*})} + \sigma_{k^*}(2^i, \delta_{i,k^*})$
\item \begin{multline*}
    h^* \in V_i^{k^*} = \bigl\{h \in H_{k^*}(S): \\ \err(h, T_i) \leq \min_{h' \in
    H_{k^*}(S)} \err(h', T_i) + 2\sqrt{\err(h', T_i) \sigma_{k^*}(2^i,
    \delta_{i,k^*})} + 3\sigma_{k^*}(2^i, \delta_{i,k^*}) \bigr\}
    \,.
\end{multline*}
\end{enumerate}
  \label{claim:alarch-helper}
\end{claim}
\begin{proof}
Recall that $h^*$ is the optimal hypothesis under distribution $D_{i-1}$. We have already shown above that $\err(h^*, D_{i-1}) \leq \gamma_{i-1}$.
By the definition of $E_i$,
\begin{eqnarray*}
    \min_{h \in H_{k^*}(S)} \err(h, T_i)
    &\leq&
    \err(h^*, T_i) \\
    &\leq& \err(h^*, D_{i-1})
    +
    \sqrt{\err(h^*, D_{i-1})\sigma(2^i,\delta_{i, k^*})}
    +
    \sigma(2^i,\delta_{i, k^*})\\
    &\leq&
    \gamma_{i-1}
    +
    \sqrt{\gamma_{i-1} \sigma_{k^*}(2^i, \delta_{i, k^*})}
    +
    \sigma_{k^*}(2^i, \delta_{i, k^*})
\end{eqnarray*}
where the last inequality is from that $\err(h^*, D_{i-1}) \leq \gamma_{i-1}$. This proves item (a).

On the other hand, for all $h'$ in $H_{k^*}(S)$,
\begin{eqnarray*}
    \err(h^*, T_i) &\leq& \err(h^*, D_{i-1})
    +
    \sqrt{\err(h^*, D_{i-1})\sigma_{k^*}(2^i,\delta_{i, k^*})}
    +
    \sigma(2^i,\delta_{i, k^*})\\
    &\leq& \err(h', D_{i-1})
    +
    \sqrt{\err(h', D_{i-1}) \sigma_{k^*}(2^i,\delta_{i, k^*})}
    +
    \sigma(2^i,\delta_{i, k^*})\\
    &\leq& \err(h', T_i)
    +
    2\sqrt{\err(h', T_i) \sigma_{k^*}(2^i,\delta_{i, k^*})}
    +
    3\sigma(2^i,\delta_{i, k^*})
    \,.
\end{eqnarray*}
where the first inequality is from the definition of $E_i$, the second inequality is from Invariant 5 of iteration $i-1$, the third inequality is from the definition of $E_i$.

Thus, $\err(h^*, T_i) \leq \min_{h' \in H_{k^*}(S)} \err(h', T_i) + 2\sqrt{\err(h', T_i) \sigma_{k^*}(2^i,\delta_{i, k^*})} + 3\sigma_{k^*}(2^i,\delta_{i, k^*})$, proving item (b).
\end{proof}

Claim~\ref{claim:alarch-helper} implies that $k$ cannot increase beyond $k^*$.
To see this, observe that
Claim~\ref{claim:alarch-helper}(a) implies the condition
in Step~\ref{step:check-error-alarch} is not satisfied for $k=k^*$.
In addition, Claim~\ref{claim:alarch-helper}(b) implies that $h^* \in V_i^{k^*}
\neq \emptyset$, which in turn means that $\SEARCH_{H_{k^*}}(V_i^{k^*}) = \bot$.
Hence, the loop in the verification stage would terminate if $k$ ever reaches $k^*$.
Because iteration $i$ starts with $k \leq k^*$ (as invariant 3 holds in
iteration $i-1$), invariants 2 and 3 must also hold for iteration $i$.

%Thus, if $k = k^*$, then Claim~\ref{claim:alarch-helper}(a) implies that line 7 will never be satisfied. In addition,  Claim~\ref{claim:alarch-helper}(b) says that $h^* \in V_i^{k^*}$, implying $V_i^{k^*} \neq \emptyset$, then $\SEARCH_{H_{k^*}}(V_i^{k^*})$ would return $\bot$ and Algorithm~\ref{alg:budgetalarch} would exit the loop.
%If $\SEARCH_{H_k}(V_i^{k}) \neq \bot$, then $k < k^*$, and $k$ cannot be increased beyond
%$k^*$ since $V_i^{k^*} \neq \emptyset$.
%Thus, the loop must terminate with $k \leq k^*$, implying $k_i \leq k^*$.
%This establishes invariants 2 and 3.

Finally, we can establish invariants 4 and 5 for iteration $i$.
Because the loop terminates with $\SEARCH_{H_k}(V_i^{k_i})$ returning $\bot$,
 there is no counterexample $x \in \calX$ such
that $h^*$ disagrees with every $h \in V_i^{k_i}$.
This implies that $h^*(x) = V_i^{k_i}(x)$ for all $x \notin \DIS(V_i^{k_i})$
(i.e., invariant 4).
Hence, for any hypothesis $h$,
\[ \err(h, D_i)
  \ = \
  \Pr[h(x) \neq h^*(x), x \notin \DIS(V_{i}^{k_i})] + \Pr[h(x) \neq y, x \in
  \DIS(V_{i}^{k_i})] \,.
\]
Therefore,
\begin{align*}
  \lefteqn{
    \err(h, D_i) - \err(h^*, D_i)
  } \\
  & \ = \ \Pr[h(x) \neq h^*(x), x \notin \DIS(V_i^{k_i})] + \Pr[h(x) \neq y, x
  \in \DIS(V_i^{k_i})] \\
  & \qquad - \Pr[h^*(x) \neq y, x \in \DIS(V_i^{k_i})] \\
  & \ \geq \ \Pr[h(x) \neq y, x \notin \DIS(V_i^{k_i})] - \Pr[h^*(x) \neq y, x \notin \DIS(V_i^{k_i})] \\
  & \qquad + \Pr[h(x) \neq y, x \in \DIS(V_i^{k_i})] - \Pr[h^*(x) \neq y, x \in \DIS(V_i^{k_i})] \\
  & \ = \ \err(h, D) - \err(h^*, D)
  \,,
\end{align*}
which proves invariant 5 for iteration $i$.
\end{proof}

\begin{proof}[Proof of Theorem~\ref{thm:restatealarch}]
Supose event $E$ happens.

We first show a claim regarding the error of hypotheses in current
version spaces.
\begin{claim}
On event $E$, for all $i \geq 1$, for all $h \in V_i^{k_i}$,
\[ \err(h, D) \leq \err(h^*, D_{i-1}) + 6\sqrt{\err(h^*, D_{i-1}) \sigma_{k^*}(2^i, \delta_{i,k^*})} + 21\sigma_{k^*}(2^i, \delta_{i,k^*}). \]
\label{claim:alarchvs}
\end{claim}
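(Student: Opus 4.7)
The strategy is to chain four bounds: the $\PVS$ pruning criterion, the verification-loop break condition, invariant 1 of Lemma~\ref{lem:invkagnostic}, event $E_i$, and finally invariant 5 of the same lemma.

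First, since $h \in V_i^{k_i}$, the $\PVS$ criterion at level $k_i$ gives
\[
  \err(h, T_i) \ \leq \ \widehat m_i + 2\sqrt{\widehat m_i\, \sigma_{k_i}(2^i,\delta_{i,k_i})} + 3\sigma_{k_i}(2^i,\delta_{i,k_i}),
\]
where $\widehat m_i := \min_{h' \in H_{k_i}(S_i)} \err(h', T_i)$. The verification loop could only have broken through the ELSE branch of Step~\ref{step:check-error-alarch} (with $\SEARCH_{H_{k_i}}(V_i^{k_i}) = \bot$), so the ELSE guard yields $\widehat m_i \leq \gamma_{i-1} + \sqrt{\gamma_{i-1}\sigma_{k_i}(2^i,\delta_{i,k_i})} + \sigma_{k_i}(2^i,\delta_{i,k_i})$. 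Invariant 1 of Lemma~\ref{lem:invkagnostic} bounds $\gamma_{i-1}$ by $\err(h^*, D_{i-1}) + 2\sqrt{\err(h^*, D_{i-1})\sigma_{k^*}(2^i,\delta_{i,k^*})} + 3\sigma_{k^*}(2^i,\delta_{i,k^*})$, and invariant 3 ($k_i \leq k^*$) lets us replace every $\sigma_{k_i}(2^i,\delta_{i,k_i})$ by the (larger) $\sigma_{k^*}(2^i,\delta_{i,k^*})$ throughout.

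Next, event $E_i$ converts the empirical bound to the $D_{i-1}$ population bound,
\[
  \err(h, D_{i-1}) \ \leq \ \err(h, T_i) + \sqrt{\err(h, T_i)\,\sigma_{k_i}(2^i,\delta_{i,k_i})} + \sigma_{k_i}(2^i,\delta_{i,k_i}),
\]
which after substitution leaves $\err(h, D_{i-1})$ in the canonical form $\err(h^*, D_{i-1}) + O(\sqrt{\err(h^*, D_{i-1})\sigma_{k^*}(2^i,\delta_{i,k^*})}) + O(\sigma_{k^*}(2^i,\delta_{i,k^*}))$. Finally, I would invoke invariant 5 of Lemma~\ref{lem:invkagnostic} at iteration $i-1$: $\err(h, D) - \err(h^*, D) \leq \err(h, D_{i-1}) - \err(h^*, D_{i-1})$, rearranged as $\err(h, D) \leq \err(h^*, D) + \err(h, D_{i-1}) - \err(h^*, D_{i-1})$, which delivers the claimed bound.

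The main obstacle is the bookkeeping of constants across the composition. Each use of $\sqrt{x+y} \leq \sqrt{x} + \sqrt{y}$ produces mixed-degree cross-terms of the form $(\err(h^*, D_{i-1}))^{1/4}\sigma_{k^*}^{3/4}$ that do not fit the target template. These have to be folded back into the allowed $\sqrt{\err(h^*, D_{i-1})\sigma_{k^*}(2^i,\delta_{i,k^*})}$ and $\sigma_{k^*}(2^i,\delta_{i,k^*})$ terms via AM-GM-style inequalities such as $2A^{1/4}\sigma^{3/4} \leq \sqrt{A\sigma} + \sigma$. Tracking the constants through four successive such compositions is what must produce the exact factors 6 and 21 in the statement; I expect the arithmetic to be tedious but entirely mechanical.
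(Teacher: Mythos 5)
Your chain of inequalities in steps 1--5 (the $\PVS$ pruning criterion, the negated guard of the error-check at $k=k_i$, invariant~1 of Lemma~\ref{lem:invkagnostic}, $k_i \leq k^*$ to replace $\sigma_{k_i}$ with $\sigma_{k^*}$, and event $E_i$) is exactly the paper's proof of the claim. The paper's proof stops there: it concludes
\[
  \err(h, D_{i-1}) \ \leq \ \err(h^*, D_{i-1}) + 6\sqrt{\err(h^*, D_{i-1})\sigma_{k^*}(2^i,\delta_{i,k^*})} + 21\sigma_{k^*}(2^i,\delta_{i,k^*})
\]
and does \emph{not} invoke invariant~5 inside the claim's proof. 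Indeed, everywhere the claim is subsequently used, the paper applies it with $\err(\hat h, D_{I-1})$ on the left-hand side and only passes to $\err(\hat h, D)$ afterwards via invariant~5 in the main theorem proof. So the $\err(h,D)$ appearing in the claim's statement is almost certainly a typo for $\err(h,D_{i-1})$, and the ``right'' proof is your steps 1--5 alone.

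Your extra step 6 is worth flagging because it does not actually reproduce the claim as literally written. Invariant~5 gives $\err(h,D) - \err(h^*,D) \leq \err(h,D_{i-1}) - \err(h^*,D_{i-1})$; combined with the step-5 bound this yields $\err(h,D) \leq \err(h^*,D) + 6\sqrt{\err(h^*,D_{i-1})\sigma_{k^*}} + 21\sigma_{k^*}$, with $\err(h^*,D)$ rather than $\err(h^*,D_{i-1})$ as the leading term. Since $\err(h^*,D) \geq \err(h^*,D_{i-1})$ (the distribution $D_{i-1}$ forces agreement with $h^*$ off $\DIS(V_{i-1}^{k_{i-1}})$), this is a strictly weaker bound than the claim states. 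So your step 6 neither matches the paper nor salvages the stated claim; it should simply be dropped.

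One smaller remark on the constant bookkeeping. You anticipate fourth-root cross-terms like $(\err(h^*,D_{i-1}))^{1/4}\sigma^{3/4}$ that would need an AM-GM absorption. These never arise in the paper's derivation because all the recurring expressions have the form $a + \sqrt{ab} + b$, which is at most $(\sqrt{a} + \sqrt{b})^2$, so $\sqrt{a + \sqrt{ab} + b} \leq \sqrt{a} + \sqrt{b}$ cleanly and without any fractional-power remainders. Using this identity at each composition (first on $\widehat m_i$ in terms of $\gamma_{i-1}$, then on $\gamma_{i-1}$ in terms of $\err(h^*,D_{i-1})$) is what produces exactly the constants $3$, $6$, then $4$, $10$, then $6$, $21$.
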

\begin{proof}
First, for every $h$ in $V_i^{k_i}$,
\[ \err(h, T_i) \leq \min_{h' \in H_{k_i}(S)} \err(h', T_i) + 2\sqrt{\err(h', T_i) \sigma_{k_i}(2^i,\delta_{i,k_i})} + 3\sigma_{k_i}(2^i,\delta_{i,k_i})
  \,,
  \]
and since the condition in step~\ref{step:check-error-alarch} is not satisfied for $k = k_i$, we know that
\[ \min_{h' \in H_{k_i}(S)} \err(h', T_i) \leq \gamma_{i-1} + \sqrt{\gamma_{i-1}\sigma_{k_i}(2^i,\delta_{i,k_i})} + \sigma_{k_i}(2^i,\delta_{i,k_i})
  \,. \]
Thus,
\begin{equation}
\err(h, T_i) \leq \gamma_{i-1} + 3\sqrt{\gamma_{i-1}\sigma_{k_i}(2^i,\delta_{i,k_i})} + 6\sigma_{k_i}(2^i,\delta_{i,k_i})
\,.
\label{eqn:erremph}
\end{equation}
By definition of event $E_i$, we also have
\[ \err(h, D_{i-1}) \leq \err(h, T_i) + \sqrt{\err(h, T_i)\sigma_{k_i}(2^i,\delta_{i,k_i})} + \sigma_{k_i}(2^i,\delta_{i,k_i})
  \,.
  \]
Hence,
\[ \err(h, D_{i-1}) \leq \gamma_{i-1} +
  4\sqrt{\gamma_{i-1}\sigma_{k_i}(2^i,\delta_{i,k_i})} +
  10\sigma_{k_i}(2^i,\delta_{i,k_i})
  \,.
  \]
Furthermore, by item 1 of Lemma~\ref{lem:invkagnostic},
\[
\gamma_{i-1} \leq \err(h^*, D_{i-1}) + 2\sqrt{\err(h^*, D_{i-1}) \sigma_{k^*}(2^i, \delta_{i, k^*})} + 3\sigma_{k^*}(2^i, \delta_{i, k^*})
  \,.
  \]
This implies that
\begin{eqnarray*}
  \err(h, D_{i-1}) &\leq& \err(h^*, D_{i-1}) +
  6\sqrt{\err(h^*, D_{i-1})\sigma_{k_i}(2^i,\delta_{i,k_i})} +
  21\sigma_{k_i}(2^i,\delta_{i,k_i}) \\
  &\leq& \err(h^*, D_{i-1}) +
  6\sqrt{\err(h^*, D_{i-1})\sigma_{k^*}(2^i,\delta_{i,k^*})} +
  21\sigma_{k^*}(2^i,\delta_{i,k^*})
  \,.
  \qedhere
\end{eqnarray*}
where the second inequality is from item 3 of Lemma~\ref{lem:invkagnostic}.
\end{proof}

We first prove the error rate guarantee.
Suppose iteration $i = I = \log_2 M(\nu,k^*,\epsilon,\delta)$ has been reached. Observe that from Claim~\ref{claim:alarchvs},
for $\hat{h} \in V_I^{k_I}$,
\[ \err(\hat{h}, D_{I-1}) - \err(h^*, D_{I-1}) \leq
  6\sqrt{\err(h^*, D_{I-1})\sigma_{k^*}(2^I,\delta_{I,k^*})} +
  21\sigma_{k^*}(2^I,\delta_{I,k^*}) \leq \epsilon \]
where the second inequality is from that $m = 2^I = M(\nu,k^*,\epsilon,\delta)$.
%Combining with item 1 of Lemma~\ref{lem:invkagnostic}, which states that
%\[ \gamma_{I-1} \leq \err(h^*, D_{I-1}) + 2\sqrt{\err(h^*, D_{I-1}) \sigma_{k^*}(2^I, \delta_{I,k^*})} + 3\sigma_{k^*}(2^I, \delta_{I,k^*}) \]
%we have
%\[ \err(\hat{h}_I, D_{I-1}) \leq \err(h^*, D_{I-1}) + 6\sqrt{\err(h^*, D_{I-1}) \sigma_{k^*}(2^I, \delta_{I,k^*})} + 10\sigma_{k^*}(2^I, \delta_{I,k^*}) \]
%\begin{eqnarray*}
%\err(\hat{h}_I, D_{I-1}) - \err(h^*, D_{I-1})
%&\leq& 2\sqrt{\err(h^*, D_{I-1}) \sigma_{k^*}(2^I, \delta_{I,k^*})} + 3\sigma_{k^*}(2^I, \delta_{I,k^*}) \\
%&\leq& 2\sqrt{\nu \sigma_{k^*}(2^I, \delta_{I,k^*})} + 3\sigma_{k^*}(2^I, \delta_{I,k^*}) \\
%&\leq& \epsilon
%\end{eqnarray*}
%where the second inequality is from that $\err(h^*, D_{I-1}) \leq \err(h^*, D) = \nu$, the third inequality is from that $m = 2^I = M(\nu,k^*,\epsilon,\delta)$.
Thus, by item 5 of Lemma~\ref{lem:invkagnostic},
\[
  \err(\hat{h}, D) - \err(h^*, D) \leq \err(\hat{h}, D_{I-1}) - \err(h^*,
  D_{I-1}) \leq \epsilon \,.
\]

Next, we prove the bound on the number of \SEARCH queries.
From Lemma~\ref{lem:invkagnostic}, Algorithm~\ref{alg:budgetalarch} maintains the invariant that $k \leq k^*$.
For each iteration $i$, each call to $\SEARCH$ either returns an example forcing $k$ to increment, or returns $\bot$ which causes an exit from the verification stage loop. Therefore, the total number of $\SEARCH$ calls is at most
\[ k^* + I \leq k^* + O \del{\log\frac{d_{k^*}}{\epsilon^2} + \log\log\frac{k^*}{\delta}}. \]

Finally, we prove the bound on the number of \LABEL queries.
This is done in a few steps.
  \begin{enumerate}[leftmargin=*]
    \item We first show that the version space $V_i^{k_i}$ is always contained
    in a ball of small radius (with respect to the disagreement pseudometric
    $\Pr_{x \sim D_\calX}[h(x) \neq h'(x)]$).
      Specifically, for $1 \leq i \leq I$, for any $h, h' \in V_i^{k_i}$, from Claim~\ref{claim:alarchvs},
      in conjunction with triangle inequality,
      and that $\err(h^*, D_{i-1}) \leq \err(h, D) = \nu$,
      \begin{eqnarray*}
      && \Pr_{x \sim D_\calX}[ h(x) \neq h'(x) ] \\
      &\leq& 2\err(h^*, D_{i-1}) +
        12\sqrt{\err(h^*, D_{i-1})\sigma_{k^*}(2^i,\delta_{i,k^*})} +
        42\sigma_{k^*}(2^i,\delta_{i,k^*}) \\
      &\leq& 2\nu +
        12\sqrt{\nu \sigma_{k^*}(2^i,\delta_{i,k^*})} +
        42\sigma_{k^*}(2^i,\delta_{i,k^*})
    \end{eqnarray*}

%i.e. $\err(\hat{h}_i, T_i) \leq \gamma_{i-1} + \sqrt{\gamma_{i-1} \sigma_k(2^i, \delta_i)} + \sigma_k(2^i, \delta_i)$
%\begin{claim}
%On event $E$, for every $1 \leq i \leq I$, and all $h, h'\in V_i^{k_i}$,
%\[ \Pr_{(x,y) \sim D}[h(x) \neq h'(x)] \leq 2\gamma_{i-1} +
%  8\sqrt{\gamma_{i-1}\sigma_{k_i}(2^i,\delta_{i,k_i})} +
%  20\sigma_{k_i}(2^i,\delta_{i,k_i}) \,. \]
%\end{claim}
%\begin{proof}
%
%Therefore, for any $h, h' \in V_i^{k_i}$, we have
%\begin{eqnarray*}
%\Pr_{x \sim D_\calX}[ h(x) \neq h'(x) ] &\leq& \err(h, D_{i-1}) + \err(h', D_{i-1}) \\
%&\leq& 2\gamma_{i-1} + 8\sqrt{\gamma_{i-1}\sigma_{k_i}(2^i,\delta_{i,k_i})} + 20\sigma_{k_i}(2^i,\delta_{i,k_i})
%\,.
%\end{eqnarray*}
%The claim follows.
%\end{proof}
%
%   For $1 \leq i \leq I$, by item 1 of Lemma~\ref{lem:invkagnostic},
%\begin{eqnarray*}
%\gamma_{i-1} &\leq& \err(h^*, D_{i-1}) + 2\sqrt{\err(h^*, D_{i-1}) \sigma_{k^*}(2^i, \delta_{i, k^*})} + 3\sigma_{k^*}(2^i, \delta_{i, k^*}) \\
%&\leq& \nu + 2\sqrt{\nu \sigma_{k^*}(2^i, \delta_{i, k^*})} + 3\sigma_{k^*}(2^i, \delta_{i, k^*})
%\,.
%\end{eqnarray*}
%Now for $h, h' \in V_i^{k_i}$,
%\begin{eqnarray*}
%\Pr_{x \sim D_\calX}[ h(x) \neq h'(x) ]
%&\leq& 2\gamma_{i-1} + 8\sqrt{\gamma_{i-1}\sigma_{k_i}(2^i,\delta_{i,k_i})} + 20\sigma_{k_i}(2^i,\delta_{i,k_i}) \\
%&\leq& 2\gamma_{i-1} + 8\sqrt{\gamma_{i-1}\sigma_{k^*}(2^i,\delta_{i,k^*})} + 20\sigma_{k_i}(2^i,\delta_{i,k^*}) \\
%&\leq& 2\nu + 12\sqrt{\nu\sigma_{k^*}(2^i,\delta_{i,k^*})} + 42\sigma_{k_i}(2^i,\delta_{i,k^*})
%\,.
%\end{eqnarray*}
Thus, $V_i^{k_i}$ is contained in $\B_{H_{k_i}(S)}(h, 2\nu + 12\sqrt{\nu \sigma_{k^*}(2^i,\delta_{i, k^*})} + 42\sigma_{k^*}(2^i,\delta_{i, k^*}))$ for some $h$ in $H_{k_i}(S)$.

\item Next we bound the label complexity per iteration. Note that by the choice of $m = 2^I = M(\nu,k^*,\epsilon,\delta)$,
$6\sqrt{\nu \sigma_{k^*}(2^{I-1},\delta_{I-1, k^*})} + 21\sigma_{k^*}(2^{I-1},\delta_{I-1, k^*}) \geq \epsilon$, therefore
for all $1 \leq i \leq I$, $6\sqrt{\nu \sigma_{k^*}(2^i,\delta_{i, k^*})} + 21\sigma_{k^*}(2^i,\delta_{i, k^*}) \geq \epsilon/2$.
Thus, the size of the disagreement region can be bounded as
\begin{eqnarray}
\Pr_{x \sim D_\calX} [x \in \DIS(V_i^{k_i})] &\leq& \theta_k(2\nu + \epsilon)
  \cdot \del{ 2\nu + 12\sqrt{\nu \sigma_{k^*}(2^i, \delta_{i, k^*})} + 42\sigma_{k^*}(2^i,
  \delta_{i, k^*}) } \notag \\
&\leq& \theta_k(2\nu + \epsilon)
  \cdot \del{ 8\nu + 48\sigma_{k^*}(2^i, \delta_{i, k^*}) }
\,.
  \label{eq:alarch-label-query-prob}
\end{eqnarray}
By definition of $E_i$, the number of queries to $\LABEL$ at iteration $i$ is at most
\[ 2^{i+1} \Pr_{x \sim D_\calX}[x \in \DIS(V_i^{k_i})] + O\del{\sqrt{2^{i+1}
\Pr_{x \sim D_\calX}[x \in \DIS(V_i^{k_i})] \log(1/\delta_{i, k^*})} +
\log(1/\delta_{i, k^*})} \,. \]
Combining this with~\eqref{eq:alarch-label-query-prob} gives
\begin{equation}
  \text{\# \LABEL queries in iteration $i$}
  \ = \
  O\del{2^i \cdot \theta_{k_i}(2\nu + \epsilon) \cdot (\nu + \sigma_{k^*}(2^i, \delta_{i, k^*})) } \,.
  \label{eq:alarch-label-queries}
\end{equation}

  \item From the setting of $m = 2^I = \tilde{O}(d_{k^*} (\nu + \epsilon)/\epsilon^2)$, we get that
\[ I = O \del{\log\frac{d_{k^*}}{\epsilon} + \log\log\frac{k^*}{\delta}} \,. \]
Now, using~\eqref{eq:alarch-label-queries}, we get that the total number of $\LABEL$ queries by Algorithm~\ref{alg:budgetalarch} is bounded by
\begin{align*}
\lefteqn{
    2 + \sum_{i=1}^I O\del{2^i \cdot \theta_{k_i}(2\nu + \epsilon) \cdot (\nu + \sigma_{k^*}(2^i, \delta_{i, k^*}))}
    }\\
    &= 2 + \sum_{i=1}^I O\del{2^i \cdot \max_{k \leq k^*} \theta_k(2\nu + \epsilon) \cdot (\nu + \sigma_{k^*}(2^i, \delta_{i, k^*})) } \\
    &= O\del{\max_{k \leq k^*}\theta_k(2\nu + \epsilon) \cdot \left(\sum_{i=1}^I 2^i (\nu + \sigma_{k^*}(2^i, \delta_{i, k^*}))\right) } \\
    &= O\del{\max_{k \leq k^*}\theta_k(2\nu + \epsilon) \cdot \left(\nu 2^I + \sum_{i=1}^I 2^i \frac{d\ln(2^i) + \ln(\frac{(i^2+i)(k^*)^2}{\delta})}{2^i} \right) } \\
    &= O\del{\max_{k \leq k^*}\theta_k(2\nu + \epsilon) \cdot \left(\nu 2^I + d_{k^*} I^2 + I \log\frac{k^*}{\delta} \right) }\\
    &= O
    \left(
      \max_{k \leq k^*}\theta_k(2\nu + \epsilon) \cdot
      \left(
        \frac{\nu^2 +
        \epsilon\nu}{\epsilon^2}
        \del{ d_{k^*}\log\frac{1}{\epsilon} + \log\frac{k^*}{\delta} }
        + d_{k^*}
        \del{\log\frac{d_{k^*}}{\epsilon} + \log\log\frac{k^*}{\delta}}^2
      \right.
    \right.
    \\
    & \qquad\qquad\qquad\qquad\qquad\qquad\qquad
    \left.
      \left. +
      \del{\log\frac{d_{k^*}}{\epsilon} + \log\log\frac{k^*}{\delta}}
      \log\frac{k^*}{\delta} \right)
    \right)
    \\
    &= \tilde{O}\del{\max_{k \leq k^*}\theta_k(2\nu + \epsilon) \cdot \del{d_{k^*}(\log\frac{1}{\epsilon})^2 + \log\frac{k^*}{\delta}} \cdot \del{1 + \frac{\nu^2}{\epsilon^2}}}
    .
    \qedhere
\end{align*}
\end{enumerate}
\end{proof}

\section{Performance Guarantees of \algaa}
\label{appendix-amortized}

\subsection{Detailed Description of Subroutines}

Subroutine \SAL performs standard disagreement-based selective sampling.
Specifically, it draws an unlabeled example $x$ from the $D_{\calX}$. If
$x$ is in the agreement region of version space $V$, its label is inferred
as $V(x)$; otherwise, we query the $\LABEL$ oracle to get its label. The counter
 $c$ is incremented when $\LABEL$ is called.

\begin{algorithm}[H]
 \caption{\SAL}
 \label{alg:sal}
  \begin{algorithmic}[1]
    \REQUIRE Version space $V \subset H$, oracle $\LABEL$, labeled dataset $L$, counter $c$.
    \ENSURE New labeled dataset $L'$, new counter $c'$.

    \STATE $x \gets \text{independent draw from $D_\calX$}$
    (the corresponding label is hidden).
%    \STATE
%      \textbf{if} $x \in \DIS(V)$
%      \textbf{then} $L' \gets L \cup \cbr{(x,\LABEL(x))}$ and $c' \gets c + 1$
%    \STATE
%      \textbf{else} $L' \gets L \cup \cbr{(x,V(x))}$ and $c' \gets c$
%      \textbf{end}

    \IF{$x \in \DIS(V)$}

      \STATE $L' \gets L \cup \cbr{(x,\LABEL(x))}$

      \STATE $c' \gets c + 1$

    \ELSE

      \STATE $L' \gets L \cup \cbr{(x,V(x))}$

      \STATE $c' \gets c$

    \ENDIF

  \end{algorithmic}
\end{algorithm}
Subroutine \EC checks if the version space has high error, based on item 2
of Lemma~\ref{lem:hsgood} -- that is, if $k = k^*$, then \EC should never fail.
Furthermore, if version space $V_i$ fails \EC, then $V_i$ should have small
radius -- see Lemma~\ref{lemma:errh} for details.
\begin{algorithm}[H]
  \caption{\EC}
  \label{alg:ec}
  \begin{algorithmic}[1]
  \REQUIRE Version space $V \subset H_k$, labeled dataset $L$ of size $l$, confidence $\delta$.
  \ENSURE Boolean variable $b$ indicating if $V$ has high error.

  \STATE Let $\delta_k := \delta/((k+1)(k+2))$ for all $k \geq 0$.

  \STATE $\gamma \gets \min_{k' \geq k, h \in H_{k'}}\cbr[1]{\err(h, L) + 2\sqrt{\err(h, L)\sigma_{k'}(l, \delta_{k'})} + 3\sigma_{k'}(l, \delta_{k'})}$
  \label{alg:ec:gamma}

  %\label{alg:algaa:line8}
  %\STATE
  %\textbf{if} $\min_{h \in V} \err(h, L) > \gamma + 2\sqrt{\gamma \sigma_k(l, \delta_{k})} + 3\sigma_k(l, \delta_{k})$
  %\textbf{then} $b \gets \TRUE$
  %\label{alg:algaa:line10}
  %\STATE
  %\textbf{else} $b \gets \FALSE$ \textbf{end}
  %\RETURN \TRUE
  %\RETURN \FALSE

  \IF{$\min_{h \in V} \err(h, L) > \gamma + 2\sqrt{\gamma \sigma_k(l, \delta_{k})} + 3\sigma_k(l, \delta_{k})$}
  \label{alg:ec:vserr}

    \STATE $b \gets \TRUE$

  \ELSE

    \STATE $b \gets \FALSE$

  \ENDIF

  \end{algorithmic}
\end{algorithm}

Subroutine \PVS performs update on our version space based on standard generalization
error bounds.  The version space never eliminates the optimal hypothesis in $H_k(S)$
when working with $H_k$. Claim~\ref{claim:kstar} shows that, if at step $i$, $k = k^*$, then
$h^* \in V_i$ from then on.

\begin{algorithm}[H]
  \caption{\PVS}
  \label{alg:pvs}
  \begin{algorithmic}[1]
  \REQUIRE Version space $V \subset H_k$, labeled dataset $L$ of size $l$, confidence $\delta$.
  \ENSURE Pruned version space $V'$.

  \STATE Update version space:
    \[
      \kern-35pt
      V' \gets \cbr[2]{h \in V: \err(h, L) \leq \min_{h' \in V} \err(h', L) + 2\sqrt{\err(h', L)\sigma_{k}(l, \delta_k)} + 3\sigma_{k}(l, \delta_k) },
    \]
    \label{alg:pvs:prune}
    where $\delta_k := \frac{\delta}{(k+1)(k+2)}$.

  \end{algorithmic}
\end{algorithm}

Subroutine \UVS is called when (1) a systematic mistake of the version space $V_i$
has been found by \SEARCH; or (2) \EC detects that the error of $V_i$ is high. In
either case, $k$ can be increased to the minimum level such that the updated
$H_k(S)$ is nonempty. This still maintains the invariant that $k \leq k^*$.

\begin{algorithm}[H]
 \caption{\UVS}
 \label{alg:uvs}
  \begin{algorithmic}[1]
    \REQUIRE Current level of hypothesis class $k$, seed set $S$, seed to be added $s$.
    \ENSURE New level of hypothesis class $k$, new seed set $S$, updated version space $V$.

      \STATE $S \gets S \cup s$

      \STATE $k \gets \min\cbr{k' > k: H_{k'}(S) \neq \emptyset}$

      \STATE $V \gets H_k(S)$

  \end{algorithmic}
\end{algorithm}

\subsection{Proof of Theorem~\ref{thm:amortized}}

\hide{
Suppose that a \SEARCH query costs $\tau \geq 1$ times as much as
a \LABEL query.
Observe that the \alga executes \SEARCH ($\operatorname{S}$) and
\LABEL ($\operatorname{L}$) queries roughly in the following pattern:
\begin{equation}
  \underbrace{\operatorname{L}\ ,\ \dotsc\ ,\ \operatorname{L}}_{
    {\leq}n_\epsilon
  }\ ,\
  \operatorname{S},\
  \underbrace{\operatorname{L}\ ,\ \dotsc\ ,\ \operatorname{L}}_{
    {\leq}n_\epsilon
  }\ ,\
  \operatorname{S},\
  \underbrace{\operatorname{L}\ ,\ \dotsc\ ,\ \operatorname{L}}_{
    {\leq}n_\epsilon
  }\ ,\
  \operatorname{S},\
  \dotsc
  \label{eq:epsilonseq}
\end{equation}
Here, $n_\epsilon$ is (an upper bound on) the number of \LABEL queries
needed by \AL to ensure that the subsequent \SEARCH query produces a
(non-$\bot$) counterexample.
\alga executes (up to) $k^*$ of these $(\operatorname{L}, \dotsc,
\operatorname{L}, \operatorname{S})$ query sequences to ultimately
return a hypothesis with excess error rate $\epsilon$.
When the target $\epsilon$ is small, $n_\epsilon$ may be enormous
(e.g., $n_\epsilon \gg \tau$), and we may incur a high cost due to the
long sequence of \LABEL queries before making progress via \SEARCH.
Instead, it is better to balance the total \LABEL cost and total
\SEARCH cost according to the cost ratio $\tau$, so that progress can
be made more frequently.
\cz{This needs to be revised, since now \alga may have different length of $L$'s between
two $S$'s and only the total number of $L$'s is bounded.}

To this end, we propose a modification of \alga, which we call \algaa
for ``Anytime \alga'', that issues a \SEARCH query after every (at
most) $\tau$ \LABEL queries:
\begin{equation}
  \underbrace{\operatorname{L},\dotsc,\operatorname{L}}_{
    {\leq}\tau
  },
  \operatorname{S},\
  \underbrace{\operatorname{L},\dotsc,\operatorname{L}}_{
    {\leq}\tau
  },
  \operatorname{S},\
  \underbrace{\operatorname{L},\dotsc,\operatorname{L}}_{
    {\leq}\tau
  },
  \operatorname{S},\
  \dotsc
  \label{eq:tauseq}
\end{equation}
\begin{itemize}
  \item Like \alga, \algaa maintains (by way of \AL) a version space
    $V$ within the current hypothesis class $H_k$.

  \item
    As soon as $\tau$ consecutive \LABEL queries are made by the \AL
    subroutine, it returns to \algaa, which in turn calls
    $\SEARCH_{H_k}(V)$.
    (Note that \AL may also halt before $\tau$ \LABEL queries are
    made.)

  \item
    \algaa also disposes of the return statement that is in \alga.
    Instead, \algaa just always maintains the empirically best
    hypothesis within its current version space $V \subseteq H_k$.

\end{itemize}

Because the \AL subroutine ensures that the version space $V$ always
contains the best hypothesis in $H_k$, \algaa (as with \alga) ensures
that $k$ never increases beyond $k^*$.
Thus, it only helps to call $\SEARCH_{H_k}(V)$ more frequently, to increase
$k$ as quickly as possible to $k^*$ (but no further).
In this way, \algaa can be vastly more opportunistic than \alga.

We show, as a fall-back guarantee, that \algaa is
never more than a factor of two worse than \alga.
Specifically, for any target $\epsilon$, we compare the progress made
by \algaa via the query sequence~\eqref{eq:tauseq} to the
$\epsilon$-specific sequence in~\eqref{eq:epsilonseq} by determining
the cost required to execute the required $n_\epsilon$ \LABEL queries
before a \SEARCH query to advance the index $k$.
We show below that the ratio
\[
  \frac{
    \text{%
      cost of \algaa to achieve excess error $\epsilon$%
    }
  }{
    \text{%
      cost of $\epsilon$-specific sequence in~\eqref{eq:epsilonseq}
      to achieve excess error $\epsilon$%
    }
  }
\]
is never more than two.

\begin{proposition}
  Assume a \SEARCH ($\operatorname{S}$) query costs $\tau\geq1$ times
  as much as a \LABEL ($\operatorname{L}$) query.
  Fix any target excess error rate $\epsilon$, and suppose \alga with
  parameter $\epsilon$ makes the query sequence
  from~\eqref{eq:epsilonseq}, where each $(\operatorname{L}, \dotsc,
  \operatorname{L}, \operatorname{S})$ sequence is comprised of
  $n_\epsilon$ \LABEL queries followed by a \SEARCH query.
  The cost of the query sequence of \algaa from~\eqref{eq:tauseq} that
  contains the $\epsilon$-specific sequence~\eqref{eq:epsilonseq} as a
  subsequence is at most twice the cost of~\eqref{eq:epsilonseq}.
\end{proposition}
\begin{proof}
  We may assume that $n_\epsilon \geq \tau$, since
  otherwise~\eqref{eq:tauseq} is the same as~\eqref{eq:epsilonseq} (as
  \AL will return before $\tau$ \LABEL queries are made).
  Define an epoch to be a single sequence of $\tau$ \LABEL queries,
  followed by one \SEARCH query.
  \algaa needs $\lceil n_\epsilon/\tau \rceil$ epochs in order to
  execute at least $n_\epsilon$ \LABEL queries.
  The cost of each epoch is $2\tau$ (for unit \LABEL cost), so the total
  cost is
  \[
    \lceil n_\epsilon/\tau \rceil \cdot 2\tau
    \ \leq \
    \del{ n_\epsilon/\tau + 1 } \cdot 2\tau
    \ = \
    2\del{ n_\epsilon + \tau }
    \,.
  \]
  The right-hand side is exactly twice the cost of the $n_\epsilon$
  \LABEL queries and single \SEARCH query.
%  Therefore, the total cost of \algaa to achieve excess error rate
%  $\epsilon$ is at most twice that of the $\epsilon$-specific sequence
%  from~\eqref{eq:epsilonseq}.
\end{proof}
}
%\subsection{A Concrete Cost Complexity Bound}

%The threshold $C_\epsilon$ is (roughly) at most twice the
%cost complexity bound achieved by \alga.
This section uses the following definition of $\sigma$:
\[
\sigma_k(m,\delta)=\phi(d_k,m,\delta/3)=\frac{1}{m}(d\log em^2 + \log \frac{6}{\delta}).
\]

We restate Theorem~\ref{thm:amortized} here for convenience.
\begin{theorem}
There exist constants $c_1, c_2 > 0$ such that the following holds.
Assume $\err(h^*) = \nu$. Let $\theta_{k'}(\cdot)$ denote the disagreement coefficient of $V_i$ at
the first step $i$ after which $k \geq k'$.
Fix any $\epsilon,\delta \in (0,1)$. Let
$n_\epsilon = c_1 \max_{k \leq k^*}\theta_k(2\nu + 2\epsilon) (d_{k^*}\log\frac{1}{\epsilon} + \log\frac{1}{\delta}) (1 + \nu^2/\epsilon^2)$
and define $C_\epsilon = 2(n_\epsilon +k^*\tau)$.
Run Algorithm~\ref{alg:aalarch} with a nested sequence of hypotheses $\cbr{H_k}_{k=0}^{\infty}$,
oracles $\LABEL$ and $\SEARCH$,
confidence parameter $\delta$,
cost ratio $\tau \geq 1$,
and upper bound
$N = c_2(d_{k^*}\log\frac{1}{\epsilon} + \log\frac{1}{\delta})/\epsilon^2$.
If the cost spent is at least $C_\epsilon$,
then with probability $1-\delta$, the current hypothesis $\tilde{h}$ has error at most $\nu + \epsilon$.
\label{thm:restateamortized}
\end{theorem}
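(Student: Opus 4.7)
The plan is to lift the analysis of Theorem~\ref{thm:alarch} (\alga) to \algaa by means of a simple potential/amortization argument on the cost. The high-level strategy is: (i) reuse the generalization-deviation infrastructure of \alga to obtain invariants that hold throughout \algaa's execution, and then (ii) show that after spending cost $C_\epsilon$, the verified dataset $\tilde L$ must already contain as many \LABEL queries as \alga would have issued to achieve excess error $\epsilon$, at which point the \alga-style bounds on $\tilde h$ transfer verbatim.

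First, I would set up the events $E_i$ exactly as in the proof of Theorem~\ref{thm:alarch} (VC generalization for every $H_k$ on the sample $T_i$ drawn within each verified block, plus a Bernstein-type deviation for the number of disagreement-region points). A union bound over iterations $i$ and indices $k$ using the $\delta_{i,k}$ split gives an event $E$ of probability at least $1-\delta$ on which all subsequent analysis is deterministic. Under $E$, I would then port the invariants of Lemma~\ref{lem:invkagnostic} to \algaa: (a) $k$ never exceeds $k^*$ (because \EC cannot report failure and $\SEARCH_{H_{k^*}}(V)$ cannot return a counterexample when $h^*\in V$, by the same argument as Claim~\ref{claim:alarch-helper}); (b) the verified set $\tilde L$ is always consistent with $h^*$ in the sense needed for Claim~\ref{claim:alarchvs}, since every \LABEL-queried example is truthful and every \SEARCH-returned example agrees with $h^*$; and (c) at every verified moment (i.e., just after $\SEARCH$ returns $\bot$), $\tilde h=\argmin_{h\in V_i}\err(h,\tilde L)$ enjoys the bound of Claim~\ref{claim:alarchvs} with $|\tilde L|$ in place of $2^i$.

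The heart of the argument, and the only genuinely new ingredient beyond \alga, is the cost amortization. I would partition the outer iterations of \algaa into \emph{wasted} iterations (those ending with either a successful $\EC$ call or with $\SEARCH$ returning a counterexample, both of which trigger \UVS) and \emph{verified} iterations (those where $\SEARCH$ returns $\bot$ and $\tilde L$ is updated). By invariant (a), each wasted iteration strictly increments $k$, so there are at most $k^*$ of them over the whole run. Each outer iteration spends at most $\tau$ on \LABEL (because the inner \textsc{repeat} exits as soon as the counter reaches $\tau$, or $l_i=N$) and at most one \SEARCH call of cost $\tau$, so its cost is at most $2\tau$. Hence the cumulative cost of all wasted iterations is at most $2k^*\tau$, and once the spent cost reaches $C_\epsilon=2n_\epsilon+2k^*\tau$, the cost spent inside verified iterations must be at least $2n_\epsilon$. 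Since each verified iteration costs $\tau$ for its final \SEARCH plus exactly $c=\tau$ \LABEL queries (absent the $l_i=N$ cap, which I treat as a separate case where $|\tilde L|\ge N\gtrsim M(\nu,k^*,\epsilon,\delta)$ already suffices), this forces the total number of \LABEL queries that land in $\tilde L$ to be at least $n_\epsilon$, and correspondingly the full verified dataset size $|\tilde L|$ to exceed $M(\nu,k^*,\epsilon,\delta)$ up to the constants absorbed into $c_1,c_2$.

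Finally, with $|\tilde L|\ge M(\nu,k^*,\epsilon,\delta)$ \LABEL queries accumulated under the invariants above, the argument in the proof of Theorem~\ref{thm:restatealarch} (specifically Claim~\ref{claim:alarchvs} applied at the latest verified moment, together with invariant~5 of Lemma~\ref{lem:invkagnostic} to pass from $D_{i-1}$ back to $D$) yields $\err(\tilde h)\le \nu+\epsilon$. The main obstacle I anticipate is the edge-case bookkeeping in the amortization: the inner \textsc{repeat} can exit with $c<\tau$ when the global unlabeled budget $N$ is hit, the counter $c$ may or may not be reset by the \texttt{continue loop} inside the $\EC$ branch depending on how that statement is interpreted, and \LABEL queries issued during a wasted iteration are discarded when $L_i$ is reset to $\tilde L$ by $\UVS$. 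Dealing carefully with all three of these cases--by showing that hitting $N$ already gives a sufficient $|\tilde L|$, that at most $k^*$ partial batches of \LABEL queries are discarded in total, and that the $\tau$-counter resetting can only \emph{improve} the amortization constant--is what makes the final bound match $C_\epsilon=2(n_\epsilon+k^*\tau)$ rather than something weaker.
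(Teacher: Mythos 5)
Your high-level plan matches the paper's: port the generalization invariants from \alga, split iterations into at most $k^*$ ``wasted'' ones (each costing at most $2\tau$) versus ``verified'' ones, and argue that spending $C_\epsilon$ forces enough progress for the error bound to hold. However, several concrete steps are asserted rather than argued, and two of them would not survive a careful write-up as stated.

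The first and most serious gap is the pivot from ``at least $n_\epsilon$ \LABEL queries landed in $\tilde L$'' to ``$|\tilde L| \geq M(\nu,k^*,\epsilon,\delta)$.'' You mark this with a single ``correspondingly,'' but it is not a counting identity: $|\tilde L|$ includes all \emph{inferred} labels (examples outside the disagreement region), so it can be far larger than the number of \LABEL queries, and the ratio between them is exactly what the disagreement-coefficient analysis controls. Asserting the implication directly risks circularity. The paper sidesteps this by proving the contrapositive: it defines $i^*$ as the first verified step with $l_{i^*}\geq M(\nu,k^*,\epsilon,\delta,N)$ and then \emph{upper-bounds} the cumulative cost up to step $i^*$ by $C_\epsilon$, using the disagreement-region probability bound only in the direction where it is directly available, after which monotonicity of cumulative cost does the rest. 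If you want the forward chain, you would need to make this inversion explicit via the monotonicity of both $l_i$ and the query count, rather than just stating it.

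Second, the concentration inequality is not ``Bernstein-type.'' In \alga each round draws a fresh i.i.d.\ batch $T_{i+1}$, so a fixed-$p$ Bernstein bound suffices; in \algaa the indicator $Z_i = \ind{x_i\in\DIS(V_{i-1})}$ is adapted to an evolving version space that changes with every processed example, with no block structure. The paper therefore needs the martingale version (Freedman, via Lemma~\ref{lem:invfreed}) to get the second item of $E_i$. This is not a cosmetic substitution.

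Third, your invariant (b) states that $\tilde L$ is ``consistent with $h^*$'' because ``every \LABEL-queried example is truthful.'' In the agnostic case this is false: \LABEL returns noisy labels that need not agree with $h^*$. What the analysis actually needs is the ``favorable bias'' property (Lemma~\ref{lem:favbias}): $\err(h,L_i^D)-\err(h^*,L_i^D)\leq \err(h,L_i)-\err(h^*,L_i)$, which holds because the only labels in $L_i$ that disagree with the \LABEL-oracle draws are the \emph{inferred} ones, and those agree with $h^*$ because a verified $\SEARCH$ call certifies the agreement region. Collapsing this to ``consistency'' imports realizable-case intuition that does not apply here.

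Finally, two of the edge cases you flag as bookkeeping are handled differently from your sketch. The $l_i=N$ exit does not by itself certify $|\tilde L|\geq N$, since $L_i$ can be reset to $\tilde L$ by $\UVS$; the paper's use of $N$ is to prove $i\leq l_i + k^*N$ (Lemma~\ref{lem:k}), i.e.\ to bound the total number of discarded examples, which feeds into the $\delta_{l_i+k^*N,k^*}$ argument inside $M(\cdot)$. And the $\tau$-counter reset on an \EC-triggered \texttt{continue} does not ``improve the amortization constant''---those partial queries are \emph{wasted}, and the paper accounts for them explicitly via the $Y_k\leq\tau$ term, one per level $k$, contributing the $+\tau$ in the per-level cost bound $2\tau + 2W_k$.
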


\paragraph{Remark.}
The purpose of having a bound on unlabeled examples, $N$, is rather technical---
to deter the algorithm from getting into an infinite loop due to its blind self-confidence. Suppose that \algaa
starts with $H_0$ that has a single element $h$. Then, without such an $N$-based
condition, %before calling $\SEARCH$ oracle,
it will incorrectly infer the labels of all the unlabeled examples drawn
and end up with an infinite loop between lines~\ref{step:aalarch-start-repeat}
and~\ref{step:aalarch-end-repeat}.
The condition on $N$ is very mild---any $N$ satisfying $N = \poly(d_{k^*}, 1/\epsilon)$ and $N= \Omega(d_{k^*}/\epsilon^2)$ is sufficient.

\hide{
Although the $N$ appears to be an unnatural parameter
 in Algorithm~\ref{alg:aalarch},
 its requirement is quite mild. In fact, as can be seen from the proof,
any $N$ satisfying $N = \poly(d_{k^*}, 1/\epsilon)$
and $N= \Omega(d_{k^*}/\epsilon^2)$
is sufficient
to show the desired label complexity bound up to constant factors.
See Lemma~\ref{lem:k} for details.
The purpose of having a bound on unlabeled examples $N$ is rather technical:
to avoid ``blind self-confidence'' of the algorithm. For example, suppose \algaa
starts with $H_0$ that has a single element $h$. Then, without the $N$-based
condition to exit the repeat loop, %before calling $\SEARCH$ oracle,
it will incorrectly infer labels of all the unlabeled examples drawn
and end up with an infinite loop between lines~\ref{step:aalarch-start-repeat}
and~\ref{step:aalarch-end-repeat}.
}

%But it cannot be exponential, otherwise the $\log\frac{1}{\delta_i}$ will blow up.}

\begin{proof}[Proof of Theorem~\ref{thm:restateamortized}]
For integer $j \geq 0$, define step $j$ as the execution period in \algaa
when the value of $i$ is $j$.

Let $l_i=|L_i|$.
Denote by $L_i^D$ the dataset containing unlabeled examples in $L_i$
labeled entirely by $\LABEL$, i.e.,
$ L_i^D = \cbr{(x, \LABEL(x)) : (x,y) \in L_i} $.
Note that $L_i^D$ is an iid sample from $D$.

We call dataset $L_i$ has {\em favorable bias}, if the following holds for any hypothesis $h$:
\begin{equation}
\err(h, L_i^D) - \err(h^*, L_i^D) \leq \err(h, L_i) - \err(h^*, L_i).
\label{eqn:favbias}
\end{equation}

Let $E_i$ be the event that the following conditions hold:
\begin{enumerate}
  \item
    For every $k \geq 0$, every $h \in H_k$ satisfies
    \[
      \err(h, D)
      \ \leq \
      \err(h, L_i^D)
      +
      \sqrt{ \err(h, L_i^D)
      \sigma_k(l_i,\delta_{i,k})}
      +
      \sigma_k(l_i,\delta_{i,k})
      \,,
    \]
    \[
    \err(h, L_i^D)
    \ \leq \
    \err(h, D)
    +
    \sqrt{ \err(h, D)
    \sigma_k(l_i,\delta_{i,k})}
    +
    \sigma_k(l_i,\delta_{i,k})
    \,.
    \]
    For every $h, h' \in H_k$,
    \begin{align*}
    (\err(h, L_i^D) &- \err(h', L_i^D))
    -
    (\err(h, D) - \err(h', D))\\
    \ \leq & \
    \sqrt{
	d_{L_i^D}(h,h') \cdot\sigma_k(l_i,\delta_{i,k})}
    +
    \sigma_k(l_i,\delta_{i,k})
    \,.
    \end{align*}
   where $d_{L_i^D}(h,h') = \frac{1}{l_i}\sum_{(x,y)\in L_i^D}[h(x)\neq h'(x)]$, fraction of $L_i^D$ where $h$ and $h'$ disagree.
  \item
  For every $1 \leq i' < i$, the number of $\LABEL$ queries from step $i'$ to step $i$ is at most
    \[
       \sum_{j=i'}^i \Pr_{x \sim D_\calX}[x \in \DIS(V_{j-1})]+ O\del{
        \sqrt{ \sum_{j=i'}^i \Pr_{x \sim D_\calX}[x \in \DIS(V_{j-1})] \log(1/\delta_i)} + \log(1/\delta_i)
      }
      \,,
    \]
where $V_j$ denotes its final value in Algorithm~\ref{alg:aalarch}.

\end{enumerate}
Using Theorem~\ref{thm:vc} and Lemma~\ref{lem:invfreed}, along with the union bound, $\Pr(E_i) \geq 1 - \delta_i$.
Define $E: =\cap_{i=1}^{\infty} E_i$, by union bound, $\Pr(E) \geq 1 - \delta$.
We henceforth condition on $E$ holding.

Define
\begin{eqnarray*}
M(\nu, k^*,\epsilon,\delta, N) &:=& \min\cbr{ m \in \mathbb{N} : 8\sqrt{\nu \sigma_{k^*}(m, \delta_{m+k^*N,{k^*}})} + 35\sigma_{k^*}(m, \delta_{m+k^*N,{k^*}}) \leq \epsilon } \\
&\leq&
O\del{
    \frac{
      (d_{k^*}\log(1/\epsilon) + \log(N k^*/\delta))
      (\nu + \epsilon)
    }{\epsilon^2}
  }
\end{eqnarray*}

We say that an iteration of the loop is \emph{verified} if Step~\ref{step:aa-verify} is triggered;
all other iterations are \emph{unverified}.
Let $\Gamma$ be the set of $i$'s where $x_i$ gets added to the final set $L$, and $\Delta$ be
the set of $i$'s where $x_i$ gets discarded. It is easy to see that if $i$ is in $\Gamma$ (resp. $\Delta$),
then the $i$ is in a verified (resp. unverified) iteration.

Define
$ i^* := \min\cbr{i \in \Gamma: l_i \geq M(\nu, k^*, \epsilon, \delta, N)}.
$
Denote by $k_i$ the final value of $k$ after $i$ unlabeled examples are processed.

%For $1 \leq k \leq k_{i^*}$, define $i_k := \ \max\cbr{i \leq i^*: k_i \leq k}$, the last step $i$ when $k_i\leq k$.
%Notice that if there is no step $i$ such that $k_i = k$, then
%$i_k = i_{k-1}$ (letting $i_0 = 0$).

%As a shorthand, let $m = M(\nu, k^*, \epsilon, \delta, N)$.

We need to prove two claims:
\begin{enumerate}
\item For $i\geq i^*$, $\err(\tilde{h}_i) \leq \nu + \epsilon$, where $\tilde{h}_i$ is the hypothesis $\tilde{h}$ stored at the end of step $i$.
\item The total cost spent by Algorithm~\ref{alg:aalarch} up to step $i^*$ is at most $C_\epsilon$.
\end{enumerate}

To prove the first claim,
fix any $i \geq i^*$. The stored hypothesis $\tilde{h}_i$
is updated only when $i\in \Gamma$, so it suffices to consider only $i\in\Gamma$.
From Lemma~\ref{lem:k}, $i\leq l_i+k^*N$.
We also have $l_i \geq M(\nu, k^*, \epsilon, \delta, N)$.  Since $\tilde{h}_i \in V_i$, Lemma~\ref{lemma:errh} gives
\begin{eqnarray*}
\err(\tilde{h}_i) &\leq& \nu + 8 \sqrt{\nu \sigma_{k^*}(l_i,\delta_{i,k^*})} + 35 \sigma_{k^*}(l_i,\delta_{i,k^*}) \\
&\leq& \nu + 8 \sqrt{\nu \sigma_{k^*}(l_i,\delta_{l_i+k^*N,k^*})} + 35 \sigma_{k^*}(l_i,\delta_{l_i+k^*N,k^*}) \\
&\leq& \nu + \epsilon,
\end{eqnarray*}
as desired.
%The hypothesis $\hat{h}_i$ satisfies that
%\begin{eqnarray*}
%\err(\hat{h}_i, L_i) &\leq& \err(\hat{h}_i, L_i) + \sqrt{\err(\hat{h}_i, L_i) \sigma_{\hat{k}_i}(l_i, \delta_{i,\hat{k}_i})} + \sigma_{\hat{k}_i}(l_i, \delta_{i,\hat{k}_i}) \\
%&\leq& \err(h^*, L_i) + \sqrt{\err(h^*, L_i) \sigma_{k^*}(l_i, \delta_{i,k^*})} + \sigma_{k^*}(m, \delta_{i,k^*})
%\end{eqnarray*}
%where the first inequality is trivially true, the second is by definition of $\hat{h}_i$ and $\hat{k}_i$.

%Thus,
%\begin{eqnarray*}
%\err(\hat{h}_i, L_i^D) - \err(h^*, L_i^D) &\leq& \err(\hat{h}_i, L_i) - \err(h^*, L_i) \\
%&\leq& \sqrt{\err(h^*, L_0) \sigma_{k^*}(l_i, \delta_{i,k^*})} + \sigma_{k^*}(l_i, \delta_{i,k^*}) \\
%&\leq& 3\sqrt{\nu \sigma_{k^*}(l_i, \delta_{i,k^*})} + 4\sigma_{k^*}(l_i, \delta_{i,k^*})
%\end{eqnarray*}
%\cz{The final step is incorrect, but I will look into this. Also notice that $i \leq l_i + k^*M$. This will also be important to the definition of $M(\nu,k^*,\epsilon,\delta)$.}
%Therefore
%\[ \err(\hat{h}) - \nu \leq \sqrt{\nu \sigma_{k^*}(l_i, \delta_{i,k^*})} + \sigma_{k^*}(l_i, \delta_{i,k^*}) \leq \epsilon \]

For the second claim, we first show that for $i$ in $\Gamma$, the version space is contained in a ball of small radius
(with respect to the disagreement pseudometric), thus bounding the size of its disagreement region.
Lemma~\ref{lemma:errh} shows that for $i \in \Gamma$, every hypothesis $h\in V_i$
has error at most $\nu + 8\sqrt{\nu \sigma_{k^*}(l_i, \delta_{i,k^*})} + 35\sigma_{k^*}(l_i, \delta_{i,k^*})$.

Thus, by the triangle inequality and Lemma~\ref{lem:k},
\begin{eqnarray*}
V_i &\subseteq& \B_{H_{k_i}}(h, 2\nu + 16\sqrt{\nu \sigma_{k^*}(l_i, \delta_{i,k^*})} + 70\sigma_{k^*}(l_i, \delta_{i,k^*})) \\
&\subseteq& \B_{H_{k_i}}(h, 2\nu + 16\sqrt{\nu \sigma_{k^*}(l_i, \delta_{l_i+k^*N,k^*})} + 70\sigma_{k^*}(l_i, \delta_{l_i+k^*N,k^*})).
\end{eqnarray*}
for some
$h$ in $H_{k_i}(S)$. This shows that for $i \in \Gamma$, $i \leq i^*$,
\begin{eqnarray}
&\Pr_{x \sim D_{\calX}}[x \in \DIS(V_i)] \nonumber \\
\leq& \theta_{k_i}(2\nu + 2\epsilon) \cdot \del{2\nu + 16\sqrt{\nu \sigma_{k^*}(l_i, \delta_{l_i + k^*N,k^*})} + 70\sigma_{k^*}(l_i, \delta_{l_i + k^*N,k^*})} \nonumber \\
\leq& \max_{k \leq k^*} \theta_k(2\nu + 2\epsilon) \cdot \del{2\nu + 16\sqrt{\nu \sigma_{k^*}(l_i, \delta_{l_i + k^*N,k^*})} + 70\sigma_{k^*}(l_i, \delta_{l_i + k^*N,k^*})},
\label{eqn:disbound}
\end{eqnarray}
where the first inequality is from the definition of $\theta_{k_i}(\cdot)$ and
$8\sqrt{\nu \sigma_{k^*}(l_i, \delta_{l_i+k^*N,k^*})} + 35\sigma_{k^*}(l_i, \delta_{l_i+k^*N,k^*}) \geq \epsilon$
for $i \leq i^*$, the second inequality is from $k_i \leq k^*$.

\item
For $i \geq 1$, let $Z_i$ be the indicator of whether \LABEL is queried with
$x_i$ in Step~\ref{step:aalarch-label-query}, i.e.,
\[ Z_i = \ind{x_i \in \DIS(V_{i-1})} \,. \]
For $0 \leq k \leq k_{i^*}$, define
\begin{align*}
  i_k^0 & \ := \ \min\cbr{i \leq i^*: k_i \geq k}, \,\,\textrm{the first step when the hypothesis class reaches $\geq k$}, \\
  i_k & \ := \ \max\cbr{i \leq i^*: k_i \leq k}, \,\,\textrm{the last step when the hypothesis class is still $\leq k$ by the end of that step}, \\
  i_k' & \ := \ \max\cbr{i_k^0 \leq i \leq i_k: k_i \leq k, i \in \Gamma}, \,\,\textrm{the last verified step for hypothesis class $\leq k$ (if exists)}. \\
\end{align*}
%\[ i_k = \max\cbr{i \leq i^*: k_i \leq k} \]
%\[ i_k^0 = \min\cbr{i \leq i^*: k_i \geq k} \]
%Additionally, define $i_0 = 0$.
We call class $k$ {\em skipped} if there is no step $i$ such that $k_i = k$.
If level $k$ is skipped, then $i_k = i_{k-1} = i_k^0 - 1$, and $i_k'$ is undefined.
%\[ i_k' = \max\cbr{i \leq i^*: k_i \leq k, i \in \Gamma} \]
%\[ i_k'^0 = \min\cbr{i \leq i^*: k_i \geq k, i \in \Gamma} \]

Let
\[ W_k := \sum_{i=i_k^0+1}^{i_k'} Z_i \]
be the number of verified queried examples when working with hypothesis class $H_k$. Note that $W_k / \tau$ is the number of verified iterations when working with $H_k$.
If level $k$ is skipped, then $W_k := 0$.

Let
\[ Y_k := \sum_{i=i_k'+1}^{i_k+1} Z_i \]
be the number of unverified queried examples when working with hypothesis class $H_k$. Note that $Y_k \leq \tau$, and there is at most one unverified iteration when working with $H_k$.
If level $k$ is skipped,then $Y_k := 0$.

Therefore, the total cost when working with $H_k$ is at most
\[ \frac{W_k}{\tau} \cdot 2\tau + Y_k + \tau \leq 2\tau + 2 W_k \]
Furthermore, Claim~\ref{claim:kstar} implies that
there is no unverified iteration when working with $H_{k^*}$. Hence the total cost
when working with $H_{k^*}$ has a tighter upper bound, that is, $2 W_{k^*}$.

As a shorthand, let $m = M(\nu, k^*, \epsilon, \delta, N)$. We now bound the total cost incurred up to time $i^*$ as
\begin{eqnarray*}
 \sum_{k=0}^{k^*-1}(2\tau + 2 W_k) + 2 W_{k^*}
&=& 2\tau k^* + 2\sum_{k=0}^{k_{i^*}} W_k \\
&=& 2\tau k^* + 2\sum_{k=0}^{k_{i^*}} \sum_{i=i_k^0+1}^{i_k'} Z_i \\
&=& 2\tau k^* + O\del{2\sum_{i \in \Gamma: i \leq i^*} \Pr_{x \sim D_\calX}[x \in \DIS(V_{i-1})]} + O\del{k^*\ln\frac{1}{\delta_{i^*}}} \\
&\leq& 2 \del{\tau k^* + O\del{\sum_{l = 1}^{m-1} \max_{k \leq k^*} \theta_k(2\nu + 2\epsilon) (\nu + \sigma_{k^*}(l, \delta_{l+k^*N,k^*}))} } \\
&\leq& 2 \del{\tau k^* + O\del{\max_{k \leq k^*} \theta_k(2\nu + 2\epsilon) \sum_{l=1}^{m-1} (\nu + \sigma_{k^*}(l, \delta_{l+k^*N,k^*})))} } \\
&\leq& 2 \del{\tau k^* + \tilde{O}\del{\max_{k \leq k^*}\theta_k(2\nu + 2\epsilon) d_{k^*} \left(1 + \frac{\nu^2}{\epsilon^2}\right)} }\\
&\leq& 2 \del{\tau k^* + n_\epsilon } = C_\epsilon,
\end{eqnarray*}
where the first equality is by algebra,
the second equality is from the definition of $W_k$, and
the third equality is from the definition of $E$.
The first inequalty is from Lemma~\ref{lemma:errh},
using Equation~\eqref{eqn:disbound} to bound $\Pr_{x \sim D_\calX}[x \in \DIS(V_{i-1})]$ and noting that $\cbr{l_i: i \in \Gamma, i \leq i^*}=[m]$.
\end{proof}

Now we provide the proof of our two key lemmas(Lemmas~\ref{lemma:errh} and~\ref{lem:k}).

Consider the last call of $\PVS$ in step $i$.
Define $\gamma_i$ as the value of $\gamma$ in line~\ref{alg:ec:gamma} of $\EC$:
\begin{equation}
\gamma_i = \min_{k' \geq k_i, h \in H_{k'}} \cbr{\err(h,L_i) + 2\sqrt{\err(h,L_i) \sigma_{k'}(l, \delta_{i,k'})} + 3\sigma_{k'}(l, \delta_{i,k'})}
\label{eqn:defgammai}
\end{equation}
Meanwhile, from line~\ref{alg:pvs:prune} of $\PVS$, we have for all $h \in V_i$,
\[ \err(h, L_i) \leq \min_{h' \in V_i} \err(h', L_i) + 2\sqrt{\err(h', L_i)\sigma_{k_i}(l_i, \delta_{i,k_i})} + 3\sigma_{k_i}(l_i, \delta_{i,k_i}) \]
where $V_i$ denotes its final value.

% >D lemma 8
\begin{lemma}
Assume that the following conditions hold:
\begin{enumerate}
\item The dataset $L_i$ has favorable bias, i.e. it satisfies Equation~\eqref{eqn:favbias}.
\item The version space $V_i$ is such that $\EC(V_i, L_i, \delta_i)$ returns false, i.e.
it has a low empirical error on $L_i$:
\begin{equation}
\min_{h' \in V_i} \err(h', L_i) \leq \gamma_i + 2\sqrt{\gamma_i \sigma_{k_i}(l_i,\delta_{i,k_i})} + 3\sigma_{k_i}(l_i,\delta_{i,k_i}).
\label{eqn:lowerror}
\end{equation}
\end{enumerate}
Then, every $h \in V_i$ is such that
\begin{equation}
   \err(h) \leq \nu + 8 \sqrt{\nu \sigma_{k^*}(l_i,\delta_{i,k^*})} + 35 \sigma_{k^*}(l_i,\delta_{i,k^*}).
\label{eqn:errh}
\end{equation}
where $V_i$ and $L_i$ denote their final values,
respectively.
Specifically, Equation~\eqref{eqn:errh} holds for any $h \in V_i$ such that $i \in \Gamma$
or $i+1 \in \Gamma$.
\label{lemma:errh}
\end{lemma}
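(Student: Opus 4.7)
My plan is to chain five inequalities to reach~\eqref{eqn:errh}, with the main subtlety being to obtain a leading $\nu$ (rather than $2\nu$). This is achieved by invoking favorable bias twice: once applied to the pointwise negation $-h^*$ to anchor $\err(h^*, L_i)$, and once applied to $h$ itself so that the leading $\err(h^*, L_i)$ cancels against $\err(h^*, L_i^D)$, leaving behind only the small deviation $\err(h, L_i) - \err(h^*, L_i)$.

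\emph{Step 1 (anchor $\err(h^*, L_i)$).} Specialize~\eqref{eqn:favbias} to $h = -h^*$: since $\calY = \{-1,+1\}$, for any labeled set $L$ we have $\err(-h^*, L) = 1 - \err(h^*, L)$, so both sides of~\eqref{eqn:favbias} collapse and give $\err(h^*, L_i) \leq \err(h^*, L_i^D)$. Because $L_i^D$ is an i.i.d.\ sample from $D$, the VC inequality that holds under event $E$ yields $\err(h^*, L_i^D) \leq \nu + \sqrt{\nu \sigma_{k^*}(l_i, \delta_{i,k^*})} + \sigma_{k^*}(l_i, \delta_{i,k^*})$, which then transfers to $\err(h^*, L_i)$.

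\emph{Step 2 (bound $\gamma_i$ and $\err(h, L_i)$ for $h \in V_i$).} Specializing the minimum in~\eqref{eqn:defgammai} to $k' = k^*$, $h = h^*$ gives $\gamma_i \leq \err(h^*, L_i) + 2\sqrt{\err(h^*, L_i)\,\sigma_{k^*}(l_i,\delta_{i,k^*})} + 3\sigma_{k^*}(l_i,\delta_{i,k^*})$, which is $\nu + O(\sqrt{\nu \sigma_{k^*}} + \sigma_{k^*})$ by Step~1. The $\EC$-returns-\textsc{false} hypothesis (line~\ref{alg:ec:vserr}) together with the $\PVS$ update rule (line~\ref{alg:pvs:prune}) give, for any $h \in V_i$,
\[
  \err(h, L_i)
  \ \leq\ \gamma_i + O\bigl(\sqrt{\gamma_i\, \sigma_{k_i}(l_i,\delta_{i,k_i})} + \sigma_{k_i}(l_i,\delta_{i,k_i})\bigr),
\]
where we use $k_i \leq k^*$ to pass from $\sigma_{k_i}$ to $\sigma_{k^*}$. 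Unwinding the nested square roots with $\sqrt{a+b} \leq \sqrt{a}+\sqrt{b}$ and $2\sqrt{ab} \leq a+b$ yields the crucial inequality $\err(h, L_i) - \err(h^*, L_i) \leq O(\sqrt{\err(h^*, L_i)\,\sigma_{k^*}} + \sigma_{k^*})$, and substituting Step~1's bound makes this $O(\sqrt{\nu\,\sigma_{k^*}} + \sigma_{k^*})$.

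\emph{Step 3 (transfer to $D$).} Favorable bias applied to $h$ itself gives $\err(h, L_i^D) \leq \err(h^*, L_i^D) + [\err(h, L_i) - \err(h^*, L_i)]$; combining with the bound $\err(h^*, L_i^D) \leq \nu + \sqrt{\nu\,\sigma_{k^*}} + \sigma_{k^*}$ and Step~2 yields $\err(h, L_i^D) \leq \nu + O(\sqrt{\nu\,\sigma_{k^*}} + \sigma_{k^*})$. A final application of event $E$'s VC inequality, $\err(h, D) \leq \err(h, L_i^D) + \sqrt{\err(h, L_i^D)\,\sigma_{k_i}} + \sigma_{k_i}$, produces~\eqref{eqn:errh}. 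The main obstacle is bookkeeping the constants through the nested square roots so that they match the explicit $8$ and $35$ in~\eqref{eqn:errh}; the one non-routine algebraic tool is the inequality $(A\sigma^3)^{1/4} \leq \sqrt{A\sigma} + \sigma$ (verified by case analysis on $A \lessgtr \sigma$), which lets us absorb the fourth-root terms generated when substituting $\err(h^*, L_i) \leq \nu + O(\sqrt{\nu\sigma_{k^*}} + \sigma_{k^*})$ into expressions of the form $\sqrt{\err(h^*, L_i)\,\sigma_{k^*}}$.

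For the ``specifically'' clause (the case $i \in \Gamma$ or $i+1 \in \Gamma$), the two hypotheses of the lemma are automatic: $\EC$ must have returned \textsc{false} (otherwise $\UVS$ in line~\ref{step:aalarch-err-uvs} would have triggered, contradicting $i$ being verified or about to be verified), and favorable bias holds because at such iterations every inferred label in $L_i$ is generated while $h^*$ lies in the active version space (so $V(x) = h^*(x)$ on $\calX \setminus \DIS(V)$), which in turn follows from Claim~\ref{claim:kstar} once $k$ has reached $k^*$.
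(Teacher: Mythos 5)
Your chain of inequalities follows the paper's proof step for step: bound $\gamma_i$ via $(k',h)=(k^*,h^*)$, combine with the $\EC$-false hypothesis and the $\PVS$ pruning rule to get $\err(h,L_i)-\err(h^*,L_i)\leq O(\sqrt{\err(h^*,L_i)\sigma}+\sigma)$, pass through favorable bias to $L_i^D$, then invoke the event-$E$ VC bounds ($\err(h^*,L_i^D)\leq\nu+\sqrt{\nu\sigma}+\sigma$ and $\err(h)\leq\err(h,L_i^D)+\sqrt{\err(h,L_i^D)\sigma}+\sigma$). Your Step 1 trick---applying the favorable-bias inequality to the label-flipped hypothesis $-h^*$ to force $\err(h^*,L_i)\leq\err(h^*,L_i^D)$---is a nice observation: the paper uses this inequality but merely asserts it (it follows from the structure of $L_i$ vs.~$L_i^D$, namely that inferred labels agree with $h^*$), whereas you derive it purely from the favorable-bias hypothesis itself. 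The constant bookkeeping you defer is the only point where the paper is more explicit; your $(A\sigma^3)^{1/4}\leq\sqrt{A\sigma}+\sigma$ device would have been avoidable if you followed the paper's order of substitution (keeping $\err(h^*,L_i)$ under the radical until after applying favorable bias and the $L_i\leq L_i^D$ step, rather than substituting the $\nu+\sqrt{\nu\sigma}+\sigma$ bound into Step~2), but either ordering produces the same asymptotics.

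One genuine imprecision: in the ``specifically'' clause you justify favorable bias via ``$h^*$ lies in the active version space \ldots which follows from Claim~\ref{claim:kstar} once $k$ has reached $k^*$.'' This is the wrong citation and the wrong mechanism. Favorable bias must hold for every $i\in\Gamma$, including those early verified iterations where $k_i<k^*$ and $h^*\notin H_{k_i}$, so it cannot rest on Claim~\ref{claim:kstar}. The correct justification is Lemma~\ref{lem:favbias}: for $i\in\Gamma$ each inferred label was validated by a later $\SEARCH_{H_k}(V_{j'})=\bot$, which directly certifies that the version space's agreed-upon labels coincide with $h^*$ on the agreement region---regardless of whether $h^*$ is in the version space. (You should similarly note that for $i\notin\Gamma$ with $i+1\in\Gamma$, the final $L_i$ equals $\tilde L=L_j$ for some $j\in\Gamma$ and so inherits favorable bias, and the final $V_i$ fails $\EC$ because otherwise $i+1$ could not open a verified iteration.) The rest of the argument is sound.
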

\begin{proof}
Lemma~\ref{lem:k} shows that $k_i \leq k^*$, which we will use below.

Start with Equation~\eqref{eqn:lowerror}:
\[
\min_{h' \in V_i} \err(h', L_i) \leq \gamma_i + 2\sqrt{\gamma_i \sigma_{k_i}(l_i,\delta_{i,k_i})} + 3\sigma_{k_i}(l_i,\delta_{i,k_i}).
\]
Since $k_i \leq k^*$, $\sigma_{k_i}(l_i,\delta_{i,k_i}) \leq \sigma_{k^*}(l_i,\delta_{i,k^*})$.

From the definition of $\gamma_i$ (Equation~\eqref{eqn:defgammai}), taking $k = k^* \geq k_i$, $h = h^* \in H_{k^*}$,
\[
\gamma_i \leq \err(h^*, L_i) + 2\sqrt{\err(h^*, L_i)\sigma_{k^*}(l_i,\delta_{i,k^*})} + 3\sigma_{k^*}(l_i,\delta_{i,k^*}).
\]
Plugging the latter into the former and using $\sigma$  as a shorthand for $\sigma_{k^*}(l_i,\delta_{i,k^*})$, we have
\begin{align*}
\min_{h' \in V_i}  \err(h', L_i)
 \leq & \err(h^*, L_i) + 2\sqrt{\err(h^*, L_i)\sigma} + 3\sigma
 + 2\sqrt{(\err(h^*, L_i) + 2\sqrt{\err(h^*, L_i)\sigma} + 3\sigma) \sigma} + 3\sigma \\
\leq & \err(h^*, L_i) + 2\sqrt{\err(h^*, L_i)\sigma} + 6\sigma
 + 2(\sqrt{\err(h^*, L_i)\sigma} + \sqrt{3}\sigma) \\
\leq & \err(h^*, L_i) + 4\sqrt{\err(h^*, L_i)\sigma} + 10\sigma
%& + 3\sqrt{3(\err(h^*, L_i)+\sigma_{k^*}(l_i,\delta_{i,k^*})) \sigma_{k_i}(l_i,\delta_{i,k^*})} \\
%\leq & \err(h^*, L_i) + (6+3\sqrt{3})\sqrt{\err(h^*, L_i)\sigma_{k^*}(l_i,\delta_{i,k^*})} + (14+3\sqrt{3})\sigma_{k^*}(l_i,\delta_{i,k^*})
\,.
\end{align*}

Fix any $h \in V_i$. By construction,
\[
\err(h, L_i) \leq \min_{h' \in V_i} \err(h', L_i) + 2\sqrt{\err(h',L_i) \sigma_{k_i}(l_i,\delta_{i,k_i})} + 3\sigma_{k_i}(l_i,\delta_{i,k_i}).
\]

Plugging the former into the latter (recalling that $\sigma_{k_i}(l_i,\delta_{i,k_i})\leq \sigma$) gives
\begin{align*}
\err(h, L_i) - \err(h^*, L_i) \leq & 4\sqrt{\err(h^*, L_i)\sigma} + 10\sigma
 + 2(\sqrt{\err(h^*,L_i) \sigma} + \sqrt{10}\sigma) + 3\sigma \\
% \leq &  (9+3\sqrt{3})\sqrt{\err(h^*, L_i)\sigma_{k^*}(l_i,\delta_{i,k^*})} + (18+3\sqrt{3})\sigma_{k^*}(l_i,\delta_{i,k^*})\\
 \leq &  6\sqrt{\err(h^*, L_i)\sigma} + 20\sigma.
\end{align*}
Combined with Equation~\eqref{eqn:favbias}, we have,
\[ \err(h, L_i^D) - \err(h^*, L_i^D) \leq 6 \sqrt{\err(h^*, L_i)
  \sigma} + 20 \sigma. \]
Since $\err(h^*, L_i) \leq \err(h^*, L_i^D)$,
\[ \err(h, L_i^D) - \err(h^*, L_i^D) \leq 6 \sqrt{\err(h^*, L_i^D)
  \sigma} + 20 \sigma. \]
From the definition of $E_i$,
\[ \err(h^*, L_i^D) \leq \nu + \sqrt{\nu \sigma} + \sigma. \]
\[ \err(h) \leq \err(h, L_i^D) + \sqrt{ \err(h,
  L_i^D)\sigma} + \sigma. \]
Plugging in and simplifying algebratically gives
\[ \err(h) \leq \nu + 8 \sqrt{\nu \sigma} + 35 \sigma.\]
Now, if $i \in \Gamma$, the dataset $L_i$ has favorable bias
from lemma~\ref{lem:favbias}; if $i \notin \Gamma$ and $i+1 \in \Gamma$,
the final value of $L_i$ equals some $L_j$ for some $j \in \Gamma$, therefore
also has favorable bias.

Meanwhile, if $i \in \Gamma$, Algorithm~\ref{alg:aalarch} fails
$\EC(V_i, L_i, \delta_i)$
 for $k = k_i$. If $i \notin \Gamma$ and $i+1 \in \Gamma$, then $i+1$ is
the start of some verified iteration, i.e. $i+1 = i_k^0$ for some $k$.
Hence the final value of $V_i$ also fails $\EC(V_i, L_i, \delta_i)$
for $k = k_i$. In both cases, Equation~\eqref{eqn:lowerror} holds.

Therefore, if $i \in \Gamma$ or $i+1 \in \Gamma$, then
Equation~\eqref{eqn:errh} holds for every $h$ in $V_i$.
\end{proof}

\begin{lemma}
For step $i$, suppose $L_i$ has favorable bias, i.e. Equation~\eqref{eqn:favbias} holds.
Then for any $k$ and any $h\in H_k$,
\[
\err(h^*, L_i) - \err(h, L_i) \leq 2\sqrt{\err(h, L_i) \sigma_{\bar{k}}(l_i,\delta_{i,\bar{k}})} + 3\sigma_{\bar{k}}(l_i,\delta_{i,\bar{k}}),
\]
where $\bar{k} = \max(k^*, k)$.
Specifically:
\begin{enumerate}
  \item for any $h \in H_{k^*}$,
  \begin{equation}
  \err(h^*, L_i) - \err(h, L_i) \leq 2\sqrt{\err(h, L_i) \sigma_{k^*}(l_i,\delta_{i,k^*})} + 3\sigma_{k^*}(l_i,\delta_{i,k^*}),
  \label{eqn:hsinvks}
  \end{equation}

  \item The empirical error of $h^*$ on $L_i$ can be bounded as follows:
  \begin{equation}
     \err(h^*, L_i) \leq \gamma_i + 2\sqrt{\gamma_i \sigma_{k^*}(l_i, \delta_{i,k^*})} + 3\sigma_{k^*}(l_i, \delta_{i,k^*})
     \label{eqn:noline11}
  \end{equation}

\end{enumerate}

\label{lem:hsgood}
\end{lemma}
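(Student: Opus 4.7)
The plan is to use the favorable bias assumption to push the claim onto the iid sample $L_i^D$, apply the concentration guarantees from event $E_i$, and then leverage the fact that the disagreement count $d_{L_i^D}(h,h^*)$ does not depend on labels. First I would rearrange Equation~\eqref{eqn:favbias} to obtain
\[
\err(h^*, L_i) - \err(h, L_i) \ \leq \ \err(h^*, L_i^D) - \err(h, L_i^D).
\]
Since $h, h^* \in H_{\bar{k}}$, the third (two-sided) bound in item~1 of $E_i$ applied to the pair $(h^*, h)$, together with $\err(h^*, D) \leq \err(h, D)$, yields
\[
\err(h^*, L_i^D) - \err(h, L_i^D) \ \leq \ \sqrt{d_{L_i^D}(h^*, h)\, \sigma_{\bar{k}}(l_i, \delta_{i,\bar{k}})} + \sigma_{\bar{k}}(l_i, \delta_{i,\bar{k}}).
\]
The key observation is that $d_{L_i^D}(h^*, h)$ depends only on the unlabeled examples, so $d_{L_i^D}(h^*, h) = d_{L_i}(h^*, h)$, and by the triangle inequality on indicators $\ind{h(x)\neq h^*(x)} \leq \ind{h(x)\neq y} + \ind{h^*(x)\neq y}$ we get $d_{L_i}(h^*, h) \leq \err(h, L_i) + \err(h^*, L_i)$.

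Letting $A := \err(h^*, L_i) - \err(h, L_i)$, $B := \err(h, L_i)$, and $\sigma := \sigma_{\bar{k}}(l_i, \delta_{i,\bar{k}})$, these steps combine into $A \leq \sqrt{(A+2B)\sigma} + \sigma$. If $A \leq 0$ the main inequality is immediate; otherwise squaring $(A-\sigma)^2 \leq (A+2B)\sigma$ produces the quadratic $A^2 - 3A\sigma - 2B\sigma \leq 0$, whose positive root satisfies $A \leq \tfrac{1}{2}(3\sigma + \sqrt{9\sigma^2 + 8B\sigma}) \leq 2\sqrt{B\sigma} + 3\sigma$. This establishes the main bound, and specialization~(1) for $h \in H_{k^*}$ follows at once because $\bar{k} = k^*$ in that case.

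For specialization~(2), I would apply the main bound to a minimizer $(k^\circ, h^\circ)$ in the program defining $\gamma_i$, for which $\err(h^\circ, L_i) \leq \gamma_i$ by construction. If $k^\circ \geq k^*$, then $\bar{k} = k^\circ$ and $\sigma_{k^\circ} \geq \sigma_{k^*}$, so the main bound with $h = h^\circ$ gives $\err(h^*, L_i) \leq \gamma_i \leq \gamma_i + 2\sqrt{\gamma_i \sigma_{k^*}(l_i, \delta_{i,k^*})} + 3\sigma_{k^*}(l_i, \delta_{i,k^*})$. If $k^\circ \leq k^*$, then $\bar{k} = k^*$ and the main bound gives $\err(h^*, L_i) \leq \err(h^\circ, L_i) + 2\sqrt{\err(h^\circ, L_i)\sigma_{k^*}(l_i, \delta_{i,k^*})} + 3\sigma_{k^*}(l_i, \delta_{i,k^*})$; the desired bound then follows from the monotonicity of $x \mapsto x + 2\sqrt{x\sigma_{k^*}(l_i, \delta_{i,k^*})} + 3\sigma_{k^*}(l_i, \delta_{i,k^*})$ applied to $\err(h^\circ, L_i) \leq \gamma_i$.

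The main technical obstacle is executing the quadratic manipulation cleanly so the constants match the stated values (the coefficients $2$ and $3$ rather than something larger); every other step is essentially bookkeeping combining favorable bias, a single VC inequality on the iid sample $L_i^D$, and the label-independence of the disagreement distance.
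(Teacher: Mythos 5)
Your proposal is correct and follows essentially the same route as the paper: rearrange favorable bias onto the iid sample $L_i^D$, invoke the two-sided deviation bound from $E_i$ together with $\err(h^*,D)\leq\err(h,D)$, use label-independence of the disagreement distance plus the triangle inequality to bound $d_{L_i^D}(h^*,h)$ by $\err(h^*,L_i)+\err(h,L_i)$, and solve the resulting quadratic to obtain the $2\sqrt{B\sigma}+3\sigma$ bound (you solve in the difference $A=\err(h^*,L_i)-\err(h,L_i)$ where the paper solves in $\err(h^*,L_i)$, but this is just packaging). Your treatment of part~(2) via an explicit case split on whether the minimizer $k^\circ$ lies above or below $k^*$ is actually a touch more careful than the paper's one-line chain with $\bar k=\max(k^*,\hat k_i)$, but it is the same underlying idea.
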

\begin{proof}
Fix any $k$ and $h\in H_k$.
%By the optimality of $h^*$, $\err(h^*,D) - \err(h,D) \leq 0$.
Since $\bar{k}\geq k$,
$\sigma_k(l_i,\delta_{i,k}) \leq \sigma_{\bar{k}}(l_i,\delta_{i,\bar{k}})$.  Similarly, $\sigma_{k^*}(l_i,\delta_{i,k^*}) \leq \sigma_{\bar{k}}(l_i,\delta_{i,\bar{k}})$.
Using the shorthand $\sigma := \sigma_{\bar{k}}(l_i,\delta_{i,\bar{k}})$ and noting that $h, h^*\in H_{\bar{k}}$,
\begin{eqnarray*}
\err(h^*, L_i) - \err(h, L_i) &\leq& \err(h^*, L_i^D) - \err(h, L_i^D) \\
&\leq& \sqrt{d_{L_i^D}(h^*,h) \cdot \sigma} + \sigma \\
&\leq& \sqrt{(\err(h^*, L_i) + \err(h, L_i)) \cdot \sigma} + \sigma. \\
&\leq& \sqrt{\err(h^*, L_i)\sigma} + \sqrt{\err(h, L_i)\sigma} + \sigma.
\end{eqnarray*}
where the first inequality is from Equation~\eqref{eqn:favbias}, the second inequality is from the definition of $E_i$ and the optimality of $h^*$, and the third inequality is from the triangle inequality.
Letting $A=\err(h^*, L_i)$, $B=\err(h, L_i)$, and $C=B+\sqrt{B\sigma} +\sigma$, we can rewrite the above inequality as $A\leq C+\sqrt{A\sigma}$.  Solving the resulting quadratic equation in terms of $A$, we have $A \leq C + \sigma + \sqrt{C\sigma}$, or
\begin{eqnarray*}
A & \leq & B+\sqrt{B\sigma} + 2\sigma + \sqrt{\sigma(B+\sqrt{B\sigma} +\sigma)} \\
  & \leq & B+\sqrt{B\sigma} + 2\sigma + \sqrt{\sigma}(\sqrt{B}+\sqrt{\sigma}) \\
  & \leq & B+ 2\sqrt{B\sigma} + 3\sigma,
\end{eqnarray*}
or
\[
\err(h^*, L_i) \leq \err(h, L_i) + 2\sqrt{\err(h, L_i)\sigma} + 3\sigma.
\]

Specifically:
\begin{enumerate}
\item Taking $k = k^*$, we get that Equation~\eqref{eqn:hsinvks} holds
for any $h \in H_{k^*}$, establishing item 1.

\item Define
\[ (\hat{k}_i, \hat{h}_i) := \arg\min_{k' \geq k^*, h \in H_{k'}}\cbr{\err(h, L_i) + 2\sqrt{\err(h, L_i)\sigma_{k'}(l_i, \delta_{i,k'})} + 3\sigma_{k'}(l_i, \delta_{i,k'})}. \]
In this notation,
$\gamma_i = \err(\hat{h}_i, L_i) + 2\sqrt{\err(\hat{h}_i, L_i) \sigma_{\hat{k}_i}(l_i,\delta_{i,\hat{k}_i})} + 3\sigma_{\hat{k}_i}(l_i,\delta_{i,\hat{k}_i})$.
We have
\begin{align*}
\gamma_i + 2\sqrt{\gamma_i \sigma_{k^*}(l_i, \delta_{i,k^*})} + 3\sigma_{k^*}(l_i, \delta_{i,k^*}) &\geq \err(\hat{h}_i, L_i) + 2\sqrt{\err(\hat{h}_i, L_i) \sigma_{\bar{k}}(l_i, \delta_{i,\bar{k}})} + 3\sigma_{\bar{k}}(l_i, \delta_{i,\bar{k}}) \\
&\geq \err(h^*, L_i),
\end{align*}
where $\bar{k} = \max(k^*, \hat{k}_i)$ and
the last inequality comes from applying Lemma~\ref{lem:hsgood} for $h' = \hat{h}_i \in H_{\hat{k}_i}$ and $\bar{k}$.
This establishes Equation~\eqref{eqn:noline11}, proving item 2. \qedhere
\end{enumerate}
\end{proof}

%\cz{I am not able to show $\hat{k}_i\leq k^*$. Seems like this is impossible, for example, when $H_k = H_{k+1}$.}
%\cz{This proof is problematic, because it uses Lemma~\ref{lem:hsgood} which assumes $i \in \Gamma$,
%which is not shown here.}
%Next we show a technical claim bounding the value of $i$ in terms of $l_i$.
%\begin{claim}
%\label{claim:ibound}
%\end{claim}
%\begin{proof}
%\end{proof}

\begin{lemma}
At any step of \algaa, $k\leq k^*$.
Consequently, for every $i$, $i \leq l_i + k^* N$.
\label{lem:k}
\end{lemma}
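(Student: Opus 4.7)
I will prove the lemma by induction on the step counter $i$, jointly maintaining the invariant $k \leq k^*$ and a helper invariant that whenever $k$ reaches $k^*$, the optimal hypothesis $h^*$ lies in the current version space $V_i$. The base case is immediate since $k = 0 \leq k^*$ and $V_0 = H_0(\emptyset)$. For the inductive step, observe that $k$ changes only through calls to $\UVS$ in Step~\ref{step:aalarch-err-uvs} (triggered by $\EC$) or Step~\ref{step:aalarch-seed-uvs} (triggered by a non-$\bot$ $\SEARCH$ answer). In both cases, any element added to $S$ is produced by $\SEARCH$, hence is of the form $(x, h^*(x))$, so $S$ is consistent with $h^*$ throughout the execution. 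Consequently $h^* \in H_{k^*}(S)$, i.e.\ $H_{k^*}(S) \neq \emptyset$, which guarantees that whenever $\UVS$ is invoked with current $k < k^*$ the updated $k := \min\{k' > k : H_{k'}(S) \neq \emptyset\}$ is still at most $k^*$.

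The delicate part is ruling out that $\UVS$ is ever invoked when $k$ already equals $k^*$, since otherwise $\UVS$ would push $k$ to $k^* + 1$. I establish this by proving the helper invariant $h^* \in V_i$ whenever $k_i = k^*$. Granted the invariant, I rule out both triggers. Step~\ref{step:aalarch-seed-uvs}: if $h^* \in V_i$, then any counterexample $(x, h^*(x))$ to $V_i$ would have to satisfy $h^*(x) \neq h^*(x)$, a contradiction, so $\SEARCH_{H_{k^*}}(V_i)$ must return $\bot$. Step~\ref{step:aalarch-err-uvs}: Lemma~\ref{lem:hsgood}(2) gives $\err(h^*, L_i) \leq \gamma_i + 2\sqrt{\gamma_i \sigma_{k^*}(l_i, \delta_{i,k^*})} + 3\sigma_{k^*}(l_i, \delta_{i,k^*})$; since $h^* \in V_i$, $\min_{h \in V_i} \err(h, L_i) \leq \err(h^*, L_i)$, matching the definition of $\gamma$ in $\EC$ for $k = k^*$, so $\EC$ returns \textsc{False}.

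To verify the helper invariant $h^* \in V_i$ whenever $k_i = k^*$: when $k$ first reaches $k^*$, $\UVS$ sets $V_i = H_{k^*}(S)$, which contains $h^*$ by the argument above. Subsequent updates to $V_i$ happen exclusively via $\PVS$, which discards a hypothesis $h'$ only if $\err(h', L_i)$ exceeds $\min_{h \in V_i} \err(h, L_i) + 2\sqrt{\err(h, L_i)\sigma_{k^*}(l_i,\delta_{i,k^*})} + 3\sigma_{k^*}(l_i,\delta_{i,k^*})$. The favorable-bias property of $L_i$ (preserved by $\SAL$ and by the resets $L_i \gets \tilde L$, since $\tilde L$ is written only after a successful verification) together with Lemma~\ref{lem:hsgood}(1), Equation~\eqref{eqn:hsinvks}, shows that $h^*$ never satisfies the pruning condition; hence $\PVS$ retains $h^*$. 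The main obstacle is maintaining this favorable-bias property through all the label-inference and reset operations performed inside the $\REPEAT$ loop, but it follows from the inductive hypothesis $h^* \in V_i$ (all inferred labels equal $h^*$ on the agreement region) together with the fact that resets only roll back to $\tilde L$, which was itself verified.

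For the consequence $i \leq l_i + k^* N$: note that $i - l_i$ equals the total number of examples drawn from $D_\calX$ that were subsequently discarded from the working dataset. Discards arise exactly from the resets $L_i \gets \tilde L$ performed in Steps~\ref{step:aalarch-err-uvs} and~\ref{step:aalarch-seed-uvs}, both of which accompany a call to $\UVS$. Between two consecutive resets, the $\REPEAT$ loop adds at most $N$ labeled examples to $L_i$ thanks to the $l_i = N$ termination condition. Since each $\UVS$ invocation strictly increases $k$ and $k \leq k^*$ throughout (by the first part), at most $k^*$ resets ever occur, yielding $i - l_i \leq k^* N$ as claimed.
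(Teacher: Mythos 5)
Your proof takes essentially the same route as the paper: the first claim rests on (a) $\UVS$ never pushing $k$ past $k^*$ when $k<k^*$ because $S$ is $h^*$-consistent so $H_{k^*}(S)\neq\emptyset$, and (b) $\UVS$ never being triggered once $k=k^*$, which you rule out via the invariant $h^*\in V_i$ (so $\SEARCH$ returns $\bot$ and $\EC$ returns false by Lemma~\ref{lem:hsgood}), with favorable bias of $L_i$ carried through by the verified-dataset resets; the second claim follows by bounding unverified examples by at most $N$ per $\UVS$ call and at most $k^*$ calls. The paper packages ingredient (b) and the favorable-bias bookkeeping into the standalone Claim~\ref{claim:kstar} proved by its own induction, whereas you argue it inline; otherwise the arguments coincide.
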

\begin{proof}
We prove the lemma in two steps.
\begin{enumerate}[leftmargin=*]
\item Notice that there are two places where $k$ is incremented in \algaa, line~\ref{step:aalarch-err-uvs}
and line~\ref{step:aalarch-seed-uvs}.
If $k<k^*$, neither line would increment it beyond $k^*$ as $h^*\in H_{k^*}$ and $h^*$ is consistent with $S$.
If $k=k^*$, Claim~\ref{claim:kstar} below shows that $k$ will stay at $k^*$.
This proves the first part of the claim.

\item An iteration becomes unverified only if
$k$ gets incremented, and Algorithm~\ref{alg:aalarch} maintains the invariant that $k_i \leq k^*$.
Thus, the number of unverified iterations is at most $k^*$. In addition, each newly sampled set is
of size at most $N$. So the number of unverified examples is at most $k^*N$.

Hence, $i$---the total number of examples processed up to step $i$---equals the sum of the number of verified examples $l_i$, plus the number of unverified
examples, which is at most $k^*N$.
This proves the second part of the claim.
\qedhere
\end{enumerate}
\end{proof}

We show a technical claim used in the proof of Lemma~\ref{lem:k}
which guarantees that, on event $E$,
when $k$ has reached $k^*$, it will remain $k^*$ from then on. Recall
that $k_i$ is defined as the final value of $k$ at the end of step $i$;
 $i_k^0 = \min\cbr{i: k_i \geq k}$ is the step at the
end of which the working hypothesis space reaches level $\geq k$.

\begin{claim}
If $i_{k^*}^0$ is finite, then the following hold for all $i \geq i_{k^*}^0$:
\begin{enumerate}
\item[\textbf{(C1)}] $L_i$ has favorable bias.
\item[\textbf{(C2)}] Step $i$ terminates with $k_i = k^*$.
\item[\textbf{(C3)}] $h^* \in V_i$.
\end{enumerate}
Above, $L_i$ and $V_i$ denote their final values in \algaa.
\label{claim:kstar}
\end{claim}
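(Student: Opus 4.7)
The plan is to prove all three conjuncts (C1)--(C3) simultaneously by induction on $i$, starting from $i = i_{k^*}^0$. These statements reinforce one another: once $k$ stabilizes at $k^*$ with $h^* \in V_i$, neither of the two places where $k$ might be incremented ($\EC$ or $\SEARCH$) can fire, and $h^*$ survives every subsequent $\PVS$ pruning; conversely, both of those facts rely on $L_i$ having favorable bias, which is in turn maintained because the inferred labels added by $\SAL$ coincide with $h^*$ on the agreement region of $V_i$.

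For the base case $i = i_{k^*}^0$, I would observe that at the end of step $i$ the index $k$ has just been raised to $k^*$ via either Step~\ref{step:aalarch-err-uvs} (triggered by $\EC$) or Step~\ref{step:aalarch-seed-uvs} (triggered by $\SEARCH$). In either branch, $\UVS$ sets $V_i = H_{k^*}(S)$, $\PVS$ prunes using the verified dataset $\tilde L$, and $L_i$ is reset to $\tilde L$; (C2) is immediate, (C1) follows from Lemma~\ref{lem:favbias} since $\tilde L$ was constructed in a verified iteration, and (C3) follows from Lemma~\ref{lem:hsgood}(1) applied with $h_0 = \argmin_{h' \in H_{k^*}(S)} \err(h', \tilde L) \in H_{k^*}$: the resulting bound places $\err(h^*, \tilde L)$ within the $\PVS$ threshold centered at $\err(h_0, \tilde L)$, so $h^*$ is retained.

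For the inductive step, I would assume (C1)--(C3) at step $i$ and trace each operation carrying the algorithm from step $i$ to step $i+1$. The check $\EC(V_i, L_i, \delta_i)$ evaluates to false by Lemma~\ref{lem:hsgood}(2): the bound $\err(h^*, L_i) \leq \gamma_i + 2\sqrt{\gamma_i \sigma_{k^*}(l_i,\delta_{i,k^*})} + 3\sigma_{k^*}(l_i,\delta_{i,k^*})$, combined with $\min_{h \in V_i} \err(h, L_i) \leq \err(h^*, L_i)$ via $h^* \in V_i$, places the empirical minimum below the $\EC$ threshold, so $k$ is not incremented. Next, $\SAL$ produces $L_{i+1}$ by adding either a true $\LABEL$ query drawn from $D$, or an inferred label $V_i(x)$ for $x \notin \DIS(V_i)$; in the latter case $V_i(x) = h^*(x)$ because $h^* \in V_i$, so favorable bias is preserved, establishing (C1) for step $i+1$. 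Then $\PVS$ retains $h^*$ by Lemma~\ref{lem:hsgood}(1) applied to the empirical minimizer in $V_i \subseteq H_{k^*}$, yielding (C3). Finally, if the REPEAT loop exits and $\SEARCH_{H_{k^*}}(V_{i+1})$ is invoked, it must return $\bot$: any counterexample $(x, h^*(x))$ would require every $h \in V_{i+1}$ to disagree with $h^*$ at $x$, contradicting $h^* \in V_{i+1}$; hence $k$ remains at $k^*$ and (C2) holds.

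The main obstacle is the cross-coupling between invariants in the $\SAL$ update: preserving (C1) for $L_{i+1}$ when the new label is inferred depends on (C3) at step $i$ (to equate $V_i(x)$ with $h^*(x)$ on the agreement region), while preserving (C3) for $V_{i+1}$ via Lemma~\ref{lem:hsgood}(1) needs (C1) for $L_{i+1}$, so the three invariants must be propagated in the right order within a single step. A secondary subtlety is the $\EC$-branch update inside the REPEAT loop that resets $L_i \leftarrow \tilde L$ and re-prunes $V_i$ without incrementing $i$: for this path one re-verifies (C3) via the same Lemma~\ref{lem:hsgood}(1) argument used in the base case, and notes that $h^* \in H_{k^*}(S)$ (since $S$ contains only $\SEARCH$-returned examples labeled by $h^*$) bounds how far $\UVS$ can raise $k$, so (C2) is not violated.
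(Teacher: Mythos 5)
Your overall structure mirrors the paper's proof: induction on $i$ carrying the three coupled invariants, with Lemma~\ref{lem:favbias} for favorable bias, Lemma~\ref{lem:hsgood}(1) for survival of $h^*$ under $\PVS$, and Lemma~\ref{lem:hsgood}(2) for the $\EC$ check. The inductive case is essentially right, including the observation that $\SEARCH_{H_{k^*}}(V) = \bot$ once $h^* \in V$.

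There is, however, a genuine gap in your treatment of (C2). You declare ``(C2) is immediate'' in the base case, and in your closing paragraph you argue that ``$h^* \in H_{k^*}(S)$ bounds how far $\UVS$ can raise $k$, so (C2) is not violated.'' Both claims are wrong as stated. Note that $\UVS$ always sets $k \gets \min\cbr{k' > k : H_{k'}(S) \neq \emptyset}$, so if $k$ is already at $k^*$ and $\UVS$ is invoked, it is forced to jump to some $k' > k^*$ regardless of whether $H_{k^*}(S)$ is nonempty; the fact $h^* \in H_{k^*}(S)$ only prevents overshooting when the call starts from $k < k^*$ (which is what Lemma~\ref{lem:k} uses, but that lemma itself cites this claim, so appealing to it here would be circular). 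What you actually need is that once $k$ reaches $k^*$, $\UVS$ is \emph{never called again within the step}: the $\EC$ check must return false, and $\SEARCH$ must return $\bot$. The paper handles the former by applying Lemma~\ref{lem:hsgood}(2) (using the favorable-bias property of $L_{i_{k^*}^0} = \tilde{L}$) to show $\EC(V_{i_{k^*}^0}, L_{i_{k^*}^0}, \delta_{i_{k^*}^0})$ returns false, which is what pins $k_{i_{k^*}^0}$ at exactly $k^*$. You have exactly this ingredient in your inductive step, so the fix is simply to deploy it in the base case too and drop the spurious ``bounds how far $\UVS$ can raise $k$'' reasoning.
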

\begin{proof}
By induction on $i$.
\paragraph{Base Case.} Let $i = i_{k^*}^0$.
Consider the execution of \algaa at the start
of step $i_{k^*}^0$ (line~\ref{step:aalarch-inc-i}).
Since by definition of $i_k$, the final value of $k$ at step
$i_{k^*}^0 - 1$ is $< k^*$, at step $i_{k^*}^0$, line~\ref{step:aalarch-err}
or line~\ref{step:aalarch-seed} is triggered.
Hence the dataset
$L_{i_{k^*}^0}$ equals some verified labeled dataset $L$ stored by \algaa,
i.e. $L_j$ for some $j \in \Gamma$. Thus, applying Lemma~\ref{lem:favbias}, Claim C1 holds.

We focus on the moment in step $i = i_{k^*}^0$ when $k$ increases to $k^*$ in
$\UVS$(line~\ref{step:aalarch-err-uvs} or~\ref{step:aalarch-seed-uvs}).
Now consider the temporary $V_{i_{k^*}^0}$ computed in the next line (\PVS).
Item 2 of Lemma~\ref{lem:hsgood} implies that
the version space $V_{i_{k^*}^0}$ is such that $\EC(V_{i_{k^*}^0}, L_{i_{k^*}^0}, \delta_{i_{k^*}^0})$ returns false.
Therefore the final value of $k$ in step $i_{k^*}^0$ is exactly $k^*$. Claim C2 follows.

Claim C2 implies the temporary $V_{i_{k^*}^0}$ is final. Item 1 of Lemma~\ref{lem:hsgood}
implies that $h^* \in V_{i_{k^*}^0}$, establishing Claim C3.

\paragraph{Inductive Case.} Now consider $i \geq i_{k^*}^0 + 1$. The inductive
hypothesis says that Claims C1--3 hold for step $i-1$.

Claim C1 follows from Claim C3 in step $i-1$. Indeed,
the newly added $x_i$ either comes
from the agreement region of $V_{i-1}$, in which case label $y_i$ agrees with
$h^*(x_i)$, or is from the disagreement region of $V_{i-1}$, in which case the
inferred label $y_i$
is queried from $\LABEL$. Following the same reasoning as the proof
of Lemma~\ref{lem:favbias}, Claim C1 is true.

Claims C2 and C3 follows the same reasoning as the proof for the base case.
\end{proof}

\begin{lemma}
If $i$ is in $\Gamma$, then $L_i$ has favorable bias. That is, for any hypothesis $h$,
\begin{equation*}
\err(h, L_i^D) - \err(h^*, L_i^D) \leq \err(h, L_i) - \err(h^*, L_i).
\end{equation*}
\label{lem:favbias}
\end{lemma}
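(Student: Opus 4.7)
My plan is to reduce favorable bias to a per-example comparison and then show that whenever $L_i$ and $L_i^D$ differ on an example, the inferred label used in $L_i$ in fact equals $h^*(x)$. Note that $L_i$ and $L_i^D$ share the same unlabeled examples and identical labels on examples whose labels were obtained by a \LABEL query in \SAL (case $x \in \DIS(V)$). They differ only on the set $I$ of indices $j$ where the label in $L_i$ was \emph{inferred} as $V_{j-1}(x_j)$ because $x_j \notin \DIS(V_{j-1})$. So if every such inferred label agrees with $h^*(x_j)$, then $L_i$ differs from $L_i^D$ only in that the possibly noisy label $\LABEL(x_j)$ is replaced by the clean label $h^*(x_j)$ on $I$.

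\textbf{Step 1: inferred labels are correct on $i \in \Gamma$.} I will prove by induction on verified steps (elements of $\Gamma$) that for every $i \in \Gamma$ and every $j \leq i$ with $j \in I$, the label $y_j$ recorded in $L_i$ equals $h^*(x_j)$. Partition $L_i$ into the carry-over $\tilde L_{\mathrm{prev}}$ (the verified dataset in place at the time of the last reset $i_r$) and the newly appended examples from steps $i_r+1,\dotsc,i$. By the inductive hypothesis applied to the previous element of $\Gamma$ that produced $\tilde L_{\mathrm{prev}}$, every inferred label in $\tilde L_{\mathrm{prev}}$ already matches $h^*$. For the new examples, the crucial observation is that between two consecutive resets the only operation touching $V$ is \PVS, which can only shrink the version space; hence $V_i \subseteq V_{i-1} \subseteq \dotsb \subseteq V_{i_r}$. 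If $j \in I$ with $i_r < j \leq i$, then $x_j \notin \DIS(V_{j-1})$, all hypotheses in $V_{j-1}$ agree on $x_j$ with the inferred value $V_{j-1}(x_j)$, and since $V_i \subseteq V_{j-1}$ this is also the common value $V_i(x_j)$. Finally, because $i \in \Gamma$, $\SEARCH_{H_k}(V_i)$ returned $\bot$, which certifies that on the agreement region of $V_i$ the common label equals $h^*$. Thus $y_j = V_{j-1}(x_j) = V_i(x_j) = h^*(x_j)$, completing the induction.

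\textbf{Step 2: per-term conclusion.} With Step 1 in hand, the favorable bias inequality rearranges into $\sum_{j \in I} \bigl[(\ind{h(x_j) \neq \LABEL(x_j)} - \ind{h(x_j) \neq h^*(x_j)}) - \ind{h^*(x_j) \neq \LABEL(x_j)}\bigr] \leq 0$, which I will verify termwise. When $\LABEL(x_j) = h^*(x_j)$ the bracket collapses to $0$. When $\LABEL(x_j) \neq h^*(x_j)$, exactly one of $\ind{h(x_j) \neq \LABEL(x_j)}, \ind{h(x_j) \neq h^*(x_j)}$ equals $1$, so the first parenthesized quantity is in $\{-1,+1\}$ while $\ind{h^*(x_j) \neq \LABEL(x_j)} = 1$, giving a bracket of $0$ or $-2$. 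Both cases are non-positive, so the sum is non-positive and the lemma follows.

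\textbf{Expected obstacle.} The only delicate point is Step 1: one must trace through the four places where $V_i$ and $L_i$ are modified (the \EC reset, the \SEARCH-counterexample reset, \SAL, and \PVS) to confirm that within an episode the version space is monotone non-increasing and that $\tilde L$ is only overwritten at verified steps, so the induction across episodes is well-founded. Everything else is a direct case analysis and a rearrangement of empirical error expressions.
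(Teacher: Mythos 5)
Your proof is correct and takes essentially the same approach as the paper's: both reduce the favorable-bias inequality to a per-example comparison on the subset of points where $L_i$ and $L_i^D$ can differ, relying on the key fact that the inferred labels in $L_i$ equal $h^*(x)$ when $i \in \Gamma$. The paper's stated proof is terser---it builds this fact into the characterization $Q_i^D = \{(x,y)\in L_i^D : h^*(x)\neq y\}$ of the disagreement set without explicitly proving it---whereas your Step~1 establishes it carefully via induction over verified steps and monotonicity of the version space between resets, which is precisely the justification the paper leaves implicit.
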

\hide{
\begin{proof}
First some notations. Let $i \in \Gamma$. Denote by:
\begin{enumerate}
\item $L_i^I = \cbr{(x_j, V_{j-1}(x_j)): Z_j = 0, j \in \Gamma, j \leq i}$
the subset of $L_i$ in which the
labels of the examples are obtained by inferring in \algaa;
\item $L_i^{I,D} = \cbr{(x_j, \LABEL(x_j)): Z_j = 0, j \in \Gamma, j \leq i}$
 the subset of $L_i^D$ in which the
labels of the examples are obtained by inferring in \algaa.
\item $L_i^Q = \cbr{(x_j, \LABEL(x_j)): Z_j = 1, j \in \Gamma, j \leq i}$
the subset of $L_i$ in which the
labels of the examples are obtained by querying the $\LABEL$ oracle in \algaa.

\end{enumerate}
In this notation, $L_i = L_i^I \cup L_i^Q$, $L_i^D =  L_i^{I,D} \cup L_i^Q$.
Therefore, it suffices to show that
\begin{equation}
  \err(h, L_i^{I,D}) - \err(h^*, L_i^{I,D}) \leq \err(h, L_i^I) - \err(h^*,L_i^I).
  \label{eqn:subsetfavbias}
\end{equation}
Observe that $h^*$ is consistent with $L_i^I$. Indeed, for any $j \in \Gamma$,
there is a $j \geq j'$ such that $V_{j'-1} \subseteq V_{j-1}$ gets verified at the end of the loop iteration,
that is, $\SEARCH(V_{j'-1}) = \bot$. This implies that, for all examples $x$ not in
$\DIS(V_{j'-1})$, $V_{j'-1}(x) = h^*(x)$, therefore, for all examples $x$ not in
$\DIS(V_{j-1})$, $V_{j-1}(x) = h^*(x)$. Thus, the right hand side of~\eqref{eqn:subsetfavbias}
is exactly $\Pr_{(x,y) \sim L_i^I}[h(x) \neq h^*(x)]$. Inequality~\eqref{eqn:subsetfavbias}
is therefore a direct consequence of triangle inequality.

%We can split $L_i^D$ into two subsets, the subset where $L_i^D$ agrees with $L_i$ and the subset $Q^0_i=\{ (x,y)\in L_i^D: h^*(x)\neq y\}$ where $L_i^D$ disagrees with $L_i$.  On the former subset, $L_i^D$ is identical to $L_i$, thus we just need to show that
%\[
%\err(h, Q_i^D) - \err(h^*, Q_i^D) \leq \err(h, Q_i) - \err(h^*,Q_i),
%\]
%where $Q_i = \{(x,y) : (x,-y) \in Q_i^D \}$.
%Since $ \err(h^*, Q_i^D)= 1$ and $\err(h^*,Q_i)=0$, this reduces to showing
%that $\err(h, Q_i^D) \leq 1 + \err(h, Q_i)$, which is easily seen to hold for any $h$ as $\err(h, Q_i^D)\leq 1$ and $\err(h, Q_i)\geq 0$.
The second claim follows immediately, since the dataset $L$ stored in line~\ref{step:aa-storeverified}
 is $L^i$ for some $i \in \Gamma$.
\end{proof}
%\cz{
%Let $Q_i = \cbr{j: j \leq i, Z_j = 1}$, $R_i = \cbr{j: j \leq i, Z_j = 0}$. For $j$ in $Q_i$, $L_i$ and $L_i^D$ contains the same example $(x_j, \LABEL(x_j))$. For $j$ in $R_i$, $L_i$ has example $(x_j, h^*(x_j))$ while $L_i^D$ has example $(x_j, \LABEL(x_j))$.

%Therefore,
%\begin{eqnarray*}
%&& \err(h, L_i^D) - \err(h^*, L_i^D) \\
%&=& \frac{1}{l_i}(\sum_{j \in Q_i} I(h(x_j) \neq y_j) - I(h^*(x_j) \neq y_j) + \sum_{j \in R_i} I(h(x_j) \neq h^*(x_j)))\\
%&\leq& \frac{1}{l_i}(\sum_{j \in Q_i} I(h(x_j) \neq y_j) - I(h^*(x_j) \neq y_j) + \sum_{j \in R_i} I(h(x_j) \neq y_j) - I(h^*(x_j) \neq y_j)))\\
%&=& \err(h, L_i^D) - \err(h^*, L_i^D)  \qedhere
%\end{eqnarray*}
%}
}
\begin{proof}
We can split $L_i^D$ into two subsets, the subset where $L_i^D$ agrees with $L_i$ and
the subset $Q^D_i=\{ (x,y)\in L_i^D: h^*(x)\neq y\}$ where $L_i^D$ disagrees with $L_i$.
On the former subset, $L_i^D$ is identical to $L_i$, thus we just need to show that
\[
\err(h, Q_i^D) - \err(h^*, Q_i^D) \leq \err(h, Q_i) - \err(h^*,Q_i),
\]
where $Q_i = \{(x,y) : (x,-y) \in Q_i^D \}$.
Since $ \err(h^*, Q_i^D)= 1$ and $\err(h^*,Q_i)=0$, this reduces to showing
that $\err(h, Q_i^D) \leq 1 + \err(h, Q_i)$, which is easily seen to hold for any $h$ as $\err(h, Q_i^D)\leq 1$ and $\err(h, Q_i)\geq 0$.
%The second claim follows immediately, since the dataset $L$ stored in line~\ref{step:aa-storeverified} is $L^i$ for some $i \in \Gamma$.
\end{proof}

\end{document}